\documentclass{article}

\usepackage[preprint]{neurips_2026}
\usepackage{microtype}
\usepackage{graphicx}
\usepackage{subfigure}
\usepackage{tcolorbox}
\usepackage{booktabs} 
\usepackage{hyperref}
\usepackage[utf8]{inputenc} 
\usepackage[T1]{fontenc}    
\usepackage{url}            
\usepackage{booktabs}       
\usepackage{amsfonts}       
\usepackage{nicefrac}       
\usepackage{microtype}      
\usepackage{lipsum}
\graphicspath{{figs/}}     
\usepackage{algorithmic}
\usepackage{algorithm}%
\usepackage[title]{appendix}%
\usepackage{comment}
\usepackage{wrapfig}
\usepackage{xcolor}
\usepackage{amssymb}
\usepackage{amsmath}
\allowdisplaybreaks
\usepackage{wrapfig}
\usepackage{mathrsfs}
\usepackage{amsthm}
\usepackage{ifthen}
\newboolean{showcomments}
\setboolean{showcomments}{true}
\usepackage{color}
\usepackage{todonotes}

\definecolor{bleudefrance}{rgb}{0.19, 0.55, 0.91}
\definecolor{ao(english)}{rgb}{0.0, 0.5, 0.0}

\newcommand{\addcite}[0]{\ifthenelse{\boolean{showcomments}}
{\textcolor{purple}{(add cite(s)) }}{}}%

\newcommand{\addref}[0]{\ifthenelse{\boolean{showcomments}}
{\textcolor{purple}{(add ref) }}{}}%

\newcommand{\myparagraph}[1]{\vspace{.5 mm}\noindent\textbf{#1}}

\newcommand{\enrique}[1]{  \ifthenelse{\boolean{showcomments}}
{\todo[inline,color=bleudefrance]{Enrique: #1}}{}}
\newcommand{\rene}[1]{  \ifthenelse{\boolean{showcomments}}
{\todo[inline,color=cyan]{Ren\'e: #1}}{}}
\newcommand{\emmargin}[1]{\ifthenelse{\boolean{showcomments}}{\marginpar{\color{bleudefrance}\tiny EM: #1}}{}}
\newcommand{\hancheng}[1]{  \ifthenelse{\boolean{showcomments}}
{\todo[inline,color=orange]{Hancheng: #1}}{}}
\newcommand{\ziqing}[1]{  \ifthenelse{\boolean{showcomments}}
{\todo[inline,color=red]{Ziqing: #1}}{}}
\newcommand{\salma}[1]{  \ifthenelse{\boolean{showcomments}}
{\todo[inline,color=yellow]{Salma: #1}}{}}
\newcommand{\zxmargin}[1]{\ifthenelse{\boolean{showcomments}}{\marginpar{\color{purple}\tiny ZX: #1}}{}}
\newcommand{\stmargin}[1]{\ifthenelse{\boolean{showcomments}}{\marginpar{\color{red}\tiny ST: #1}}{}}

\newcommand{\hl}[1]{\ifthenelse{\boolean{showcomments}}
{\textcolor{red}{#1}}{#1}}

\newboolean{showedits}
\setboolean{showedits}{false}
\usepackage[markup=underlined, commandnameprefix=always]{changes}
\definechangesauthor[color=bleudefrance]{EM}
\newcommand{\aem}[1]{
\ifthenelse{\boolean{showedits}}
{\added[id=EM]{#1}}
{\!#1\hspace{-4.75pt}}
}
\newcommand{\repem}[2]{
\ifthenelse{\boolean{showedits}}
{\replaced[id=EM]{#1}{#2}}
{\!#1\hspace{-4.75pt}}
}
\newcommand{\dem}[1]{
\ifthenelse{\boolean{showedits}}
{\deleted[id=EM]{#1}}
{}
}

\usepackage{enumitem}
\setlist{leftmargin=*}

\newif \iffinal
\finalfalse
\iffinal

\newcommand{\WJmodified}[1]{{#1}}
\newcommand{\WJcomments}[1]{{}}
\newcommand{\oldText}[1]{}
\newcommand{\toAppendix}[1]{}
\else
\newcommand{\WJmodified}[1]{{\color{blue} #1}}
\newcommand{\WJcomments}[1]{{\WJmodified{Wenjie commented: #1}}}
\newcommand{\toAppendix}[1]{#1}
\newcommand{\oldText}[1]{#1}
\newcommand{\bx}{\mathbf{x}}
\newcommand{\bz}{\mathbf{z}}

\usepackage{mathtools}

\usepackage[capitalize,noabbrev]{cleveref}

\floatstyle{ruled}
\newfloat{algorithm}{tbp}{loa}

\floatname{algorithm}{Algorithm}
\theoremstyle{plain}
\newtheorem{thm}{Theorem}
\theoremstyle{plain}
\newtheorem{assumption}{Assumption}
\theoremstyle{plain}
\newtheorem{lem}{Lemma}
\theoremstyle{plain}
\newtheorem{rem}{Remark}
\theoremstyle{plain}
\newtheorem{dfn}{Definition}
\makeatother

\title{The Power of Memory: Fast Convex Optimization via Non-parametric Warm-up}

\title{Memory--Computation Tradeoffs in Semi Amortized Parametric Optimization}

\author{
Shijie Pan\textsuperscript{1} \qquad
Agustin Castellano\textsuperscript{1} \qquad
Zeyu Shen\textsuperscript{2} \qquad
Enrique Mallada\textsuperscript{1}\\[3pt]
\textsuperscript{1}Department of Electrical and Computer Engineering,
Johns Hopkins University\\
\textsuperscript{2}Department of Applied Mathematics and Statistics,
Johns Hopkins University\\
Baltimore, MD 21218, USA\\[3pt]
\texttt{\{span34, acaste11, mallada\}@jhu.edu},
\texttt{zshen39@jhu.edu}
}

\begin{document}

\maketitle

\begin{abstract}
Learning-enabled decision systems often use offline data or computation to reduce online compute cost. Despite the empirical success of such approaches, there is limited general understanding of how much offline information is needed to achieve a desired accuracy under a fixed online computation budget. We study this question through the lens of amortized parametric optimization: an offline phase stores a finite memory of solved problem instances, and an online phase produces a solution to a new instance by retrieving a warm start and applying $K$ steps of projected gradient descent. We analyze this setup for smooth convex parametric optimization over a compact domain, using a nonparametric predictor built from the stored offline solutions. For $\mu$-strongly convex objectives, we establish matching upper and lower bounds on the memory required to guarantee $\varepsilon$-accuracy under a fixed online iteration budget $K$. For convex objectives satisfying a $\beta$-growth condition ($\beta>2$), we obtain near-matching bounds and identify a phase transition in $K$ beyond which additional memory provides no benefit. We further provide a general proof framework that (i) explicitly quantifies the \emph{memory cost of acceleration}---how much offline memory is required to achieve a prescribed speedup over the unaided online optimizer---and (ii) identifies two key quantities driving this cost: the convergence rate of the online optimizer and the Lipschitz sensitivity of the solution map to the problem parameter. Experiments on parameterized ridge regression confirm the predicted memory--computation--accuracy tradeoffs.

\end{abstract}




\section{Introduction}\label{section1}

Modern machine learning systems increasingly require solving optimization problems under strict latency constraints. 
Across many applications today, decisions must be produced in real time, often allowing only a small and fixed computational budget for solution refinement. 
Examples include amortized variational inference in latent-variable models \citep{kingma2014auto,rezende2014stochastic}, 
learned sparse coding and algorithm unrolling \citep{gregor2010learning}, 
policy and value function approximation in reinforcement learning \citep{sutton2018reinforcement}, 
and model predictive control and real-time trajectory optimization \citep{mayne2014model,jerez2014embedded}. 
A common pattern in these systems is to shift most of the computational burden to an offline phase, 
where a predictor $\pi_\mathcal{D}$ is trained or constructed, and then used at test time to produce a fast approximate solution, 
often followed by a small number of refinement steps.

This paradigm is broadly referred to as \emph{amortized optimization} \citep{amos2022tutorial}. 
In this setting, past computation over a family of related problems is leveraged to accelerate the solution of future instances. 
Depending on the setting, the predictor may directly approximate the solution map $\bx^\star(\theta)$ (fully amortized), 
or provide an initialization that is refined by a small number $K$ of optimization steps (semi-amortized) \citep{kim2018semi,andrychowicz2016learning}. 
Despite strong empirical success, the theoretical understanding of amortized optimization remains limited. 
As emphasized in recent surveys \citep{amos2022tutorial,chen2022learningtooptimize}, existing analyses primarily focus on generalization or stability of the learned predictor, 
and do not directly address the role of computational constraints at inference time. 
In particular, there is currently no general theory that quantifies how much offline information is required to guarantee a prescribed level of accuracy under a fixed inference time computation budget.

This paper takes a first step towards closing this gap by studying the data--computation trade-offs of amortized parametric optimization, focusing on the following question:
\begin{center}
\emph{Given a fixed online computation budget of $K$ iterations, how much offline information is required to guarantee $\varepsilon$-accurate solutions for all problem instances?}
\end{center}
{This question is, in essence, the basic framing of semi-amortized optimization~\citep{amos2022tutorial}, and reduces to fully amortized optimization when $K=0$.}

To answer this question concretely, we adopt a fixed instantiation: a parametric family of convex problems, projected gradient descent (PGD) as the online refinement, and a nonparametric predictor that picks the best stored solution as the warm start.
Based on this semi-amortized optimization paradigm, we develop a theoretical framework that characterizes the \emph{memory--computation--accuracy tradeoff} in semi amortized parameteric convex optimization, identifying problem structure as the key determinant of how effectively memory substitutes for online computation.
\begin{table*}[htb]
\centering
\caption{Memory complexity $M(K,\varepsilon)$ tradeoff under different function classes.}
\label{tab:complexity_summary}
\begin{tabular}{lcc}
\toprule
Function class & $\mathcal{O}(\,\cdot\,)$ upper bound & $\Omega(\,\cdot\,)$ lower bound \\
\midrule
$\mu$-strongly convex smooth
 & $\left(\dfrac{\rho^K}{\varepsilon}\right)^{\!d_\Theta}$
 & $\left(\dfrac{\rho^K}{\sqrt{\varepsilon}}\right)^{\!d_\Theta}$ \\[6pt]
Convex smooth + $\beta$-growth ($\beta>2$)
 & $\left[\varepsilon^{-\frac{\beta-2}{\beta}}-\Omega(K)\right]_+^{\frac{\beta\, d_\Theta}{\beta-2}}$
 & $\left[\varepsilon^{-\frac{\beta-2}{\beta}}-\mathcal{O}(K)\right]_+^{\frac{d_\Theta}{\beta-2}}$ \\
\bottomrule
\end{tabular}
\end{table*}

\myparagraph{Paper contributions.} Concretely, our work provides the following contributions:
\begin{itemize}
    \item \textit{Theoretical framework.} We formalize the tradeoff through two dual resource quantities: $M(K,\varepsilon)$, the minimum memory required to guarantee worst-case accuracy $\varepsilon$ over the parameter space under $K$ PGD steps; and $M_\alpha(K)$, the minimum memory required to achieve a prescribed acceleration $\alpha$ over the unaided online optimizer. Together, $M(K,\varepsilon)$ and $M_\alpha(K)$ formalize the \emph{memory cost of acceleration}---a quantitative assessment of the effectiveness of semi amortized optimization.
    
\item \textit{Strongly convex regime (Theorem~\ref{thm1}; Corollary ~\ref{coro1})}
    For $\mu$-strongly convex smooth objectives, we obtain matching upper and lower bounds on $M(K,\varepsilon)$ that differ only by a factor of two in the exponent of the accuracy term $\varepsilon$ (first row in Table \ref{tab:complexity_summary}). The corresponding bound on $M_\alpha(K)$ reveals a striking finding: even modest speedups over the already-linearly-convergent baseline require memory that grows \emph{exponentially} in $K$. Thus, linearly-convergent methods are intrinsically hard to accelerate further with memory.
    
    \item \textit{$\beta$-growth regime (Theorem~\ref{thm2}; Corollary \ref{coro3})}
    For convex objectives satisfying a $\beta$-growth condition with $\beta>2$, we establish near-matching upper and lower bounds whose constants agree up to factors independent of $K$ and $\varepsilon$ (second row in Table \ref{tab:complexity_summary}). The bounds expose an intrinsic \emph{phase transition} in $K$: beyond a problem-dependent threshold, additional memory provides no further benefit. Since this transition appears in both the upper and the lower bound, it is an intrinsic property of the problem itself rather than of the analysis. As
    $\beta\to 2^+$, both bounds recover Theorem~\ref{thm1}
    (Proposition~\ref{prop:beta_to_2}).
    
    \item \textit{Meta proof framework.} The two regime results share a common skeleton, which we distill into a single \emph{meta-theorem}: a reduction of the analysis to two structural inputs---the convergence rate of the online optimizer and the Lipschitz sensitivity of the solution map to the parameter. This isolates what governs amortization efficiency and provides a uniform recipe for extending the framework to other optimization classes, including accelerated methods, projection-free schemes, and problems with implicit constraints.
    
    \item \textit{Insights and empirical validation.} On parametrized ridge regression---a setting where both structural drivers admit closed-form expressions---experiments confirm the theory in three concrete ways: (i) the empirical scaling of $M(K,\varepsilon)$ matches the strongly-convex bound from Theorem~\ref{thm1}; (ii) the $\beta$-growth bound traces the observed memory--accuracy curve, including the predicted phase-transition threshold; and (iii) the dependence on the convergence rate and parametric sensitivity matches the closed-form predictions, validating the meta-theorem's two-input
    characterization.
\end{itemize}

This paper is organized as follows. Section~\ref{relwork} reviews related literature. Section \ref{problemsetup} formalizes the parametric problem and the offline--online hybrid regime. Section~\ref{abserrresult} establishes our main bounds on $M(K,\varepsilon)$ for $\mu$-strongly convex (Theorem~\ref{thm1}) and $\beta$-growth (Theorem~\ref{thm2}) objectives. Section \ref{accel} reinterprets these bounds through the memory cost of acceleration; Section \ref{meta proof} presents the meta proof framework that unifies both. Section~\ref{experiment} validates the predictions on parametrized ridge regression, and Section \ref{conclusions} concludes.

\section{Related Literature}\label{relwork}
Our work sits at the intersection of three lines of research. We organize the discussion accordingly and clarify how each relates to the memory--accuracy--computation tradeoff studied here.

\myparagraph{Amortized optimization and learning to optimize.} Amortized optimization \citep{amos2022tutorial} and learning to optimize \citep{chen2022learningtooptimize} use machine learning to predict the solutions of repeatedly-encountered parametric optimization problems, exploiting shared structure across instances. The model can be \emph{fully amortized}, directly mapping a problem to its solution---as in variational autoencoders \citep{kingma2014auto,rezende2014stochastic}, sparse coding via algorithm unrolling \citep{gregor2010learning}, differentiable optimization layers \citep{amos2017optnet,agrawal2019differentiable}, and policy networks for control and reinforcement learning \citep{sutton2018reinforcement}---or \emph{semi-amortized}, providing an initialization that is refined by a small number of optimization steps \citep{kim2018semi,andrychowicz2016learning,venkataraman2021neural}. This literature is dominated by methodology and empirical results; theoretical analyses primarily address generalization or stability of the learned model, and the open problem of complexity-style guarantees for the underlying resource tradeoff is acknowledged in recent surveys \citep{amos2022tutorial,chen2022learningtooptimize}. Our work contributes to this gap by characterizing the memory--computation--accuracy tradeoff intrinsic to the problem class, independently of which predictor instantiates it.

\myparagraph{Predictor--refinement methods: warm-starting and meta-learning.} The methodologically closest neighbours adopt the same predictor-then-refine template that we study. For convex programming, learned warm starts have been investigated for QPs and MIQPs through dual-variable initialization \citep{sugishita2024use}, binary-variable fixing \citep{xavier2021learning,bertsimas2022online}, branch-and-bound seeding \citep{marcucci2020warm}, and active-set prediction with neural and graph networks \citep{klauvco2019machine,schmidtobreick2025warm,arnstrom2021semi}. In model predictive control, warm-starts predict primal solutions used directly without further refinement \citep{zhang2019safe,hertneck2018learning,tokmak2025automatic}. Most relevant to our setting, \citep{sambharya2023end,sambharya2024learning} cast the optimization process as a fixed-point iteration and learn warm-start initializations end-to-end, providing generalization bounds on a fixed number of online iterations. A complementary thread in gradient-based meta-learning---initiated by \citep{finn2017model} and extended through clustering \citep{vuorio2019multimodal,peng2023clustered}, kernel \citep{wang2020structured}, hypernetwork \citep{zhao2020meta}, Bayesian \citep{finn2018probabilistic}, and direct-prediction \citep{requeima2019fast} variants---learns shared task representations enabling rapid fine-tuning. Our contribution is methodologically and analytically orthogonal: rather than studying generalization of a learned predictor across training tasks, we ask the inverse, resource-allocation question---how many stored solutions are required at inference time to guarantee $\varepsilon$-accuracy under a $K$-iteration budget---and answer it with matching upper and lower bounds on $M(K,\varepsilon)$ that hold for any predictor of the form considered.

\myparagraph{Theoretical perspectives on offline data and online computation.} A separate theoretical tradition asks how offline information can accelerate online computation in a complexity-theoretic sense. Algorithms with predictions \citep{mitzenmacher2020algorithms,lykouris2018competitive,purohit2018improving} augment classical online algorithms with a learned signal whose error gracefully degrades the competitive ratio; subsequent work studies how to learn these predictions themselves \citep{khodak2022learningpredictions}. Data-driven algorithm design \citep{gupta2020data,balcan2020data} characterizes the sample complexity of selecting algorithm parameters from a distribution of problem instances. Our work shares this complexity-theoretic spirit but transports it to a different setting: continuous parametric convex optimization, where the offline resource is a memory of stored solutions rather than predictions of future requests, and the online resource is a bounded number of optimization iterations rather than an arrival sequence. To our knowledge, no prior work in this tradition provides matching memory bounds for parametric convex optimization under a fixed online iteration budget.

Across all three lines, the basic question---\emph{given an online budget of $K$ iterations, how much offline information is required to guarantee $\varepsilon$-accuracy over the parameter space?}---has not been answered with matching upper and lower bounds for parametric convex optimization. Characterizing this tradeoff, and identifying the two structural inputs (convergence rate and parametric sensitivity) that determine it, is the contribution of the present paper.

\section{Problem Setup}
\label{problemsetup}

\subsection{Parameterized Convex Optimization Problem}

We consider a parameterized convex optimization problem of the form
\begin{align}
f^\star(\theta) := \min_{\bx \in \mathcal{X}} \, f(\bx; \theta),
\label{cpp}
\end{align}
where $\theta \in \Theta$ denotes a parameter that specifies a member of a family of convex optimization problems, $\bx \in \mathcal{X}$ is the decision variable, and $f^\star(\theta)$ denotes the optimal value associated with parameter $\theta$. We denote by
\[
\bx^\star(\theta) \in \arg\min_{\bx \in \mathcal{X}} f(\bx;\theta)
\]
an optimal solution of \eqref{cpp}. Throughout the paper, we study how the optimizer and optimal value vary as a function of the parameter $\theta$.

We further make the following assumptions for problem \eqref{cpp}:

\begin{assumption} \label{a1}
For any given $\theta\in\Theta$, $f(\bx;\theta):\mathcal{X}\times\Theta\to\mathbb{R}$ is a convex function in \(\bx \in \mathcal{X} \).
\end{assumption}
\begin{assumption}\label{a2}
The decision space \( \mathcal{X} \subset \mathbb{R}^{d_{\mathcal{X}}} \) is compact with positive Lebesgue measure and convex with radius \( R_{\mathcal{X}} \) under an arbitrary norm \( \|\cdot\| \). The parameter space \( \Theta \subset \mathbb{R}^{d_\Theta} \) is compact with radius \( R_\Theta \) under the same norm $\|\cdot\|$. We assume that \( d_{\mathcal{X}} \gg d_\Theta \).
\end{assumption}
\begin{assumption}\label{a3}
For any fixed \( \bx \in \mathcal{X} \), the function \( f(\bx;\theta) \) is \( L_\Theta \)-Lipschitz in \( \theta \) under \( \|\cdot\| \). For any fixed \( \theta \in \Theta \), the function \( f(\bx; \theta) \) is \( L_{f,1} \)-Lipschitz under \( \|\cdot\| \) and \( L_{f,2} \)-smooth under \( \|\cdot\|_2 \).
\end{assumption}
\begin{assumption}\label{a4}
For all \( \theta \in \Theta \), the feasible set is fixed and given by \( \mathcal{X} \).
\end{assumption}

\myparagraph{Assumption justification.}
Assumptions~\ref{a1}--\ref{a4} are basic assumptions that lay the foundation for all later technical results. The convexity in Assumption~\ref{a1} is invoked only in the derivation of the convergence rate of the online optimizer; since our meta proof framework treats this rate as a black-box input, the analysis would extend to non-convex settings whenever a comparable rate is available. The compactness of the parameter space in Assumption~\ref{a2} is critical for the memory bounds: the parameter dimension $d_\Theta$ controls the exponent of $M(K,\varepsilon)$ and the radius $R_\Theta$ enters the prefactor through the covering of $\Theta$. The Lipschitz and smoothness conditions in Assumption~\ref{a3} are standard in the amortized optimization literature \citep{amos2022tutorial}. In particular, the smoothness constant $L_{f,2}$ is defined with respect to the $\|\cdot\|_2$ norm, consistent with standard smoothness definition. Assumption~\ref{a4} fixes the feasible set across parameters for concreteness; extending the analysis to parameter-dependent constraints is subject to future work.

\subsection{Semi-amortized optimization framework}
Given the parameterized convex optimization problem in \eqref{cpp}, our goal is to design an offline--online hybrid strategy that achieves an $\varepsilon$-optimal solution within at most $K$ online refinement steps. The strategy is composed of two phases, illustrated in Figure~\ref{fig:framework-overview}. At each phase, the strategy has access to an oracle that provides a different type of information about \eqref{cpp}.
\begin{itemize}\vspace{-1ex}
    \item \textit{Offline oracle:} Given parameter $\theta \in \Theta$, output an optimal solution $\bx^\star(\theta)$ of \eqref{cpp}.
    \item \textit{Online oracle:} Given decision variable $\bx \in \mathcal{X}$ and parameter $\theta \in \Theta$, output gradient $\nabla_\bx f(\bx;\theta)$.\vspace{-1ex}
\end{itemize}
\begin{figure}[!t]
    \centering
    \includegraphics[width=0.7\linewidth]{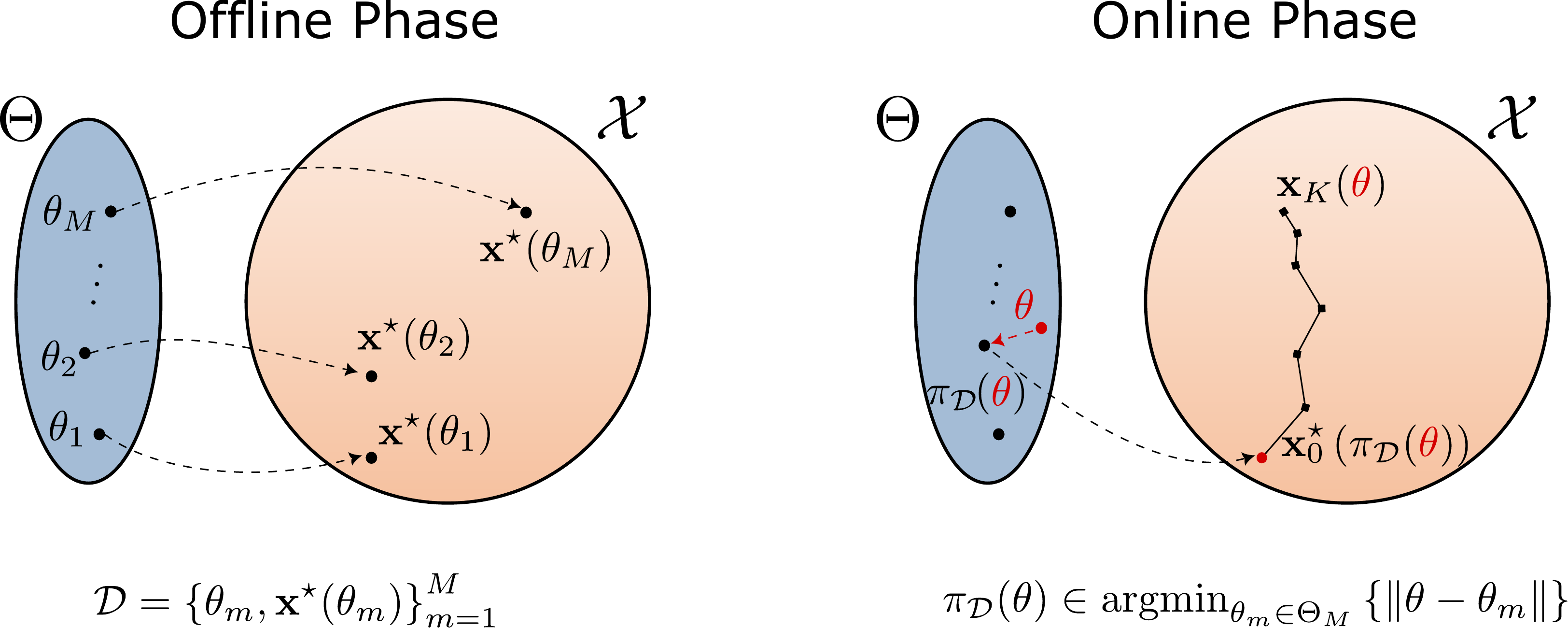}
    \caption{An overview of our framework. \emph{Left}: During the offline phase we sample parameters $\{\theta_m\}_{m=1}^M$ and store their optimal solutions in the dataset $\mathcal{D}$. \emph{Right}: Given a test parameter $\theta$, the online phase warm starts $\bx_0(\theta)$ with the optimal solution for the predictor $\pi_\mathcal{D}(\theta)$. This is followed by $K$ steps of projected gradient descent.}
    \label{fig:framework-overview}
\end{figure}
\myparagraph{Offline phase.} A finite point set $\Theta_M = \{\theta_m\}_{m=1}^M \subset \Theta$ is sampled in the parameter space. For each $\theta_m \in \Theta_M$, a single offline-oracle call returns the optimal solution $\bx^\star(\theta_m)$, yielding the dataset $\mathcal{D}=\{(\theta_m,\bx^\star(\theta_m))\}_{m=1}^M$ of $M$ stored solutions (memory). From $\mathcal{D}$, we construct a nonparametric predictor via the nearest-neighbor rule:
\begin{align}\label{1NN}
\pi_{\mathcal{D}}:\Theta \to \Theta_M,\quad
\pi_{\mathcal{D}}(\theta)
\in
\arg\min_{\theta_m \in \Theta_M}
\left\{ \|\theta - \theta_m\|
\right\}.
\end{align}
\myparagraph{Online phase.} Given a test parameter $\theta \in \Theta$, the predictor returns a warm start $\bx_0(\theta) := \bx^\star\!\bigl(\pi_{\mathcal{D}}(\theta)\bigr)$ retrieved from $\mathcal{D}$. Starting from $\bx_0(\theta)$, we run $K$ steps of projected gradient descent (PGD) \citep[Section~1.4.3]{beck2010gradient}, 
\begin{align}\label{pgd}
\bx_{k+1}(\theta)=\Pi_{\mathcal{X}}\!\left(\bx_k(\theta)-\eta \nabla_{\bx} f(\bx_k(\theta);\theta)\right),\quad k=0,1,\ldots,K-1.
\end{align}
with constant stepsize $\eta=L_{f,2}^{-1}$, thus incurring a total of $K$ online oracle calls.

\myparagraph{Ultimate goal.}~The ultimate goal of the hybrid strategy is thus to guarantee, for all $\theta\in\Theta$, an $\varepsilon$-optimal solution within at most $K$ online oracle calls:
\begin{align}\label{epsilonoptimal}
\min_{k \in [K]} f(\bx_k(\theta);\theta) - f(\bx^\star(\theta);\theta) \le \varepsilon
\end{align}
The detailed implementation of the strategy is described in Algorithm \ref{alg:data_collector} and Algorithm \ref{alg:data_collector1}.
\begin{figure*}[!h]\vspace{0ex}
\centering
\small
\begin{minipage}[t]{0.48\linewidth}
\begin{algorithm}[H]
\caption{Offline uniform grid sampler}
\label{alg:data_collector}
\begin{tcolorbox}[left=0mm,right=0mm,top=0mm,bottom=0mm]{
\begin{algorithmic}
\REQUIRE Sampling budget $M$.
\STATE \textbf{Initialize:} $\mathcal{D}=\emptyset$.
\STATE Uniform grid $\Theta_{M,\mathrm{Uni}} \subset \Theta$ with $M$ points.
\FOR{$m=1,\ldots,M$}
    \STATE Sample at grid point $\theta_m \in \Theta_{M,\mathrm{Uni}}$.
    \STATE Solve $\bx^\star(\theta_m)\in\arg\min_{\bx\in\mathcal{X}}f(\bx;\theta_m)$.
    \STATE Add $(\theta_m,\bx^\star(\theta_m))$ to $\mathcal{D}$.
\ENDFOR
\STATE \textbf{Output:} Non-parametric predictor:\STATE $\pi_{\mathcal{D}}(\cdot)\in\arg\min_{\theta_m\in\Theta_M} \left\{\|\cdot-\theta_m\|\right\}$.
\end{algorithmic}}
\end{tcolorbox}
\end{algorithm}
\end{minipage}
\hfill
\begin{minipage}[t]{0.48\linewidth}
\begin{algorithm}[H]
\caption{Online refinement algorithm}
\label{alg:data_collector1}
\begin{tcolorbox}[left=0mm,right=0mm,top=0mm,bottom=0mm]{
{\fontsize{9.5}{12.5}\selectfont{\begin{algorithmic}
\REQUIRE Online budget $K$, predictor $\pi_{\mathcal{D}}$.
\STATE \textbf{Input:} $\theta\in\Theta$. $\mathcal{T}=\emptyset$.
\STATE Apply $\theta_{m^\star}=\pi_{\mathcal{D}}(\theta)$, take $\bx_0\gets \bx^\star(\theta_{m^\star})$.
\FOR{$k=0,\ldots,K-1$}
    \STATE $\bx_{k+1}\gets\Pi_{\mathcal{X}}(\bx_k-L_{f,2}^{-1}\nabla_\bx f(\bx_k;\theta))$.
    \STATE Append $(\bx_{k+1},f(\bx_{k+1};\theta))$ to $\mathcal{T}$.
\ENDFOR
\STATE  $\bx^{\star\star}\gets\arg\min_{(\bx_{k},f(\bx_{k};\theta))\in\mathcal{T}}f(\bx_{k};\theta)$.
\STATE \textbf{Output:} $\bx^{\star\star}$.
\end{algorithmic}}}}
\end{tcolorbox}
\end{algorithm}
\end{minipage}
\end{figure*}

Because the warm-start error at any test parameter $\theta$ is controlled by the distance $\|\theta - \pi_\mathcal{D}(\theta)\|$ to its nearest stored sample---by the Lipschitz continuity of $f$ in Assumption~\ref{a3}---bounding the worst-case error over $\Theta$ amounts to ensuring that $\Theta_M$ covers $\Theta$ at a sufficiently fine resolution. 

We formalize this requirement, and the resulting minimal memory, as follows.
\begin{dfn}[$(K,\varepsilon)$-$\Theta$-net]\label{def:net}
A point set $\Theta_M \subset \Theta$ is a $(K,\varepsilon)$-$\Theta$-net if the warm-start predictor $\pi_\mathcal{D}$ built from $\Theta_M$ guarantees~\eqref{epsilonoptimal} for every $\theta \in \Theta$: an $\varepsilon$-optimal solution is obtained within at most $K$ online refinement steps.
\end{dfn}
A $(K,\varepsilon)$-$\Theta$-net can be viewed as a covering of $\Theta$ whose admissible radius is determined by $K$ and $\varepsilon$. Clearly, any such $(K,\varepsilon)$-$\Theta$-net would provide a valid solution to our goal \eqref{epsilonoptimal}, thus the minimum cardinality of such set constitutes a natural definition of memory.
\begin{dfn}[Memory complexity $M(K,\varepsilon)$]\label{def:memory}
The \emph{memory complexity} at online budget $K$ and accuracy $\varepsilon$ is the smallest number of stored solutions required to form a $(K,\varepsilon)$-$\Theta$-net:
\[
M(K,\varepsilon) \;:=\; \min\bigl\{\,M\,:\, \exists\, \Theta_M \subset \Theta \text{ that is a } (K,\varepsilon)\text{-}\Theta\text{-net}\,\bigr\}.
\]
\end{dfn}
Note that $M(K,\varepsilon)$ depends on the warm-start predictor $\pi_{\mathcal{D}}$ and on the online refinement procedure; we omit this dependence for notational brevity.
\begin{rem}[Uniform random sampling vs.\ grid sampling] As a practical alternative to the uniform grid sampler in Algorithm~\ref{alg:data_collector}, one can replace it with a uniform random sampler over $\Theta$ (Algorithm~\ref{alg:data_collector2} in Appendix~\ref{appendixA}). 
The two achieve the same memory--accuracy guarantees up to log factors, with the random sampler additionally requiring a regularity condition on the geometry of $\Theta$ (Assumption \ref{a10}); see Appendix~\ref{appendixA}.
\end{rem}
\begin{rem}[Uniform grid as a cover]
In Algorithm~\ref{alg:data_collector}, the uniform grid $\Theta_{M,\mathrm{Uni}}$ induces a cover under the $\|\cdot\|_\infty$ norm. By norm equivalence in finite-dimensional spaces (see Theorem~2.1 in \cite{jin2016introduction}), the covering numbers of $\Theta$ under $\|\cdot\|_\infty$ and $\|\cdot\|$ are of the same order.
\end{rem}
\newtheorem{prop}{Proposition}
\section{Main results: memory--accuracy--computation tradeoff}\label{abserrresult}
In this section we establish upper and lower bounds on the memory complexity $M(K,\varepsilon)$ as a function of the online refinement budget $K$ and prescribed error tolerance $\varepsilon$, under different structural assumptions on $f$. Section~\ref{sec:strongly-convex} deals with the case of $f$ being strongly convex, while Section~\ref{gconvex} deals with functions $f$ satisfying the $\beta$-growth condition. Finally, we show in Section~\ref{accel} how these bounds can be equivalently expressed through the dual quantity $M_\alpha(K)$, the memory cost of achieving a prescribed acceleration $\alpha$ over the unaided online optimizer.

\subsection{Strongly convex case} \label{sec:strongly-convex}
We first consider the strongly convex regime, which serves as our baseline.
\begin{assumption}\label{a6}
For every $\theta \in \Theta$, $f(\,\cdot\,;\theta)$ is $\mu$-strongly convex over $\mathcal{X}$:
\[
f(\bx_1;\theta) \;\ge\; f(\bx_2;\theta) + \nabla_{\bx} f(\bx_2;\theta)^{\!\top}(\bx_1 - \bx_2) + \tfrac{\mu}{2}\|\bx_1 - \bx_2\|_2^2,\quad \forall\,\bx_1,\bx_2 \in \mathcal{X}.
\]
\end{assumption}
Under Assumption~\ref{a6}, PGD converges linearly at rate $\rho := 1 - L_{f,2}^{-1}\mu$. The following theorem (proved in Appendix~\ref{appendix1}) characterizes how much offline memory is required to guarantee $\varepsilon$-accuracy under a $K$-step online budget.
\begin{thm}\label{thm1}
Under Assumptions~\ref{a1}--\ref{a6}, given an online refinement budget of $K$ PGD steps and an accuracy target $\varepsilon > 0$, the memory complexity $M(K,\varepsilon)$ admits upper and lower bounds:
\[
\Omega\!\left(\!\left(R_\Theta L_\Theta\,{\tfrac{\rho^K}{\sqrt{\varepsilon}}}\right)^{\!d_\Theta}\!\right)
\;\le\; M(K,\varepsilon) \;\le\;
\mathcal{O}\!\left(\!\left(R_{\Theta}L_\Theta\,\tfrac{\rho^K}{\varepsilon}\right)^{\!d_\Theta}\right),
\]
with $\rho := 1 - L_{f,2}^{-1}\mu$.
\end{thm}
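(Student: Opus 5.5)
The plan is to treat both bounds as covering arguments on $\Theta$. For each, I first determine the admissible nearest-neighbor radius $r(K,\varepsilon)$ such that $\|\theta-\pi_{\mathcal D}(\theta)\|\le r$ guarantees, or is necessary for, \eqref{epsilonoptimal}. I then count points with standard covering and packing numbers of a compact set of radius $R_\Theta$ in $\mathbb{R}^{d_\Theta}$, which are of order $(R_\Theta/r)^{d_\Theta}$ (up to norm-equivalence constants, as in the remark on uniform grids).

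\textbf{Upper bound.} The first ingredient is warm-start quality. For a stored $\theta'=\pi_{\mathcal D}(\theta)$, I use Assumption~\ref{a3} twice:
\[
f(\bx^\star(\theta');\theta)-f^\star(\theta)\le f(\bx^\star(\theta');\theta')+L_\Theta\|\theta-\theta'\| - f(\bx^\star(\theta);\theta)\le 2L_\Theta\|\theta-\theta'\|,
\]
where the last step uses $f(\bx^\star(\theta');\theta')\le f(\bx^\star(\theta);\theta')\le f(\bx^\star(\theta);\theta)+L_\Theta\|\theta-\theta'\|$.

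The second ingredient is the standard linear rate for PGD with step $1/L_{f,2}$ on a $\mu$-strongly convex, $L_{f,2}$-smooth function over a convex set. It gives $f(\bx_k;\theta)-f^\star(\theta)\le \rho^k\,(f(\bx_0;\theta)-f^\star(\theta))$, up to a constant that can be handled by one extra step or via distances; I will cite this rather than reprove it. Together these give suboptimality $\le 2L_\Theta\rho^K\|\theta-\theta'\|$, so radius $r=\varepsilon/(2L_\Theta\rho^K)$ suffices. A uniform grid with that resolution has $O((R_\Theta L_\Theta\rho^K/\varepsilon)^{d_\Theta})$ points.

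\textbf{Lower bound.} Here I would exhibit a hard instance in the class and show that any $(K,\varepsilon)$-net must have small fill radius. The natural candidate is a quadratic whose minimizer moves with $\theta$:
\[
f(\bx;\theta)=\tfrac{\mu}{2}\|\bx-c\,A\theta\|_2^2 \quad\text{or}\quad f(\bx;\theta)=\tfrac{L_{f,2}}{2}x_1^2+\tfrac{\mu}{2}\|\bx_{2:}-\bx^\star(\theta)\|^2,
\]
with $\mathcal X$ large enough that projection is inactive near the minimizers, and $c$ chosen so that $f$ is $L_\Theta$-Lipschitz in $\theta$ on $\mathcal X$ (so $c\asymp L_\Theta/(\mu R_{\mathcal X})$). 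With gradient step $1/L_{f,2}$, the error along a $\mu$-eigendirection contracts exactly by $\rho$ per step, so $\bx_K-\bx^\star(\theta)=\rho^K(\bx^\star(\theta')-\bx^\star(\theta))$ in that direction. The suboptimality is then exactly $\tfrac{\mu}{2}\rho^{2K}c^2\|\theta-\theta'\|^2$. Since the iterates decrease monotonically, the minimum over $k\in[K]$ is attained at $k=K$.

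Requiring this to be at most $\varepsilon$ for every $\theta$ forces the fill distance of $\Theta_M$ to be at most $\sqrt{2\varepsilon/\mu}/(c\rho^K)$. Since $\Theta$ is then covered by balls of that radius, volume comparison gives $M\ge \Omega((R_\Theta c\,\rho^K/\sqrt{\varepsilon})^{d_\Theta})$. This matches the claimed $(R_\Theta L_\Theta\rho^K/\sqrt\varepsilon)^{d_\Theta}$ once problem constants ($\mu$, $R_{\mathcal X}$) are absorbed.

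\textbf{Main obstacle.} The main difficulty is the lower-bound construction, specifically making the contraction exactly $\rho^K$ rather than merely $\le\rho^K$. This must hold while all assumptions (Lipschitz in $\theta$ with constant $L_\Theta$, inactive projection, compactness) remain in force. I would first try the diagonal quadratic with the minimizer path embedded in the $\mu$-eigenspace and $\mathcal X$ a ball containing all minimizers in its interior. I would then check that PGD iterates stay in $\mathcal X$ by convexity, since they are convex combinations of $\bx_0$ and $\bx^\star(\theta)$. The $\sqrt\varepsilon$ versus $\varepsilon$ gap is intrinsic to this approach: the upper bound uses linear Lipschitz growth of function values, while the quadratic instance has quadratic growth.
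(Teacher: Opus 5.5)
Your proposal is correct and follows essentially the same route as the paper. The upper bound combines the warm-start estimate $f(\mathbf{x}^\star(\pi_{\mathcal D}(\theta));\theta)-f^\star(\theta)\le 2L_\Theta\|\theta-\pi_{\mathcal D}(\theta)\|$ (the paper's Lemma~\ref{lem3}) with the linear PGD rate and a uniform-grid cover. The lower bound uses a quadratic whose minimizer moves linearly with $\theta$ inside the $\mu$-eigenspace, so the gap after $K$ steps is exactly proportional to $\rho^{2K}\|\theta-\pi_{\mathcal D}(\theta)\|_2^2$, and then counts by volume; your fill-distance formulation of this step is slightly cleaner than the paper's, since it does not need to argue that the Euclidean nearest neighbor is the optimal selection rule.
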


\myparagraph{Almost matching bounds.}
The two bounds differ only by a factor of two in the exponent of the error $\varepsilon$ term: $(\rho^K/\varepsilon)^{d_\Theta}$ in the upper bound versus $(\rho^K/\sqrt{\varepsilon})^{d_\Theta}$ in the lower bound. The dependence on the parameter variation $L_\Theta R_\Theta$, $\rho^K$ and on the parameter dimension $d_\Theta$ is identical in both, so the qualitative scaling is tight; only the constant inside the accuracy exponent is open.

\myparagraph{Drivers of memory complexity.}
The bound makes three structural quantities explicit: the accuracy $\varepsilon$, the convergence rate $\rho$ (entering as $\rho^K$, the per-step contraction PGD already provides), and the parameter variation $L_\Theta R_\Theta$ (how much the value of $f(\bx,\theta)$ varies across $\Theta$). All three enter inside the $d_\Theta$-th power. Crucially, the decision dimension $d_\mathcal{X}$ does \emph{not} appear---memory scales with the parameter manifold, not with the ambient solution space, which is what makes amortization viable in the $d_\mathcal{X} \gg d_\Theta$ regime of Assumption~\ref{a2}.

\myparagraph{Phase transition.}
Once $K \ge {\log(1/\varepsilon)}/{\log(1/\rho)}$, the upper bound becomes vacuous: with $L_\Theta R_\Theta = \Theta(1)$, the memory complexity collapses to $M(K,\varepsilon) = \Theta(1)$. Beyond this threshold, $\varepsilon$-accuracy is reached by PGD alone, and offline memory provides no further benefit.

\subsection{Convex with $\beta$-growth case}\label{gconvex}
In order to characterize the memory complexity in setting where PGD lacks linear convergence,  we now consider a broader class of convex problems satisfying a $\beta$-growth condition.
\begin{assumption}[$\beta$-growth, \cite{xu2017stochastic}]\label{a7}
 There exists $\Phi > 0$ such that
\[
f(\bx;\theta) - f^\star(\theta) \;\ge\; \Phi\,\mathrm{dist}_2^\beta\!\bigl(\bx,\,\arg\min_{\bx \in \mathcal{X}} f(\bx;\theta)\bigr),\quad \forall\,\bx \in \mathcal{X},\, \forall\,\theta \in \Theta.
\]
\end{assumption}
\myparagraph{Convergence rates under $\beta$-growth.}
The exponent $\beta$ controls how the objective behaves near its minimizer and, consequently, the convergence rate of PGD. Classical results \citep{li2018calculus,frankel2015splitting} distinguish three regimes: PGD converges in finitely many steps when $\beta=1$, linearly when $\beta\in(1,2]$, and only sublinearly when $\beta>2$. The strongly convex case treated in Theorem~\ref{thm1} corresponds to $\beta = 2$ and lies within the linear regime. 

We focus on the sublinear regime $\beta>2$, which lies outside the scope of Theorem~\ref{thm1} and exhibits a qualitatively different memory--computation tradeoff.

\begin{thm}[Memory complexity under $\beta$-growth]\label{thm2}
Suppose Assumptions~\ref{a1}--\ref{a4} and \ref{a7} hold with $\beta > 2$. 
\begin{flalign*}
&\text{Suppose further that:} \qquad
L_\Theta \le L_{f,2}
\qquad \text{and} \qquad
\Phi \le \Phi_{\mathrm{lb}}
:= \tfrac{L_\Theta}
{\beta(\beta-1)(R_\mathcal{X}+R_\Theta)^{\beta-2}}. &
\end{flalign*}
Then the memory complexity admits upper and lower bounds:
\[
\Omega\!\left(\!(L_\Theta R_\Theta)^{\frac{d_\Theta}{\beta}}\,\bigl[\,\varepsilon^{-\frac{\beta-2}{\beta}} \!- \mathcal{O}(K)\,\bigr]_+^{\frac{d_\Theta}{\beta-2}}\right) \;\le\; M(K,\varepsilon) \;\le\; \mathcal{O}\!\left(\!(L_\Theta R_\Theta)^{d_\Theta}\,\bigl[\,\varepsilon^{-\frac{\beta-2}{\beta}} \!- \Omega(K)\,\bigr]_+^{\frac{\beta d_\Theta}{\beta-2}}\right),
\]
where $[\cdot]_+:=\max\{0,\cdot\}$.
\end{thm}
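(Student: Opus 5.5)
The proof splits into an upper bound, obtained by constructing a net, and a lower bound, obtained from a hard instance. Both rest on two ingredients: a sublinear convergence rate for PGD under $\beta$-growth, and a bound on the warm-start error in terms of the distance between parameters.

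\textbf{Upper bound.} First we derive the rate. With $\eta=L_{f,2}^{-1}$, the standard PGD descent inequality gives $h_{k+1}\le h_k - c\,h_k^{2(\beta-1)/\beta}$ for $h_k := f(\bx_k;\theta)-f^\star(\theta)$. This uses convexity together with $\beta$-growth, via $h_k \le \|G(\bx_k)\|\,\mathrm{dist}(\bx_k,\arg\min) \le \|G\|(h_k/\Phi)^{1/\beta}$, where $G$ is the gradient mapping. Comparing the recursion with the ODE $\dot h=-c h^{2-2/\beta}$ yields
\[
h_K^{-\frac{\beta-2}{\beta}} \;\ge\; h_0^{-\frac{\beta-2}{\beta}} + c' K .
\]
Second, we bound the warm start. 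If $\bx_0=\bx^\star(\theta')$ with $\|\theta-\theta'\|\le r$, then Assumption~\ref{a3} gives
\[
h_0 = f(\bx^\star(\theta');\theta)-f^\star(\theta) \le 2L_\Theta r .
\]
Hence $h_K\le\varepsilon$ as soon as $(2L_\Theta r)^{-(\beta-2)/\beta}\ge \varepsilon^{-(\beta-2)/\beta}-c'K$, i.e.
\[
r \;\ge\; \bigl[\varepsilon^{-\frac{\beta-2}{\beta}}-c'K\bigr]_+^{-\frac{\beta}{\beta-2}}\big/(2L_\Theta).
\]
When the bracket is nonpositive, any single point suffices. Covering $\Theta$ by $(R_\Theta/r)^{d_\Theta}$ balls of this radius (the uniform grid of Algorithm~\ref{alg:data_collector}, up to norm-equivalence constants) gives the stated upper bound $(L_\Theta R_\Theta)^{d_\Theta}[\cdot]_+^{\beta d_\Theta/(\beta-2)}$.

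\textbf{Lower bound.} We build an explicit hard family. A natural choice is
\[
f(\bx;\theta)=\Phi\,\|\bx-A\theta\|_2^\beta ,
\]
possibly plus a term linear in $\theta$ to saturate $L_\Theta$, with $\bx^\star(\theta)=A\theta$ embedded in $\mathcal{X}$. The conditions $\Phi\le\Phi_{\mathrm{lb}}$ and $L_\Theta\le L_{f,2}$ are exactly what make this instance satisfy Assumption~\ref{a3}. The Hessian of $\|\cdot\|^\beta$ scales like $\beta(\beta-1)\|\cdot\|^{\beta-2}$, and $\|\bx-A\theta\|\le R_\mathcal{X}+R_\Theta$, so smoothness and $\theta$-Lipschitzness hold with the given constants.

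On this instance, PGD along the ray toward the minimizer evolves as a one-dimensional recursion $t_{k+1}=t_k-\eta\Phi\beta t_k^{\beta-1}$. This gives the matching upper rate
\[
h_K^{-\frac{\beta-2}{\beta}} \le h_0^{-\frac{\beta-2}{\beta}}+C K ,
\]
so PGD cannot contract faster than the generic bound. A packing argument finishes the proof. If $M$ is below the packing number of $\Theta$ at scale $r$, some $\theta$ has $\|\theta-\pi_\mathcal{D}(\theta)\|\ge r$. Then
\[
h_0=\Phi\|A(\theta-\theta')\|^\beta\gtrsim \Phi\,\|A\|^\beta r^\beta ,
\]
and plugging this into the reverse rate shows $h_K>\varepsilon$ whenever $r^{\beta}$ exceeds $[\varepsilon^{-(\beta-2)/\beta}-CK]_+^{-\beta/(\beta-2)}$ up to the $L_\Theta$ scaling. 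Inverting gives $r\sim(\cdot)^{-1/(\beta-2)}$ with prefactor $(L_\Theta R_\Theta)^{1/\beta}$, and hence the lower bound with exponent $d_\Theta/(\beta-2)$. The gap from $\beta d_\Theta/(\beta-2)$ comes precisely from the warm-start error being linear in $r$ in the upper bound but of order $r^\beta$ in the hard instance.

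The main obstacle is the lower-bound instance. It must simultaneously respect $L_{f,2}$-smoothness, $L_\Theta$-Lipschitzness in $\theta$, and $\beta$-growth with constant $\Phi$. It must also exhibit a PGD trajectory provably no faster than the sublinear rate, including the effect of projection onto $\mathcal{X}$. I would handle projection by placing $A\Theta$ well inside $\mathcal{X}$, so that the iterates stay on the segment and projection is inactive. I would then carefully invert the discrete recursion using monotonicity of $t\mapsto t-\eta\Phi\beta t^{\beta-1}$ on the relevant range, which is guaranteed by $\Phi\le\Phi_{\mathrm{lb}}$.
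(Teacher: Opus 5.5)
Your proposal is correct and follows essentially the paper's proof. For the upper bound, it combines a sublinear PGD rate under $\beta$-growth with $h_0\le 2L_\Theta r$ and a uniform covering; the paper imports that rate from Frankel et al.'s KL framework rather than deriving it. For the lower bound, it uses the same radial instance $\|\bx-[\theta;\mathbf{0}]/(R_\mathcal{X}+R_\Theta)\|_2^\beta$, a one-dimensional GD recursion telescoped in $t_k^{-(\beta-2)}$, and a packing argument. Two small repairs: with constraints the convexity step only gives $h_{k+1}\le\|G(\bx_k)\|_2\,\mathrm{dist}_2(\bx_k,\arg\min)$, not the bound with $h_k$, so the rate needs the usual two-case argument with a constant depending on an upper bound for $h_0$ (the analogue of the paper's initial-gap threshold); and the hard instance should carry the coefficient $\Phi_{\mathrm{lb}}$ rather than $\Phi$ (it still has $\beta$-growth for every $\Phi\le\Phi_{\mathrm{lb}}$) so that the lower-bound prefactor does not depend on $\Phi$.
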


\myparagraph{Parameter restrictions.}
Both auxiliary assumptions in Theorem~\ref{thm2} concern only the lower bound. The first, $L_\Theta \le L_{f,2}$, is a presentational simplification of the lower-bound prefactor $(L_\Theta R_\Theta)^{{d_\Theta}/{\beta}}$; the full statement without it appears in Appendix~\ref{pfthm2}. The second, $\Phi \le \Phi_{\mathrm{lb}}$, is mild: within our lower bound class construction, smoothness alone forces $\Phi$ to be of the same order, so the lower bound holds essentially throughout the admissible range of $\Phi$.

\myparagraph{Phase transition in $K$.}
The argument inside the bracket $[\,\cdot\,]_+$ vanishes once $K$ exceeds a threshold of order $\varepsilon^{-(\beta-2)/\beta}$, beyond which the upper bound becomes vacuous and offline memory provides no further benefit. This threshold matches the sublinear convergence rate of PGD under $\beta$-growth, $\mathcal{O}(K^{-\beta/(\beta-2)})$ \citep{li2018calculus}: the transition is precisely the regime where PGD alone already reaches $\varepsilon$-accuracy, so memory has nothing to amortize. 

\myparagraph{Comparison with the strongly convex case.}
The upper and lower bounds in Theorem~\ref{thm2} differ by a factor of $\beta$ in the exponent --- $\beta d_\Theta/(\beta-2)$ in the upper bound versus $d_\Theta/(\beta-2)$ in the lower. As $\beta \to 2^+$, this ratio matches the factor-of-two exponent gap of Theorem~\ref{thm1}, and both bounds recover Theorem~\ref{thm1} in the $\mathcal{O}$ and $\Omega$ senses (Appendix~\ref{pfprop1}). Away from this boundary the contrast is stark: under strong convexity, memory must grow \emph{exponentially} in $K$ to extract any further acceleration over PGD's already-fast linear baseline; under $\beta$-growth, polynomial growth suffices, since the sublinear baseline leaves more room for memory to help. Section~\ref{accel} formalizes this contrast through the memory cost of acceleration $M_\alpha(K)$.




\myparagraph{Convex smooth extension.}
Without any growth condition, an upper bound on $M(K,\varepsilon)$ analogous to Theorem~\ref{thm2} still holds (Theorem~\ref{thm3}, Appendix~\ref{appendix2}); matching lower bounds in this setting are more subtle and we leave them open. The local $\beta$-growth case, where Assumption~\ref{a7} holds only in a neighborhood of $\bx^\star(\theta)$, is also subsumed provided iterates remain in the region.

\subsection{Memory-based acceleration analysis}\label{accel}
We now reinterpret Theorems~\ref{thm1} and \ref{thm2} through a dual lens: rather than fixing accuracy $\varepsilon$ and characterizing the memory $M(K,\varepsilon)$ required, we fix an \emph{acceleration factor} $\alpha \in (0,1)$ and characterize the memory $M_\alpha(K)$ needed to drive PGD beyond its native convergence rate. Smaller $\alpha$ corresponds to a more aggressive target. The quantity $M_\alpha(K)$ is the \emph{memory cost of acceleration}: how much offline memory must be invested to obtain a prescribed speedup over the unaided online optimizer.

\myparagraph{Strongly convex case.}
Under the conditions of Theorem~\ref{thm1}, PGD converges at rate $\rho^K$. The acceleration factor $\alpha$ multiplies this rate to yield a more aggressive target $\mathcal{O}\bigl((\alpha\rho)^K\bigr)$, and Theorem~\ref{thm1} specializes to:

\newtheorem{coro}{Corollary}
\begin{coro}[Acceleration in the strongly convex case]\label{coro1}
Under the conditions of Theorem~\ref{thm1}, to achieve a target convergence rate $\mathcal{O}\bigl((\alpha\rho)^K\bigr)$ with $\alpha\in(0,1)$, it suffices that
\[
M_\alpha(K) \;\le\; \mathcal{O}\!\left(\!\left(R_\Theta L_\Theta\,(1/\alpha)^K\right)^{\!d_\Theta}\right).
\]
\end{coro}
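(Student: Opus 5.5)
The plan is to plug the target accuracy $\varepsilon := c\,(\alpha\rho)^K$ directly into the upper bound of Theorem~\ref{thm1}, for a fixed constant $c>0$. The strongly convex upper bound reads $M(K,\varepsilon)\le \mathcal{O}\bigl((R_\Theta L_\Theta \rho^K/\varepsilon)^{d_\Theta}\bigr)$. With this choice of $\varepsilon$, the ratio becomes
\[
\frac{\rho^K}{\varepsilon} = \frac{\rho^K}{c\,\alpha^K\rho^K} = \frac{1}{c}\,(1/\alpha)^K .
\]
So the $\rho^K$ factors cancel exactly. Absorbing $1/c$ into the $\mathcal{O}(\cdot)$ constant then yields the stated bound $\mathcal{O}\bigl((R_\Theta L_\Theta (1/\alpha)^K)^{d_\Theta}\bigr)$.

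Before substituting, I would make precise what $M_\alpha(K)$ means, namely $M_\alpha(K) := M\bigl(K, c(\alpha\rho)^K\bigr)$. This is the minimum size of a $(K,\varepsilon)$-$\Theta$-net with $\varepsilon$ scaling as $(\alpha\rho)^K$. Under this definition, the inequality ``it suffices that'' follows immediately from Definition~\ref{def:memory}. Any net meeting the Theorem~\ref{thm1} construction, a uniform grid with covering radius chosen so the warm-start gap is small enough, certifies accuracy $c(\alpha\rho)^K$ after $K$ PGD steps.

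The one point needing care is to check that the constants hidden in Theorem~\ref{thm1}'s $\mathcal{O}(\cdot)$ do not depend on $\varepsilon$ or $K$. Otherwise the substitution $\varepsilon=c(\alpha\rho)^K$, which couples $\varepsilon$ to $K$, could alter the scaling. To check this, I would recall the structure of the upper-bound argument, stated in the paper only as a sketch:
\begin{itemize}
\item By Lipschitzness in $\theta$ and strong convexity, a warm start from the nearest grid point gives initial suboptimality controlled by $L_\Theta$ times the covering radius $r$.
\item PGD then contracts this by $\rho^K$.
\item So choosing $r \asymp \varepsilon/(L_\Theta\rho^K)$ suffices, and a covering of $\Theta$ at radius $r$ has size $\mathcal{O}((R_\Theta/r)^{d_\Theta})$.
\end{itemize}
All constants here depend only on $d_\Theta$, the norm-equivalence constant and problem constants such as $L_{f,2}$ and $\mu$, not on $\varepsilon$ or $K$. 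The substitution is therefore legitimate.

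Finally, the regime $(1/\alpha)^K R_\Theta L_\Theta<1$ needs no separate treatment. There the bound is understood as at least one stored point, consistent with the $\mathcal{O}$ notation, although for $\alpha<1$ this regime does not arise asymptotically in $K$. I expect no real obstacle. The only subtlety is the uniformity of constants just discussed.
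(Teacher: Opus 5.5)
Your proposal is correct and matches the paper's proof. The paper reruns the Theorem~\ref{thm1} upper-bound chain (linear PGD contraction $\rho^K$, then Lemma~\ref{lem3}, then a grid cover) with the warm-start gap set to $\alpha^K$, which is exactly your substitution $\varepsilon = c(\alpha\rho)^K$ cancelling the $\rho^K$ factor; your check that the hidden constants do not depend on $\varepsilon$ or $K$ is a worthwhile addition, though the gap bound of Lemma~\ref{lem3} needs only Lipschitzness in $\theta$, not strong convexity.
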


\myparagraph{Convex with $\beta$-growth case.}
Under the conditions of Theorem~\ref{thm2} with $\beta > 2$, PGD's native rate is $\mathcal{O}\bigl(K^{-\beta/(\beta-2)}\bigr)$. Here the acceleration factor $\alpha$ divides the polynomial exponent, yielding the more aggressive target $\mathcal{O}\bigl(K^{-\beta/[\alpha(\beta-2)]}\bigr)$, and Theorem~\ref{thm2} specializes to:
\begin{coro}[Acceleration in the $\beta$-growth case]\label{coro3}
Under the conditions of Theorem~\ref{thm2} with $\beta > 2$, to achieve a target convergence rate $\mathcal{O}\bigl(K^{-\beta/[\alpha(\beta-2)]}\bigr)$ with $\alpha\in(0,1)$, it suffices that
\[
M_\alpha(K) \;\le\; \mathcal{O}\!\left(\!\left(R_\Theta L_\Theta\,K^{\beta/[\alpha(\beta-2)]}\right)^{\!d_\Theta}\right).
\]
\end{coro}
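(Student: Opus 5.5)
The plan is to read Corollary~\ref{coro3} off the upper-bound half of Theorem~\ref{thm2}: set $\varepsilon$ equal to the target accuracy, simplify the bracket, and absorb constants. The constant factors hidden in $\mathcal{O}(\cdot)$ and $\Omega(\cdot)$ are treated as fixed, independent of $K$ and $\varepsilon$. The main thing to check is that the bracketed expression still has the right order after the substitution. The upper bound has to be used in the form established in the proof of Theorem~\ref{thm2}, where the bracket is $\varepsilon^{-(\beta-2)/\beta}-cK$ for an explicit constant $c>0$.

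\textbf{Step 1 (choose $\varepsilon$).} Set $\varepsilon_K := C\,K^{-\beta/[\alpha(\beta-2)]}$ for a constant $C$. Then
\[
\varepsilon_K^{-(\beta-2)/\beta} = C^{-(\beta-2)/\beta}\,K^{1/\alpha}.
\]
Since $\alpha\in(0,1)$, the exponent satisfies $1/\alpha>1$. So $K^{1/\alpha}$ eventually dominates the linear term $cK$, and the bracket becomes
\[
\bigl[\,C' K^{1/\alpha}-cK\,\bigr]_+ \le C' K^{1/\alpha}.
\]
The bound is monotone, so we may simply drop the subtracted $\Omega(K)$ term. This is valid because it only makes the upper bound larger.

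\textbf{Step 2 (plug into Theorem~\ref{thm2}).} The upper bound gives
\[
M(K,\varepsilon_K)\le \mathcal{O}\!\left((L_\Theta R_\Theta)^{d_\Theta}\,\bigl(K^{1/\alpha}\bigr)^{\beta d_\Theta/(\beta-2)}\right).
\]
Since $\bigl(K^{1/\alpha}\bigr)^{\beta d_\Theta/(\beta-2)} = \bigl(K^{\beta/[\alpha(\beta-2)]}\bigr)^{d_\Theta}$, this is exactly
\[
\mathcal{O}\!\left(\left(R_\Theta L_\Theta\,K^{\beta/[\alpha(\beta-2)]}\right)^{d_\Theta}\right).
\]
The minimal memory $M_\alpha(K)$ needed to reach the target rate is at most $M(K,\varepsilon_K)$, because any $(K,\varepsilon_K)$-$\Theta$-net achieves the target uniformly over $\theta$. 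This yields the claim.

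\textbf{Step 3 (main obstacle).} The only delicate point is bookkeeping of constants. The upper bound in Theorem~\ref{thm2} contains $\varepsilon$-independent constants involving $\Phi$, $L_{f,2}$, $\beta$, and $R_\mathcal{X}$, and these could enter as powers multiplying $K^{1/\alpha}$. I would verify from the proof of Theorem~\ref{thm2} that these constants appear only multiplicatively, raised to a fixed power independent of $K$. They are then absorbed into $\mathcal{O}(\cdot)$ together with $C$, and the $[\cdot]_+$ truncation causes no issue since we only upper bound it.

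The same substitution, applied with $\varepsilon_K=(\alpha\rho)^K$ in Theorem~\ref{thm1}, gives $\rho^K/\varepsilon_K=\alpha^{-K}$. This serves as a consistency check against Corollary~\ref{coro1}.
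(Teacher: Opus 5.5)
Your proposal is correct and follows essentially the same route as the paper. The paper's proof applies Lemma~\ref{lem2} directly: it requires the initial gap to be of order $K^{-\beta/[\alpha(\beta-2)]}$ and converts this through Lemma~\ref{lem3} into a covering radius of order $K^{-\beta/[\alpha(\beta-2)]}/(2L_\Theta)$. That is exactly what you compute by substituting $\varepsilon_K\propto K^{-\beta/[\alpha(\beta-2)]}$ into the upper bound of Theorem~\ref{thm2} and dropping the subtracted $C_1K$ term. Going through Theorem~\ref{thm2} is fine: the constants you worry about in Step~3 appear there only in the dropped term or in $K$-independent prefactors, which $\mathcal{O}(\cdot)$ absorbs.
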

The proofs of Corollaries~\ref{coro1} and \ref{coro3} appear in Appendices~\ref{pfcoro1} and \ref{pfcoro3}.

\myparagraph{Memory cost of acceleration.}
The two corollaries reveal a sharp asymmetry in how the memory cost of acceleration scales with $K$. In the strongly convex case, $M_\alpha(K)$ grows \emph{exponentially} in $K$: accelerating an already-fast linearly convergent method demands an exponential investment in offline memory. In the $\beta$-growth case, $M_\alpha(K)$ grows only \emph{polynomially} in $K$: slower online baselines leave more room for offline memory to help.

\myparagraph{Limit $\beta \to 2^+$.}
As $\beta \to 2^+$, the polynomial scaling of Corollary~\ref{coro3} ceases to deliver effective acceleration: the exponent $\beta/[\alpha(\beta-2)]$ diverges, so $M_\alpha(K)$ grows faster than any polynomial in $K$. Under the natural correspondence between $\Phi$ and $\rho$ (Lemma~\ref{lem2}, Appendix~\ref{pfspeed}), this limit smoothly matches the exponential scaling of Corollary~\ref{coro1}, providing consistency between the two regimes.

\smallskip\noindent
Together, the two corollaries reveal a substitution principle:
\begin{center}
\textbf{\textit{Amortization helps most where online computation is expensive.}}
\end{center}
When PGD is slow, memory provides polynomial savings; when fast, those savings come at exponential memory cost.
As the next section formalizes, this exchange rate is set by two structural inputs: the optimizer's convergence rate and the solution-map sensitivity.

\subsection{Meta proof framework}\label{meta-framework}
The proofs of Theorems~\ref{thm1} and \ref{thm2} share a common skeleton that we distill into a \emph{meta-theorem} (Appendix~\ref{meta proof}): combining two structural inputs --- the online optimizer's convergence rate and the solution-map sensitivity --- with covering and packing arguments on $\Theta$, it yields matching upper and lower bounds on $M(K,\varepsilon)$. New settings can be analyzed by verifying these two inputs; Theorems~\ref{thm1} and \ref{thm2} are themselves two such instantiations. The proof sketches below trace how the two inputs combine in each direction.

\myparagraph{$\mathcal{O}(\cdot)$ proof.}
The \emph{convergence rate} of PGD bounds the post-$K$-step error \emph{from above} by a decreasing function of $K$ and the initial optimality gap $f(\bx_0;\theta) - f^\star(\theta)$. The \emph{sensitivity} of the solution map (Lemma~\ref{lem3}) further bounds this initial gap \emph{from above} by $\|\theta - \pi_\mathcal{D}(\theta)\|$. Together, these two upper bounds reduce $\varepsilon$-accuracy to a covering condition on $\Theta_M$, yielding a sufficient memory $M(K,\varepsilon)$.

\myparagraph{$\Omega(\cdot)$ proof.}
The matching lower bound uses lower-bound counterparts of the same two inputs. We construct a hard function class $\underline{f}(\bx;\theta)$ on which (i) the \emph{convergence rate} is bounded \emph{from below}, so the post-$K$-step error cannot decay faster than a prescribed function of $K$ and the initial gap, and (ii) the \emph{sensitivity} is bounded \emph{from below}, so the initial gap is at least an increasing function of $\|\theta - \pi_\mathcal{D}(\theta)\|$. A packing argument on $\Theta$ then shows that any $M$ below $M(K,\varepsilon)$ leaves some $\theta$ for which $K$ PGD steps cannot reach $\varepsilon$-accuracy.

\newif\ifincludenn
\includennfalse 

\section{Numerical experiments}\label{experiment}

\ifincludenn

We validate the framework in two complementary settings: a controlled testbed on Tikhonov-regularized regression, where every structural driver admits closed-form expressions, and an applied setting on convex neural network training, where the framework extends naturally beyond synthetic problems.

\myparagraph{Controlled validation on Tikhonov regularization.}
We consider the parametric Tikhonov-regularized regression problem
\begin{equation}\label{Tihaov}
\min_{\bx\in\mathcal{X}} \;
\frac{1}{\beta}\|A\bx-\theta\|_2^\beta+\frac{\lambda}{\beta}\|\bx\|_2^\beta,
\end{equation}
on the Euclidean balls $\Theta = \mathbb{B}_2(0, R_\Theta) \subset \mathbb{R}^{6}$ and $\mathcal{X} = \mathbb{B}_2(0, R_\mathcal{X}) \subset \mathbb{R}^{12}$, with $R_\Theta = 1$, $R_\mathcal{X} = 1.5$, $\lambda = 0.3$, and $\beta \in \{2, 4, 6\}$. Setting $\beta = 2$ recovers standard ridge regression --- the strongly convex case of Theorem~\ref{thm1} --- while $\beta > 2$ yields a $\beta$-growth instance covered by Theorem~\ref{thm2}. The design matrix $A \in \mathbb{R}^{6 \times 12}$ is constructed via $A = U\Sigma V^\top$ with $U = I_6$, fixed singular values $(1, 0.9, 0.8, 0.7, 0.6, 0.5)$, and $V \in \mathbb{R}^{12 \times 12}$ a random orthogonal matrix. This ensures all structural drivers ($L_\Theta$, $L_{f,2}$, $\mu$, $\Phi$) admit closed-form expressions in $\Sigma$, $\lambda$, $R_\Theta$, $R_\mathcal{X}$ --- for instance, in the ridge case, $L_{f,2} = \sigma_{\max}^2(A) + \lambda = 1.3$ and $\mu = \lambda = 0.3$. For each memory size $M$ we draw $N = 100$ test parameters $\theta$, run warm-started PGD, and record the smallest $K$ at which the iterate reaches $\varepsilon$-accuracy; we report the mean and standard deviation of this stopping time.

\myparagraph{Application to convex neural networks.}
As shown in \cite{stanford_convex_nn}, the training of two-layer neural networks admits an equivalent convex reformulation under appropriate regularization. This places neural network training within the parametric convex framework studied here, with the data or task specification playing the role of the parameter $\theta$. Our memory--computation bounds therefore yield a direct theoretical handle on how much pre-computed information is needed to accelerate training across related tasks. Detailed settings are in Appendix~\ref{fullexperiment}.

\begin{figure*}[t]
    \centering
    \subfigure[Tikhonov, $\beta = 2$ (ridge)]{
        \includegraphics[width=0.325\linewidth]{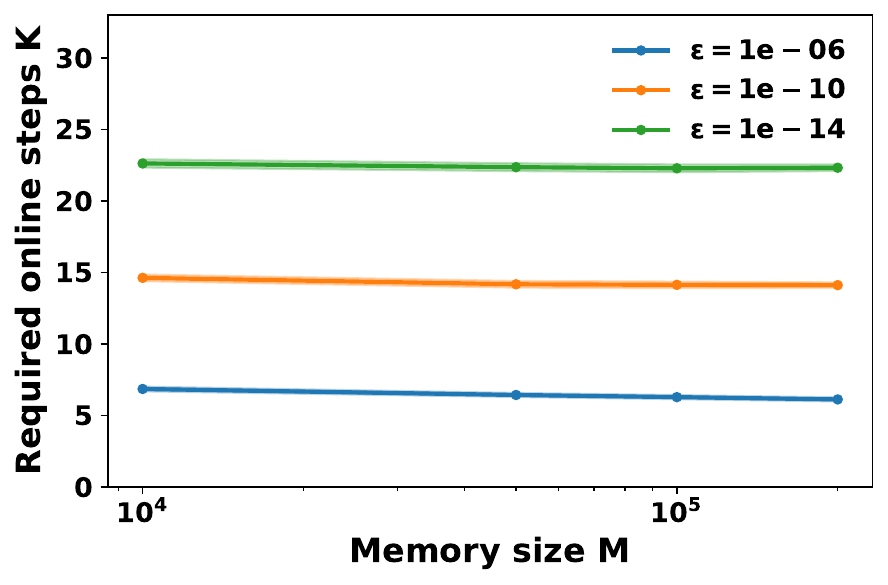}
        \label{fig:tik_b2}}
        \hspace{-3.5mm}
    \subfigure[Tikhonov, $\beta = 4$]{
        \includegraphics[width=0.325\linewidth]{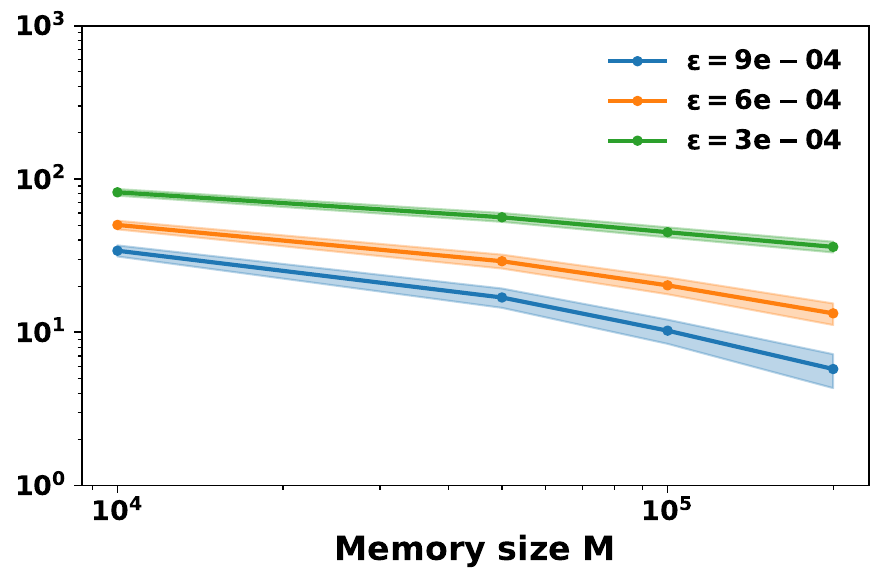}
        \label{fig:tik_b4}}
        \hspace{-3.5mm}
    \subfigure[Tikhonov, $\beta = 6$]{
        \includegraphics[width=0.325\linewidth]{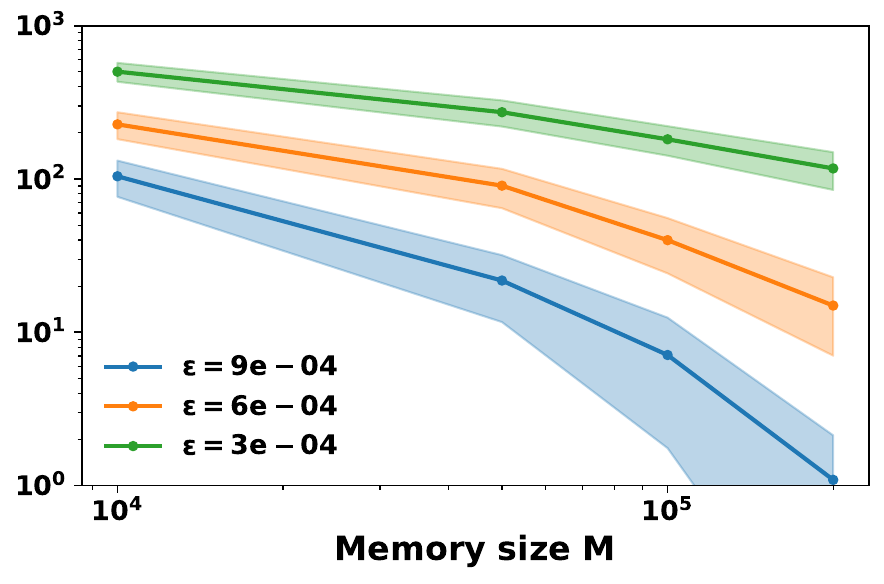}
        \label{fig:tik_b6}}\\
    \subfigure[Convex NN, $\beta = 2$]{
        \includegraphics[width=0.325\linewidth]{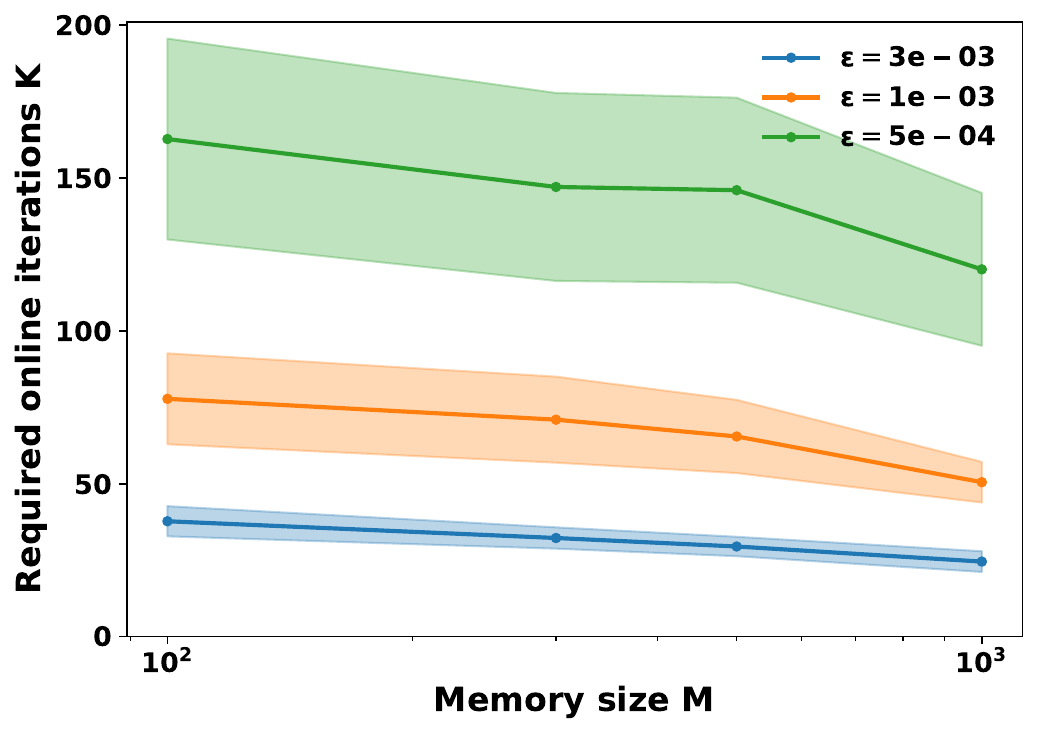}
        \label{fig:cnn_b2}}
        \hspace{-3.5mm}
    \subfigure[Convex NN, $\beta = 4$]{
        \includegraphics[width=0.325\linewidth]{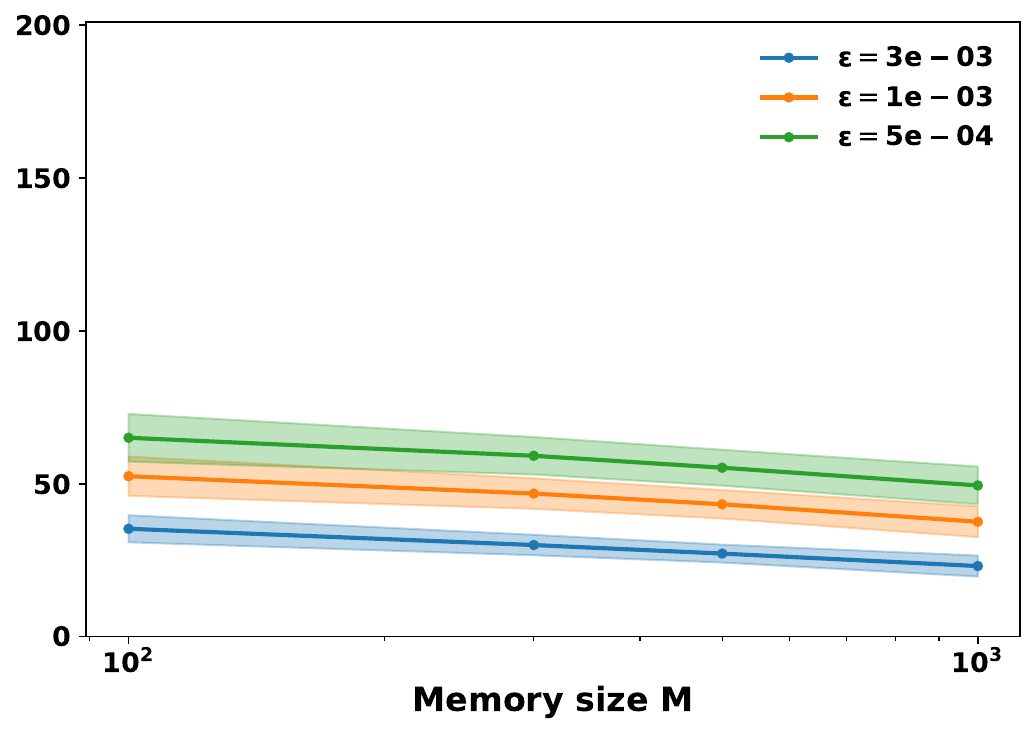}
        \label{fig:cnn_b4}}
        \hspace{-3.5mm}
    \subfigure[Convex NN, $\beta = 6$]{
        \includegraphics[width=0.325\linewidth]{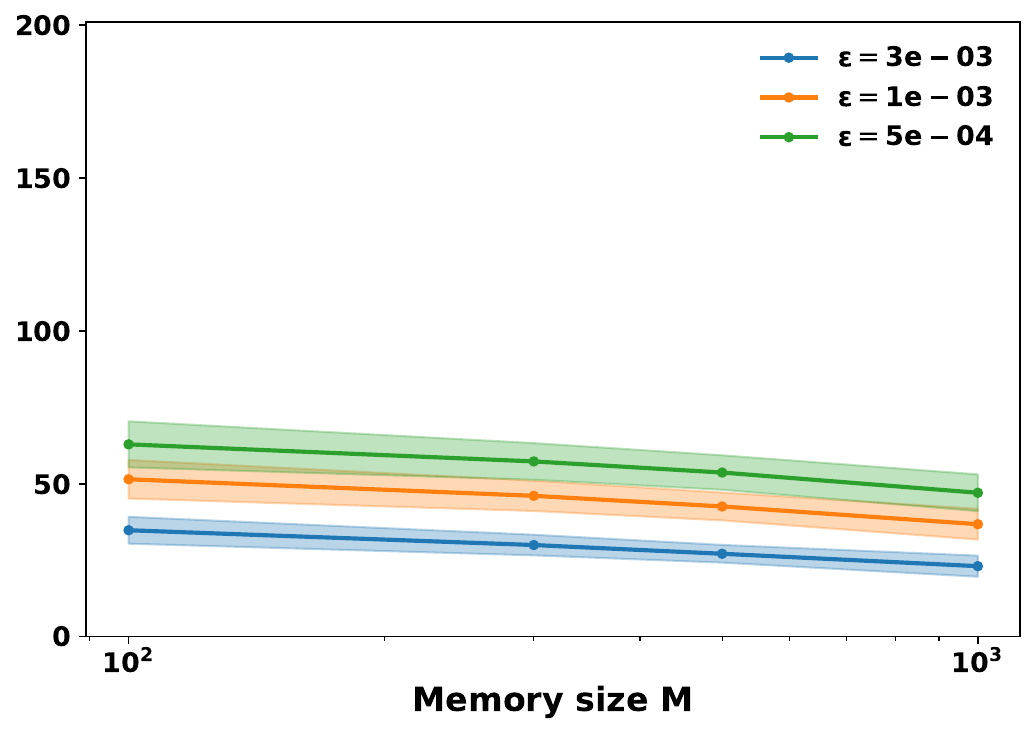}
        \label{fig:cnn_b6}}
    \caption{Average $K$ to reach accuracy $\varepsilon$ as a function of memory size $M$. \emph{Top} (a--c): Tikhonov family \eqref{Tihaov} with $\beta\in\{2,4,6\}$; (a) is ridge regression. \emph{Bottom} (d--f): convex two-layer NN training with $\beta\in\{2,4,6\}$. Shaded bands show $\pm 1$ standard deviation over $N=100$ (top) and $N=50$ (bottom) samples of $\theta$.}
    \label{fig:experiments}
\end{figure*}

Figure~\ref{fig:experiments} confirms the predicted tradeoff in both settings. In the strongly convex cases (panels a, d), the required $K$ depends only weakly on $M$: PGD's already-fast linear convergence leaves little room for amortization, consistent with the exponential memory cost of Theorem~\ref{thm1}. In the $\beta > 2$ panels (b, c, e, f), the picture is markedly different: increasing $M$ substantially reduces the required $K$, and this gain strengthens as $\beta$ grows --- matching the polynomial scaling of Theorem~\ref{thm2} and instantiating the substitution principle of Section~\ref{accel}. The pattern persists across the synthetic Tikhonov family (top) and the convex NN training task (bottom), supporting the meta-theorem's prediction that the substitution rate is determined by the online convergence regime rather than the specific problem class.

\else

We validate the framework on a controlled testbed: parametric Tikhonov-regularized regression, where every structural driver admits closed-form expressions, allowing a direct empirical check of the predicted memory--computation tradeoff in both the strongly convex (Theorem~\ref{thm1}) and $\beta$-growth (Theorem~\ref{thm2}) regimes.

\myparagraph{Setup.}
We consider the parametric Tikhonov-regularized regression problem
\begin{equation}\label{Tihaov}
\min_{\bx\in\mathcal{X}} \;
\frac{1}{\beta}\|A\bx-\theta\|_2^\beta+\frac{\lambda}{\beta}\|\bx\|_2^\beta,
\end{equation}
on the Euclidean balls $\Theta = \mathbb{B}_2(0, R_\Theta) \subset \mathbb{R}^{6}$ and $\mathcal{X} = \mathbb{B}_2(0, R_\mathcal{X}) \subset \mathbb{R}^{12}$, with $R_\Theta = 1$, $R_\mathcal{X} = 1.5$, $\lambda = 0.3$, and $\beta \in \{2, 4, 6\}$. Setting $\beta = 2$ recovers standard ridge regression --- the strongly convex case of Theorem~\ref{thm1} --- while $\beta > 2$ yields a $\beta$-growth instance covered by Theorem~\ref{thm2}. The design matrix $A \in \mathbb{R}^{6 \times 12}$ is constructed via $A = U\Sigma V^\top$ with $U = I_6$, fixed singular values $(1, 0.9, 0.8, 0.7, 0.6, 0.5)$, and $V \in \mathbb{R}^{12 \times 12}$ a random orthogonal matrix; this fixes the spectrum (and hence all structural drivers) while randomizing the geometry of the problem. The construction yields closed-form expressions for $L_\Theta$, $L_{f,2}$, $\mu$, $\Phi$ in $\Sigma$, $\lambda$, $R_\Theta$, $R_\mathcal{X}$ --- for instance, in the ridge case, $L_{f,2} = \sigma_{\max}^2(A) + \lambda = 1.3$ and $\mu = \lambda = 0.3$. For each memory size $M$ we draw $N = 100$ test parameters $\theta$, run warm-started PGD, and record the smallest $K$ at which the iterate reaches $\varepsilon$-accuracy; we report the mean and standard deviation of this stopping time.

\begin{figure*}[t]
    \centering
    \subfigure[Tikhonov, $\beta = 2$ (ridge)]{
        \includegraphics[width=0.325\linewidth]{figs/required_K_vs_M_ridge_1NN.pdf}
        \label{fig:tik_b2}}
        \hspace{-3.5mm}
    \subfigure[Tikhonov, $\beta = 4$]{
        \includegraphics[width=0.325\linewidth]{figs/required_K_vs_M_beta4_loglog.pdf}
        \label{fig:tik_b4}}
        \hspace{-3.5mm}
    \subfigure[Tikhonov, $\beta = 6$]{
        \includegraphics[width=0.325\linewidth]{figs/required_K_vs_M_beta6_loglog.pdf}
        \label{fig:tik_b6}}
    \caption{Average $K$ to reach accuracy $\varepsilon$ as a function of memory size $M$ on the Tikhonov family \eqref{Tihaov}; (a) is ridge regression. Shaded bands show $\pm 1$ standard deviation over $N = 100$ samples of $\theta$.}
    \label{fig:tikhonov_experiment}
\end{figure*}

\myparagraph{Results.}
Figure~\ref{fig:tikhonov_experiment} confirms the predicted tradeoff. In the ridge case (a), the required $K$ depends only weakly on the memory size $M$: PGD's linear convergence at rate $\rho = 1 - L_{f,2}^{-1}\mu \approx 0.77$ already drives the error down geometrically, leaving little room for amortization. This is the qualitative signature of Theorem~\ref{thm1}'s exponential memory cost --- a meaningful reduction in $K$ would require memory growing exponentially in $K$, far beyond the range tested here. The $\beta > 2$ panels (b, c) tell a markedly different story: increasing $M$ substantially reduces the required $K$, and this gain strengthens as $\beta$ grows, consistent with the polynomial scaling of Theorem~\ref{thm2}. Together, the three panels instantiate the substitution principle of Section~\ref{accel}: when the online optimizer is already fast (ridge), additional memory provides only marginal benefit; when it is slow (high $\beta$), memory becomes a much cheaper substitute for online iterations.



\fi

\section{Conclusion, Limitations, and Future Work}\label{conclusions}

This work initiates a complexity-theoretic study of memory in semi-amortized parametric optimization, asking how much offline memory is needed to accelerate an online optimizer beyond its native rate. Our bounds for the $\mu$-strongly convex and $\beta$-growth regimes reveal that memory acts as a substitute for online computation, but at an exchange rate set by the optimizer's convergence rate and the solution-map sensitivity --- amortization helps most where online computation is expensive. We view this paper as a foundation for a broader research program; several natural directions lie ahead.

\myparagraph{Limitations and future work.}
The convexity assumption could be relaxed to non-convex regimes admitting comparable convergence rates, since the meta-theorem treats the rate as a black-box input. The exact-oracle assumption could be replaced by approximate or noisy offline solutions. Parameter-dependent feasible sets, ruled out here, are central to applications such as model predictive control. The dimensional scaling $(\cdot)^{d_\Theta}$ could be tempered by exploiting low-dimensional structure on $\Theta$. Finally, learned warm-start predictors --- neural networks, meta-learning initializations, kernel methods --- raise the question of how predictor expressiveness trades off against memory complexity.

\bibliography{references}
\bibliographystyle{plain}
\appendix
\section{Oracle assumption gap between uniform grid sampling and uniform random sampling}
\label{appendixA}
All the complexity arguments presented in Theorems~\ref{thm1} and \ref{thm2} are based on constructing a cover of $\Theta$.
In Algorithm~\ref{alg:data_collector}, sampling points are drawn from
the grid coordinates of a pre-constructed uniform mesh-grid
$\Theta_{M,\mathrm{Uni}}$, which contains $M$ points in total. However, in the numerical experiment (Section \ref{experiment}), we apply a random version of uniform sampling described as below. This algorithm is easier to implement by requiring weaker oracle than Algorithm~\ref{alg:data_collector}.
\begin{algorithm}[H]
\caption{Offline $\Theta$-uniform random sampler}
\label{alg:data_collector2}
\begin{algorithmic}
\REQUIRE Sampling budget $M_{\mathrm{STOC}}$.
\STATE \textbf{Initialize:} $\mathcal{D}=\emptyset$.
\FOR{$m=1,\ldots,M_{\mathrm{STOC}}$}
    \STATE Sample $\theta_m \sim \mathrm{Uniform}(\Theta)$.
    \STATE Solve $\bx_m^\star\in\arg\min_{\bx\in\mathcal{X}}f(\bx;\theta_m)$.
    \STATE Add the data tuple $(\theta_m,\bx_m^\star)$ to dataset $\mathcal{D}$.
\ENDFOR
\STATE \textbf{Output:} Nonparametric predictor $\pi_{\mathcal{D}}(\cdot)\gets\arg\min_{\theta_m\in\Theta_{M_{\mathrm{STOC}}}} \left\{\|\cdot-\theta_m\|\right\}$.
\end{algorithmic}
\end{algorithm}
While this randomized strategy relieves the requirement of constructing $\Theta_{M,\mathrm{Uni}}$, it incurs an additional logarithmic factor in sample complexity. In particular, to form an $r$-cover of $\Theta$ with probability at least $1-\delta$, it requires on the order of
\[
M_{\mathrm{STOC}}\leq\mathcal{O}\!\left(\mathbb{M}_{\mathrm{cover}}(\Theta;r)\,\log\!\left(\mathbb{M}_{\mathrm{cover}}(\Theta;r)\right)\,\log\!\left(\tfrac{1}{\delta}\right)\right)
\]
samples, where $\mathbb{M}_{\mathrm{cover}}(\Theta;r)$ denotes the covering number of $\Theta$ at radius $r$. 
The uniform grid sampling mechanism (Algorithm~\ref{alg:data_collector}) appears to enjoy superior sample complexity guarantees (without $\log\left(\frac{1}\delta\right)$). In particular, it can construct an $r$-cover of the parameter domain $\Theta$ using on the order of $\mathcal{O}(\mathbb{M}_{\mathrm{cover}}(\Theta;r))$ samples, which is information-theoretically minimax optimal. However, constructing such a uniform grid $\Theta_{M,\mathrm{Uni}}$ implicitly requires access to a substantially stronger oracle. While Algorithm~\ref{alg:data_collector2} only assumes a sampling oracle capable of generating i.i.d.\ samples $\theta_m \in \Theta$, Algorithm~\ref{alg:data_collector} additionally requires access to geometric information about the parameter domain in order to construct an outer grid $\Theta_{\mathrm{Uni}}$ that contains $\Theta$ (see Figure \ref{fig:grid_sample}). Then, the uniform grid can be constructed by $\Theta_{M,\mathrm{Uni}}=\Theta\cap\Theta_{\mathrm{Uni}}$. These considerations make Algorithm~\ref{alg:data_collector} demanding in terms of extra assumptions.
\begin{figure}[htb]
    \centering
    \includegraphics[width=1\linewidth]{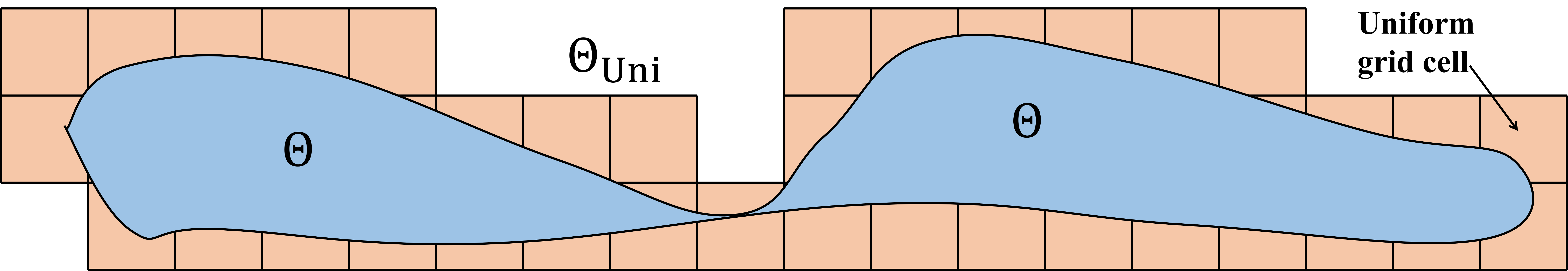}
    \caption{The uniform grid cover $\Theta_{M,\mathrm{Uni}}$ is constructed over an irregular set $\Theta$. }
    \label{fig:grid_sample}
\end{figure}

\myparagraph{Geometric regularity} In order to use Algorithm \ref{alg:data_collector2} for random sampling over $\Theta$, we need the following assumption to avoid ill-condition sets. 
\begin{assumption}[Geometric regularity (Definition 2.2 in \cite{alvarado2021geometry})]\label{a10}
We assume that for $\Theta \subset \mathbb{R}^{d_\Theta}$, its complement $\Theta^c$ has positive reach, i.e., $\mathrm{reach}(\Theta^c) \ge r$ for some $r>0$. 
\end{assumption}
Positive reach excludes pathological geometric features such as cusps and arbitrarily thin spike-like structures. In particular, it guarantees that the distance function to the boundary is well behaved and that nearest-point projections are uniquely defined in a neighborhood of the domain. As a result, the domain does not contain regions of vanishing thickness, which prevents degeneracies in covering and partition arguments (e.g., cells with arbitrarily small volume) and enables uniform control of cell diameter and volume.

Under the positive reach condition, Theorem~3.7 in \cite{alvarado2021geometry} ensures that the probabilistic covering captures the entire set with high probability, without leaving uncovered regions of negligible measure.

\myparagraph{Lower bound for covering number}

First, from the pigeon principle, we have the following lemma to describe the least number we need to construct a $r$-cover on $\Theta$.
\begin{lem}\label{lem1-}
Let $v_{d_\Theta}$ denote the Lebesgue measure of the unit ball in $\mathbb{R}^{d_\Theta}$ under the $\|\cdot\|$ norm. 
If $\{\theta_m\}_{m=1}^M \subset \Theta$ forms an $r$-cover of $\Theta$ for a given $r$, then
\[
M \ge \frac{|\Theta|}{v_{d_\Theta} r^{d_\Theta}}.
\]
Equivalently, for any given $M$ that forms an $r$-cover on $\Theta$, $r$ must satisfy
\[
r 
\ge \left(\frac{|\Theta|}{v_{d_\Theta}}\right)^{1/d_\Theta} M^{-1/d_\Theta}.
\]
\end{lem}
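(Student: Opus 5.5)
The argument is a volume-comparison (pigeonhole) bound. If $\{\theta_m\}_{m=1}^M$ is an $r$-cover of $\Theta$, then by definition every $\theta\in\Theta$ satisfies $\|\theta-\theta_m\|\le r$ for some $m$. Equivalently,
\[
\Theta\subseteq\bigcup_{m=1}^M \mathbb{B}(\theta_m,r),
\]
where $\mathbb{B}(\theta_m,r)$ is the closed ball in the norm $\|\cdot\|$ from Assumption~\ref{a2}.

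The first step is to take Lebesgue measure on both sides. Monotonicity and countable subadditivity give
\[
|\Theta|\le\sum_{m=1}^M|\mathbb{B}(\theta_m,r)|.
\]

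The second step computes the volume of each ball. Lebesgue measure is translation invariant, and it scales by $r^{d_\Theta}$ under the dilation $x\mapsto rx$. Since $\mathbb{B}(\theta_m,r)=\theta_m+r\,\mathbb{B}(0,1)$, each ball has measure $v_{d_\Theta}r^{d_\Theta}$. Here the unit ball of any norm on $\mathbb{R}^{d_\Theta}$ is a compact convex body with nonempty interior, so $0<v_{d_\Theta}<\infty$. Substituting yields $|\Theta|\le M v_{d_\Theta} r^{d_\Theta}$, which rearranges to the first claimed inequality.

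The equivalent form follows by solving this same inequality for $r$: divide by $Mv_{d_\Theta}$ and take the $d_\Theta$-th root, which is monotone on $[0,\infty)$.

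No step is genuinely hard. The only point needing care is the scaling identity for the volume of a ball in an arbitrary norm, which is standard. The bound is only meaningful when $|\Theta|>0$, which I would note as implicit in the statement; if $|\Theta|=0$ the inequality holds trivially.
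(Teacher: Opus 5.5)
Your proof is correct and follows the same volume-comparison argument as the paper: cover $\Theta$ by the $M$ balls, apply subadditivity of Lebesgue measure with $|\mathbb{B}(\theta_m,r)| = v_{d_\Theta} r^{d_\Theta}$, and then solve for $r$. You also justify the ball-volume scaling and $0<v_{d_\Theta}<\infty$ for an arbitrary norm, a detail the paper takes for granted.
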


\begin{proof}
 To form an $r$-cover, we necessarily require
\[
\Theta \subset \bigcup_{m=1}^M \mathbb{B}(\theta_m,r),
\]
where $\mathbb{B}(\theta_m,r)$ are balls centered at sampling points $\Theta_M=\{\theta_m\}_{m=1}^M\subset\Theta$. This implies
\begin{align*}
|\Theta|
\le \left|\bigcup_{m=1}^M \mathbb{B}(\theta_m,r)\right|
\le \sum_{m=1}^M |\mathbb{B}(\theta_m,r)|
= M v_{d_\Theta} r^{d_\Theta}.
\end{align*}
By rearranging terms, only $M \ge \left(\frac{|\Theta|}{v_{d_\Theta}}\right)\frac{1}{r^{d_\Theta}}$ can construct a cover with sufficient volume to cover $\Theta$. On the other hand, for a given $M$,
\[
\sum_{m=1}^M \left|\mathbb{B}\left(\theta_m,\left(\frac{|\Theta|}{v_{d_\Theta}}\right)^{1/d_\Theta} M^{-1/d_\Theta}\right)\right|
\ge |\Theta|.
\]
Otherwise, the covering fails.
\end{proof}
However, we can't claim the relationship between $R_{\Theta}$ and $|\Theta|$ without the following condition. Inspired from Proposition 6 in \cite{castellano2025data}, this condition controls the geometry of the parameter space $\Theta$.
\begin{dfn}[$\gamma$-scaled ball containment]\label{dfn2}
There exist a $\theta\in\Theta$ and $\gamma\in (0,1]$ to let,
\[\mathbb{B}(\theta,\gamma R_{\Theta})\subset \Theta.\]
\end{dfn}
Under Definition \ref{dfn2}, we can rewrite Lemma \ref{lem1-} next for a more conservative but quantified version.
\begin{lem}
Suppose the $\gamma$-scale ball containment holds, denoting $v_{d_\Theta}$ as the Lebesgue measure of the unit ball in $\mathbb{R}^{d_\Theta}$ under the $\|\cdot\|$ norm. 
If $\{\theta_m\}_{m=1}^M \subset \Theta$ forms an $r$-cover of $\Theta$ for a given $r$, then
\[
M \ge \left(\frac{\gamma R_\Theta}{r}\right)^{d_\Theta}.
\]
Equivalently, for any given $M$ that forms an $r$-cover on $\Theta$, $r$ must satisfy
\[
r 
\ge \gamma R_\Theta M^{-1/d_\Theta}.
\]
\end{lem}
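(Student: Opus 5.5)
The plan is to combine the volume (pigeonhole) bound of Lemma~\ref{lem1-} with a lower bound on $|\Theta|$ supplied by the $\gamma$-scaled ball containment of Definition~\ref{dfn2}. The argument has two steps and involves no new covering machinery.

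\textbf{Step 1: lower bound on the volume of $\Theta$.} By Definition~\ref{dfn2}, there is a $\theta\in\Theta$ such that $\mathbb{B}(\theta,\gamma R_\Theta)\subset\Theta$. Monotonicity of Lebesgue measure then gives
\[
|\Theta|\;\ge\;|\mathbb{B}(\theta,\gamma R_\Theta)|\;=\;v_{d_\Theta}(\gamma R_\Theta)^{d_\Theta}.
\]
This uses only translation invariance and the $d_\Theta$-homogeneity of Lebesgue measure under scaling of the $\|\cdot\|$-unit ball.

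\textbf{Step 2: combine with Lemma~\ref{lem1-}.} Suppose $\{\theta_m\}_{m=1}^M$ is an $r$-cover of $\Theta$. Lemma~\ref{lem1-} gives $M\ge |\Theta|/(v_{d_\Theta}r^{d_\Theta})$. Substituting the bound from Step 1, the constant $v_{d_\Theta}$ cancels and we obtain
\[
M\;\ge\;\left(\frac{\gamma R_\Theta}{r}\right)^{d_\Theta}.
\]
Rearranging, which is legitimate since $M,r>0$, gives the equivalent form $r\ge \gamma R_\Theta M^{-1/d_\Theta}$. Alternatively, one can substitute Step 1 directly into the second inequality of Lemma~\ref{lem1-}.

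\textbf{Expected difficulty.} There is no real obstacle here. The only point needing care is that the ball in Definition~\ref{dfn2} and the balls in the cover are taken in the same norm $\|\cdot\|$, so that the same $v_{d_\Theta}$ appears on both sides and cancels. This holds under Assumption~\ref{a2}, which fixes a single norm throughout.
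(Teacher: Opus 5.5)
Your proof is correct and matches the paper's approach. The paper gives no separate proof and presents this lemma as Lemma~\ref{lem1-} specialized via the $\gamma$-scaled ball containment: the bound $|\Theta|\ge v_{d_\Theta}(\gamma R_\Theta)^{d_\Theta}$ is substituted into the volume bound so that $v_{d_\Theta}$ cancels, and your point that both balls must be taken in the same norm $\|\cdot\|$ is the only detail requiring care.
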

In the subsequent lower bound constructions, we choose $\Theta$ as a ball under the norm $\|\cdot\|_2$.
\section{Meta proof framework (full version)}\label{meta proof}
In this section, we introduce a meta-theoretic framework of proving Theorem \ref{thm1} \& \ref{thm2}.  

\myparagraph{Step 1: Algorithmic convergence (upper bound).}
Assume that the PGD generates a sequence $\{\bx_k(\theta)\}_{k=1}^K$ such that, for any given parameter $\theta \in \Theta$,
\begin{equation}
\min_{k\in[K]} f(\bx_k(\theta);\theta) - f^\star(\theta)
\le
\overline{\mathcal{E}}\!\left(K,\, f(\bx_0(\theta);\theta) - f^\star(\theta)\right),
\nonumber
\end{equation}
where $\overline{\mathcal{E}}\!\left(K,\, f(\bx_0(\theta);\theta) - f^\star(\theta)\right)$ is a nonnegative function that is nonincreasing in the number of online oracle cost $K$ and nondecreasing in the initial optimality gap $f(\bx_0(\theta);\theta) - f^\star(\theta)$. 
This function characterizes both the convergence rate of the algorithm and its sensitivity to the initial optimality of the warm-start $\bx_0(\theta)$.

\myparagraph{Step 2: Parametric regularity.}
To derive the upper bound, we first formalize the Lipschitz continuity of the initial optimality gap with respect to the parameter.
\begin{lem}[Lipschitz continuity of optimality gap]\label{lem3}
Under Assumption \ref{a1}-\ref{a4},
\[|f(\bx^\star(\pi_\mathcal{D}(\theta)),\theta)-f^{\star}(\theta)|
\;\le\; 2L_\Theta\Vert\pi_\mathcal{D}(\theta)-\theta\Vert,\,\theta\in\Theta.\]
\end{lem}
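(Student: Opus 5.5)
The plan is a two-term decomposition through the intermediate value $f(\bx^\star(\theta');\theta')$, with $\theta' := \pi_\mathcal{D}(\theta)$. Set $\bx' := \bx^\star(\theta')$. Since $\bx^\star(\theta)$ minimizes $f(\cdot;\theta)$ over $\mathcal{X}$ and $\bx'\in\mathcal{X}$ by Assumption~\ref{a4}, the quantity $f(\bx';\theta)-f^\star(\theta)$ is nonnegative, so the absolute value is harmless. I would write
\[
f(\bx';\theta)-f^\star(\theta)
=\bigl[f(\bx';\theta)-f(\bx';\theta')\bigr]+\bigl[f^\star(\theta')-f^\star(\theta)\bigr],
\]
using $f(\bx';\theta')=f^\star(\theta')$.

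The first bracket is bounded directly by Assumption~\ref{a3}. For the fixed point $\bx'$, the map $\theta\mapsto f(\bx';\theta)$ is $L_\Theta$-Lipschitz, so this bracket is at most $L_\Theta\|\theta-\theta'\|$.

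The second bracket needs the optimal value function $f^\star$ to be $L_\Theta$-Lipschitz. This follows because $f^\star$ is a pointwise minimum over the common feasible set $\mathcal{X}$ (Assumption~\ref{a4}) of functions that are each $L_\Theta$-Lipschitz in $\theta$. Concretely, $f^\star(\theta')\le f(\bx^\star(\theta);\theta')\le f(\bx^\star(\theta);\theta)+L_\Theta\|\theta-\theta'\|$, and the symmetric argument gives the reverse inequality. Hence $|f^\star(\theta')-f^\star(\theta)|\le L_\Theta\|\theta-\theta'\|$.

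Adding the two bounds gives $2L_\Theta\|\pi_\mathcal{D}(\theta)-\theta\|$. The only delicate point is that the minimizers exist and the feasible set is parameter-independent. Existence comes from compactness of $\mathcal{X}$ and continuity of $f$ (Assumptions~\ref{a2}--\ref{a3}). If the feasible set varied with $\theta$, the comparison $f^\star(\theta')\le f(\bx^\star(\theta);\theta')$ would fail, so Assumption~\ref{a4} is what makes the argument go through.

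A sharper bound of $L_\Theta$ is likely achievable by comparing with $\bx^\star(\theta)$ directly. The constant $2$ is all the statement requires, so I would not pursue it.
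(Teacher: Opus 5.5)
Your proof is correct and is essentially the paper's: the same split through $f^\star(\pi_\mathcal{D}(\theta))=f(\bx^\star(\pi_\mathcal{D}(\theta));\pi_\mathcal{D}(\theta))$, with the first term bounded by Assumption~\ref{a3} and the second by the $L_\Theta$-Lipschitz continuity of $f^\star$ (the paper's Lemma~\ref{lem4}, proved by the same pointwise-minimum comparison). One correction to your closing aside: the factor $2$ cannot be lowered to $1$ in general. Take $\mathcal{X}=\Theta=[-1,1]$, $f(x;\theta)=\theta x+c(x-1)^2$ with $0<c<\delta/4$ (so $L_\Theta=1$), and $\pi_\mathcal{D}(\delta)=0$; then $x^\star(0)=1$, $x^\star(\delta)=-1$, and the gap equals $2\delta-4c$.
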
 
The proof is shown in Appendix \ref{appendix0}.

For the purpose of establishing lower bounds, we construct a parameterized function class $\underline{f}(\bx;\theta)$ that simultaneously satisfies the following conditions.
\begin{itemize}
\item  The corresponding online refinement sequence $\{\bx_k(\theta)\}_{k=1}^K$ satisfies, \[\min_{k\in[K]}\underline{f}(\bx_k(\theta);\theta) - \underline{f}^\star(\theta)
\;\geq\;
\underline{\mathcal{E}}(K,\underline{f}(\bx_0(\theta);\theta) - \underline{f}^\star(\theta)),\]where $\underline{\mathcal{E}}\!\left(K, f(\bx_0;\theta) - f^\star(\theta)\right)$ is a non-negative, decreasing function of online oracle cost $K$ and an increasing function of the initial optimality $f(\bx_0(\theta);\theta) - f^\star(\theta)$.
\item There exists a non-negative, monotonic increasing modulus $\mathcal{R}_{\mathrm{lb}}(\cdot)$ such that \[\mathcal{R}_{\mathrm{lb}}\!\left(\|\pi_{\mathcal{D}}(\theta) - \theta\|_{2}\right) \leq \underline{f}(\bx^\star(\pi_{\mathcal{D}}(\theta));\theta) - \underline{f}^\star(\theta)=\underline{f}(\bx_0(\theta);\theta) - \underline{f}^\star(\theta)  ,\,\forall \theta\in \Theta.\]
\end{itemize}
 By norm equivalence in finite dimensions (Theorem~2.1 in \cite{jin2016introduction}), packing and covering bounds under $\|\cdot\|_2$ change only up to constants with respect to $\|\cdot\|$.

\myparagraph{Step 3: Proof framework (upper bound).}
The covering distance $\|\pi_{\mathcal{D}}(\theta)-\theta\|$ is required to ensure that the last inequality in the following chain holds:
\begin{align}\label{upperchain}
\min_{k\in[K]} f(\bx_k(\theta);\theta) - f^\star(\theta)
\le{}
\overline{\mathcal{E}}\!\left(K, f(\bx^\star(\pi_\mathcal{D}(\theta));\theta) - f^\star(\theta)\right)\nonumber\le
\overline{\mathcal{E}}\!\Bigl(K,
2L_\Theta\|\pi_{\mathcal{D}}(\theta)-\theta\|
\Bigr)
\le \varepsilon 
\end{align}
We denote by $M$ as the sampling budget of a uniform grid sampling (Algorithm \ref{alg:data_collector}) required to ensure that the minimum separating distance $\|\pi_{\mathcal{D}}(\theta)-\theta\|$ is sufficiently small. This quantity characterizes the sufficient number of uniformly spaced samples needed to form a $(K,\varepsilon)$-$\Theta$-net. Consequently, a memory $M \ge M(K,\varepsilon)$ guarantees that the prescribed error tolerance $\varepsilon$ can be achieved after $K$ steps of online refinement.

\myparagraph{Step 3: Proof framework (lower bound).} For the lower bound, we construct a parameterized function class $\underline{f}(\bx_k(\theta);\theta)$ satisfying the three conditions stated above. If there exists a parameter $\theta\in\Theta$
such that the covering distance $\|\pi_{\mathcal{D}}(\theta)-\theta\|_{2}$ is sufficiently large, then
{\fontsize{9.5pt}{12pt}\selectfont{\begin{align*}
\min_{k=1,\ldots,K}\underline{f}(\bx_k(\theta);\theta) - \underline{f}^\star(\theta)
=
\underline{\mathcal{E}}\!\Bigl(K,
\underline{f}(\bx^\star(\pi_{\mathcal{D}}(\theta));\theta)
- \underline{f}^\star(\theta)\Bigr)\ge
\underline{\mathcal{E}}\!\Bigl(K,
\mathcal{R}_{\mathrm{lb}}\!\left(\|\pi_{\mathcal{D}}(\theta) - \theta\|_{2}\right)
\Bigr)
> \varepsilon .
\end{align*}}}

Consequently, any sampling mechanism with budget $M$ is insufficient to form a $(K,\varepsilon)$-$\Theta$-net. In particular, for any cover with cardinality $M$, there exists a region of $\Theta$ that remains uncovered, such that
\[
\min_{k=1,\ldots,K} \underline{f}(\bx_k(\theta);\theta) - \underline{f}^\star(\theta) > \varepsilon .
\]
Therefore, it follows that $M < M(K,\varepsilon)$.
\section{Proof of Lemma \ref{lem3}}
\label{appendix0}
First, we construct the Lipschitz continuity for $f^\star(\theta)$ as follows.
\begin{lem}[Lipschitz continuity of $f^\star(\cdot)$]\label{lem4}
Under Assumption \ref{a1}-\ref{a4}, $f^\star(\theta)$ is Lipschitz continuous with constant $L_\Theta$ such that,
\[|f^\star(\theta_1)-f^{\star}(\theta_2)|
\;\le\; L_\Theta\Vert\theta_1-\theta_2\Vert,\,\theta_1,\,\theta_2\in\Theta.\]
\end{lem}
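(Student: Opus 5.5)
The claim is that $f^\star(\theta)=\min_{\bx\in\mathcal{X}} f(\bx;\theta)$ is $L_\Theta$-Lipschitz, and the plan is the standard ``minimum of uniformly Lipschitz functions is Lipschitz'' argument. The only structural input needed is Assumption~\ref{a4}: the feasible set $\mathcal{X}$ is the same for every $\theta$, so a minimizer for one parameter is a feasible point for any other. Existence of minimizers follows because $\mathcal{X}$ is compact (Assumption~\ref{a2}) and $f(\cdot;\theta)$ is continuous (it is Lipschitz by Assumption~\ref{a3}). Hence $\bx^\star(\theta_1)$ and $\bx^\star(\theta_2)$ are well defined and lie in $\mathcal{X}$.

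Fix $\theta_1,\theta_2\in\Theta$.

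\emph{Step 1 (bound one direction).} Since $\bx^\star(\theta_2)\in\mathcal{X}$ is feasible for the problem at $\theta_1$, we get
\[
f^\star(\theta_1)\le f(\bx^\star(\theta_2);\theta_1).
\]
By the $L_\Theta$-Lipschitz continuity of $f(\bx;\cdot)$ at the fixed point $\bx=\bx^\star(\theta_2)$ (Assumption~\ref{a3}),
\[
f(\bx^\star(\theta_2);\theta_1)\le f(\bx^\star(\theta_2);\theta_2)+L_\Theta\|\theta_1-\theta_2\|.
\]
The first term on the right equals $f^\star(\theta_2)$, so
\[
f^\star(\theta_1)-f^\star(\theta_2)\le L_\Theta\|\theta_1-\theta_2\|.
\]

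\emph{Step 2 (swap roles).} Exchanging $\theta_1$ and $\theta_2$, and using $\bx^\star(\theta_1)$ as the feasible comparison point, gives the reverse inequality. Combining the two yields the absolute-value bound $|f^\star(\theta_1)-f^\star(\theta_2)|\le L_\Theta\|\theta_1-\theta_2\|$.

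There is no real obstacle here. The one point worth stating explicitly is that the argument does not use uniqueness of the minimizer and does not use convexity. It relies only on attainment of the minimum and on the $\theta$-independence of $\mathcal{X}$; without the latter, the comparison point could be infeasible.

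This lemma then feeds Lemma~\ref{lem3}. There one writes
\[
f(\bx^\star(\pi_\mathcal{D}(\theta));\theta)-f^\star(\theta)=\bigl[f(\bx^\star(\pi_\mathcal{D}(\theta));\theta)-f^\star(\pi_\mathcal{D}(\theta))\bigr]+\bigl[f^\star(\pi_\mathcal{D}(\theta))-f^\star(\theta)\bigr].
\]
The first bracket is bounded by $L_\Theta\|\pi_\mathcal{D}(\theta)-\theta\|$ via Assumption~\ref{a3} applied at the fixed point $\bx^\star(\pi_\mathcal{D}(\theta))$, since $f^\star(\pi_\mathcal{D}(\theta))=f(\bx^\star(\pi_\mathcal{D}(\theta));\pi_\mathcal{D}(\theta))$. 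The second bracket is bounded by $L_\Theta\|\pi_\mathcal{D}(\theta)-\theta\|$ by the present lemma. Together these give the factor $2L_\Theta$.
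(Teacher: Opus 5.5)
Your proof is correct and is essentially the paper's argument. The paper compresses it into one line, $|\min_{\mathbf{x}} f(\mathbf{x};\theta_1)-\min_{\mathbf{x}} f(\mathbf{x};\theta_2)|\le\max_{\mathbf{x}\in\mathcal{X}}|f(\mathbf{x};\theta_1)-f(\mathbf{x};\theta_2)|\le L_\Theta\|\theta_1-\theta_2\|$; your two-sided comparison through the common feasible set $\mathcal{X}$ is exactly what justifies that first inequality.
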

\begin{proof}
We could start from the definition of $f^\star(\theta)$ in \eqref{cpp},
\begin{align*}
|f^\star(\theta_1)-f^\star(\theta_2)|&=|\min_{\bx\in\mathcal{X}} f(\bx,\theta_1)-\min_{\bx\in\mathcal{X}}f(\bx,\theta_2)|\\
&\leq\max_{\bx\in\mathcal{X}}|f(\bx,\theta_1)-f(\bx,\theta_2)|\\
&\leq L_\Theta\|\theta_1-\theta_2\|.
\end{align*}
Hence, we finished the proof.
\end{proof}
Then, we could begin the proof. By applying Assumption \ref{a3} and Lemma \ref{lem4}, respectively, we have,
\begin{align*}
|f(\bx^\star(\pi_\mathcal{D}(\theta)),\theta)-f^{\star}(\theta)|
&\le |f(\bx^\star(\pi_\mathcal{D}(\theta)),\theta)-f^\star(\pi_\mathcal{D}(\theta))|+|f^\star(\pi_\mathcal{D}(\theta))-f^{\star}(\theta)|\\&\le 2L_\Theta\|\pi_\mathcal{D}(\theta)-\theta\|.
\end{align*}
\section{Proof of Theorem \ref{thm1}}
\label{appendix1}
\myparagraph{Complexity upper bound} First, for the gradient descent in strongly convex case (Assumption \ref{a6}), we have the following linear convergence result (see Section 9.3.1 in \cite{boyd2004convex}),
\[\min_{k\in[K]}f(\bx_k;\theta)-f^\star(\theta)\leq\rho^{K}(f(\bx_0;\theta)-f^\star(\theta)),\,\forall \bx_0\in\mathcal{X},\,\theta\in\Theta,\]
where $f(\bx_k;\theta)$ is the function value of the $k$-th iteration of projected gradient descent\footnote{To ease the burden of notation, we use $\bx_k$ rather than $\bx_k(\theta)$ for the sequence generated by PGD on $f(\bx;\theta)$.}. Then, for the online oracle cost $K$ and error tolerance $\varepsilon$, it is sufficient to let $f(\bx_0;\theta)-f^\star(\theta)\leq \varepsilon\rho^{-K}$ to meet the requirement of $(K,\varepsilon)$-$\Theta$-net. To bound $f(\bx_0;\theta)-f^\star(\theta)$, from the Lemma \ref{lem3}, we suppose $\bx_0=\bx_{m^\star}^\star,\,\theta_{m^{\star}}=\pi_{\mathcal{D}}(\theta)$, then,
\begin{align*}
f(\bx_0;\theta)-f^\star(\theta)\leq 2L_\Theta\Vert \pi_{\mathcal{D}}(\theta)-\theta\Vert,
\end{align*}
where $\pi_{\mathcal{D}}(\theta)$ is the $\theta$ chosen by the nearest-neighborhood rule. So we are sufficiently to let,
\begin{align*}
f(\bx_0;\theta)-f^\star(\theta)\leq 2L_\Theta\Vert \pi_{\mathcal{D}}(\theta)-\theta\Vert&\leq\varepsilon \rho^{-K}\\
\Vert \pi_{\mathcal{D}}(\theta)-\theta\Vert&\leq\frac{\varepsilon\rho^{-K}}{2L_\Theta},
\end{align*}
to construct a satisfactory $(K,\varepsilon)$-$\Theta$-net. If we have totally $M$-points and it is sampled by Algorithm \ref{alg:data_collector}, then the covering distance satisfies $\Vert \pi_{\mathcal{D}}(\theta)-\theta\Vert\lesssim R_{\Theta}{M}^{-1/d_\Theta}$. As a consequence, we are sufficient to ensure,
\[M^{-1/d_\Theta}\lesssim\frac{\varepsilon\rho^{-K}}{2R_{\Theta}L_\Theta}.\]
By rearranging terms, we finish the proof of the upper bound complexity $\mathcal{O}(\cdot)$.
\begin{figure}[htb]
    \centering
    \includegraphics[width=0.35\linewidth]{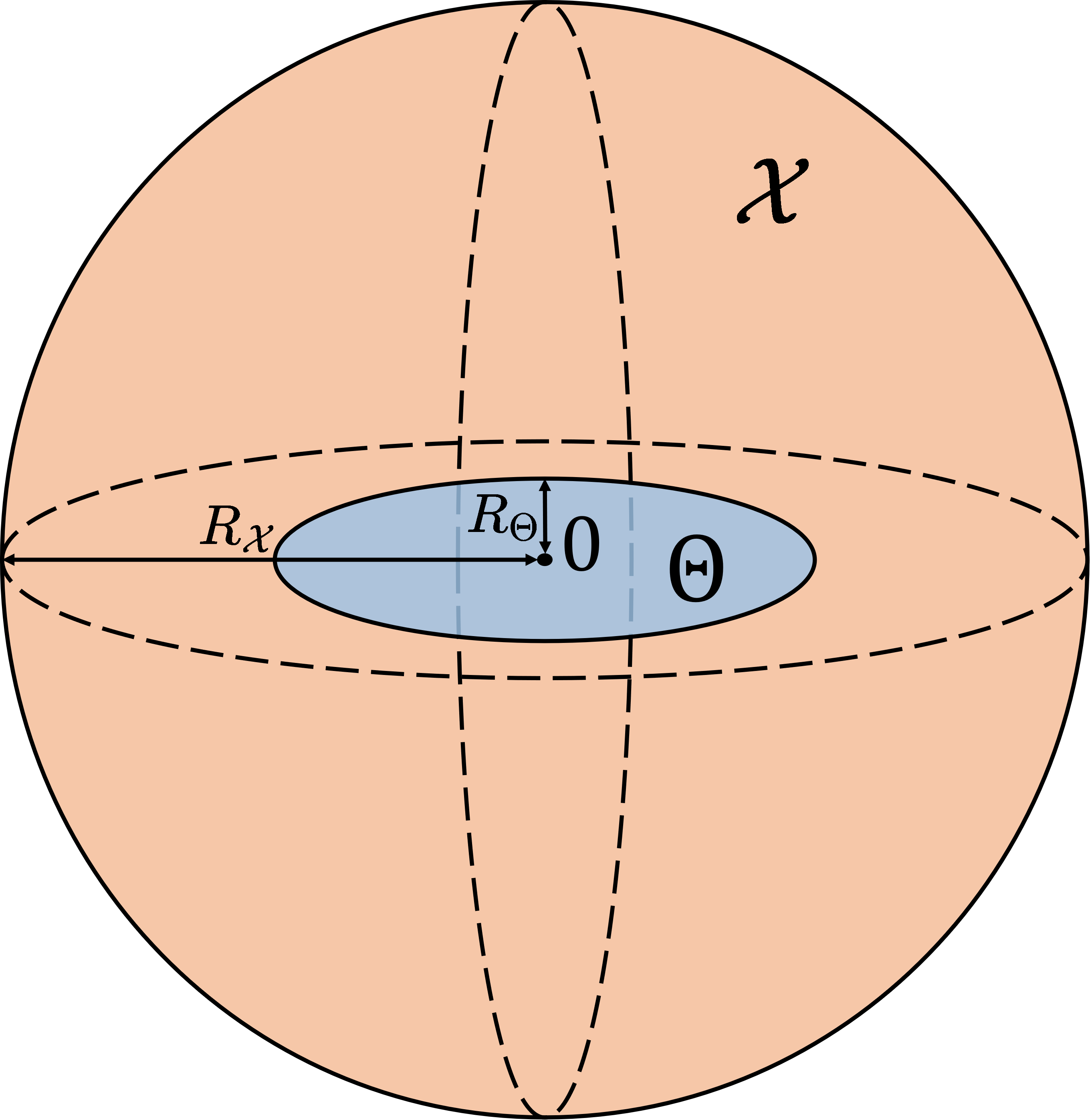}
    \caption{The examples used to establish the $\Omega(\cdot)$ lower bounds in Theorems~\ref{thm1}, \ref{thm2} are constructed based on this figure. 
Both $\mathcal{X}$ ({\color{orange}{orange}}) and $\Theta$ ({\color{blue}{blue}}) are taken to be balls centered at the origin, with dimensions $d_\Theta < d_{\mathcal{X}}$ and radii $R_{\Theta} \le R_{\mathcal{X}}$, respectively. 
Under this construction, we can design a class of functions $f(\bx;\theta)$ such that the corresponding optimal solution $\bx^\star(\theta)$ lies in $\Theta$, which ensures that the PGD trajectory induced by the warm-up solution remains confined to $\Theta$ and can be directly compared with the optimal trajectory of PGD.
}
    \label{fig:lb_example}
\end{figure}

\myparagraph{Complexity lower bound} For the lower bound $\Omega(\cdot)$, we consider the parametric function class
\[
\underline{f}(\bx;\theta)
=
\left\Vert
\bx
\right\Vert_Q^2-\frac{L_\Theta}{R_\mathcal{X}}\left\langle\bx,\begin{bmatrix}
\theta \\ \mathbf{0}
\end{bmatrix}\right\rangle,
\quad
\bx \in \mathbb{B}_2(0,R_{\mathcal{X}}) \subset \mathbb{R}^{d_{\mathcal{X}}},
\quad
\theta \in \mathbb{B}_2(0,R_{\Theta}) \subset \mathbb{R}^{d_\Theta},
\]
where $d_{\mathcal{X}} \geq d_\Theta$ and $R_{\mathcal{X}}\geq \sqrt{\frac{L_\Theta R_\Theta}{\mu }}$.\footnote{
$\mathbb{B}_Q(\bx,r)$ denotes the ball centered at $\bx$ with radius $r$ under the norm $\|\cdot\|_Q$. The same notation applies to $\mathbb{B}_Q(\theta,r)$ in the parameter space $\Theta$.
}
Here, $\|\cdot\|_Q$ denotes the Lyapunov norm defined as $\|\bx\|_Q = \sqrt{\bx^\top Q \bx}$ with $Q = \mathrm{diag}\left( \frac{\mu}{2} I_{d_\Theta},\frac{L_{f,2}}{2} I_{d_{\mathcal{X}}-d_\Theta}\right)$. Next, we check whether $\underline{f}(\bx;\theta)$ satisfies our assumptions.
\begin{itemize}
\item $L_{f,2}$-smoothness on $\bx\in\mathcal{X}$:
\[\nabla_\bx^2 \underline{f}(\bx;\theta)\preceq L_{f,2} I_{d_\mathcal{X}}. \]
\item $\mu$-strongly convex:
\[\nabla_\bx^2\underline{f}(\bx;\theta)\succeq \mu I_{d_\mathcal{X}}.\]
\item $L_\Theta$-Lipschitz on $\theta\in\Theta$:
\[\|\nabla_\theta \underline{f}(\bx;\theta)\|_2=\left\|\frac{L_\Theta}{R_\mathcal{X}}\bx\right\|_2\le L_\Theta.\]
\end{itemize}
Consequently, $\underline{f}(\bx;\theta)$ is $\mu$-strongly convex and $L_{f,2}$-smooth with respect to $\bx$. And $L_\Theta$-Lipschitz corresponding to $\theta$. The construction idea is shown in Figure \ref{fig:lb_example}. Then, we consider the warm-up gap between $\underline{f}(\bx_0;\theta)$ and $\underline{f}^\star(\theta)$ as follows. Suppose $\bx_0=\bx^\star(\pi(\theta))$ for arbitrary predictor $\pi:\Theta\to\Theta_M$, given the analytical solution $\bx^\star(\theta) = \frac{L_\Theta}{\mu R_\mathcal{X}} \begin{bmatrix} \theta \\ 0 \end{bmatrix}$. Given $R_{\mathcal{X}}\geq \sqrt{\frac{L_\Theta R_\Theta}{\mu }}$, then $\bx^\star(\theta)\in \mathbb{B}_2(0,R_{\mathcal{X}})$. So we have,
\begin{align*}
\underline{f}(\bx_0;\theta)-\underline{f}^\star(\theta)&=\frac{L_\Theta^2}{\mu^2R_\mathcal{X}^2}\| \pi(\theta)\|_Q^2-\frac{L_\Theta^2}{\mu R_\mathcal{X}^2}\left\langle\pi(\theta),\,\theta\right\rangle+\frac{L_\Theta^2}{2\mu R_\mathcal{X}^2}\|\theta\|_2^2\\&=\frac{L_\Theta^2}{2\mu R_\mathcal{X}^2}\| \pi(\theta)\|_2^2-\frac{L_\Theta^2}{\mu R_\mathcal{X}^2}\left\langle\pi(\theta),\,\theta\right\rangle+\frac{L_\Theta^2}{2\mu R_\mathcal{X}^2}\|\theta\|_2^2\\&=\frac{L_\Theta^2}{2\mu R_\mathcal{X}^2}\|\pi(\theta)-\theta\|_2^2.
\end{align*}
Since $\pi^\star$ selects $\theta_m$ to minimize 
$\underline{f}(\bx_0;\theta)-\underline{f}^\star(\theta)$, in our example, it is sufficient to choose $\pi^\star=\pi_{\mathcal{D}}$ under the Euclidean norm. 

Moreover, since $\underline{f}(\bx;\theta)$ is quadratic and the parameter-induced variation is restricted to the slowest eigenspace for all $\theta \in \mathbb{B}_2(0,R_\Theta)$, the linear convergence rate of gradient descent is tight  with rate $\rho^2$ (see Section 9.3.2 in \cite{boyd2004convex}) because,
\begin{align*}
\bx_{k+1}
&=
\bx_k-\frac{1}{L_{f,2}}\nabla f(\bx_k)\\
\bx_{k+1}
&=
\bx_k-\frac{\mu}{L_{f,2}}\left(\bx_k-\frac{L_\Theta}{\mu R_\mathcal{X}} \begin{bmatrix} \theta \\ 0 \end{bmatrix}\right)\\
\bx_{k+1}-\frac{L_\Theta}{\mu R_\mathcal{X}} \begin{bmatrix} \theta \\ 0 \end{bmatrix}&=
\left(1-\frac{\mu}{L_{f,2}}\right)\left(\bx_k-\frac{L_\Theta}{\mu R_X} \begin{bmatrix} \theta \\ 0 \end{bmatrix}\right)\\
\left\|\bx_{k+1}-\frac{L_\Theta}{\mu R_\mathcal{X}} \begin{bmatrix} \theta \\ 0 \end{bmatrix}\right\|_2&=
\left|1-\frac{\mu}{L_{f,2}}\right|\left\|\bx_k-\frac{L_\Theta}{\mu R_\mathcal{X}} \begin{bmatrix} \theta \\ 0 \end{bmatrix}\right\|_2.
\end{align*}
because $\bx_k$ always with in the slow subspace of convergence, we have,
\[\underline{f}(\bx_k;\theta)-\underline{f}^\star(\theta)=\frac{\mu}2\left\|\bx_k-\frac{L_\Theta}{\mu R_\mathcal{X}} \begin{bmatrix} \theta \\ 0 \end{bmatrix}\right\|_2^2=\frac{L_\Theta^2\rho^{2K}}{2\mu R_\mathcal{X}^2}\|\pi_\mathcal{D}(\theta)-\theta\|_2^2.\]
Then, if $M$ points are sampled in $\Theta$, there exists $\theta\in\Theta$ that,
\[\left\Vert \pi_{\mathcal{D}}(\theta)-\theta \\ \right\Vert_2\gtrsim R_\Theta M^{-1/d_\Theta}.\]
Thus, we have,
\[\min_{k\in[K]}\underline{f}(\bx_k;\theta)-\underline{f}^\star(\theta)=\frac{L_\Theta^2 \rho^{2K}}{2\mu R_\mathcal{X}^2}\left\Vert \pi_{\mathcal{D}}(\theta)-\theta \\ \right\Vert_2^2\gtrsim \frac{L_\Theta^2 R_\Theta^2 M^{-2/d_\Theta} \rho^{2K}}{2\mu R_\mathcal{X}^2}>\varepsilon.\]
So, $M\lesssim\left(\frac{L_\Theta^2 R_{\Theta}^2}{\mu R_{\mathcal{X}}^2\rho^{-2K}\varepsilon}\right)^{d_\Theta/2}$ is clearly unacceptable in the requirement of $(K,\varepsilon)$-$\Theta$-net. So we have the memory complexity $M(K,\varepsilon)$ is lower bounded by,
\[\Omega\left(\left(\frac{L_\Theta R_\Theta}{R_\mathcal{X}}\right)^{d_\Theta}\left(\frac{1}{\mu\rho^{-2K}\varepsilon}\right)^{d_\Theta/2}\right)=\Omega\left(\left({L_\Theta R_\Theta}\right)^{d_\Theta}\left(\frac{1}{\rho^{-2K}\varepsilon}\right)^{d_\Theta/2}\right),\] where $\rho=1-\mu/L_{f,2}$.
\begin{rem}[Norm change in the lower bound proof]
In this paper, all lower bounds of memory complexities are derived by regularizing the distance $\|\pi^\star(\theta)-\theta\|_2$ under the Euclidean norm. For a general norm $\|\cdot\|$, the corresponding covering numbers remain of the same order due to norm equivalence in finite-dimensional spaces (see Theorem~2.1 in \cite{jin2016introduction}).
\end{rem}
\section{Results of general convex case}\label{appendix2}
 The qualitative memory–computation tradeoff for the general convex smooth case is characterized in the following Theorem \ref{thm3}.
\begin{thm}\label{thm3}
Under Assumptions~\ref{a1}--\ref{a4}, given the online oracle cost $K$, the memory
complexity sufficient to guarantee error tolerance $\varepsilon$ satisfies
\[
M(K,\varepsilon)
\le
\mathcal O\!\left(
\left(
\frac{R_\Theta L_\Theta}{K\varepsilon^2}
\right)^{d_\Theta}
\right).
\]
In addition, when restricted to the subclass of $\beta$-growth convex problems, the memory complexity in the general convex setting must satisfy all lower bounds established for $\beta>2$, that is,
\[
M(K,\varepsilon)
\ge
\Omega\!\left((\min\{L_{f,2},L_\Theta\}R_\Theta^2)^{\frac{d_\Theta}{\beta}}
\left[
\left(
\frac{1}{\varepsilon}
\right)^{\frac{\beta-2}{\beta}}
-C_2K
\right]_+^{\frac{d_\Theta}{\beta-2}}
\right),\,\forall \beta>2,
\]
\end{thm}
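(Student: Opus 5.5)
The plan is to instantiate the meta proof framework of Appendix~\ref{meta proof} separately for the two halves of Theorem~\ref{thm3}.

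\textbf{Upper bound.} For Step~1 I will use the standard sublinear guarantee for PGD with stepsize $L_{f,2}^{-1}$ on a convex $L_{f,2}$-smooth objective, but in a form that depends on the initial \emph{function gap} rather than the initial distance. Write $\delta_k := f(\bx_k;\theta)-f^\star(\theta)$. By convexity and compactness,
\[
\delta_k \le \|\nabla f(\bx_k)\|_2\,\mathrm{dist}_2(\bx_k,\arg\min f)\le 2R_\mathcal{X}\|\nabla f(\bx_k)\|_2 .
\]
For the projected method I will use the gradient mapping $G(\bx_k)=L_{f,2}(\bx_k-\bx_{k+1})$ in place of $\nabla f$. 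The sufficient decrease $\delta_k-\delta_{k+1}\ge \|G(\bx_k)\|_2^2/(2L_{f,2})$ then telescopes to
\[
\min_{k<K}\|G(\bx_k)\|_2^2 \le \frac{2L_{f,2}\,\delta_0}{K}.
\]
I will then relate the gap to the gradient mapping through $\delta_{k+1}\lesssim R_\mathcal{X}\|G(\bx_k)\|_2$, using the standard PGD inequality and $\mathcal{X}$ having diameter $2R_\mathcal{X}$. Combining,
\[
\min_{k\in[K]}\delta_k \lesssim R_\mathcal{X}\sqrt{L_{f,2}\,\delta_0/K},
\]
so $\overline{\mathcal{E}}(K,\delta_0)=C\sqrt{\delta_0/K}$.

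Lemma~\ref{lem3} gives $\delta_0\le 2L_\Theta\|\pi_\mathcal{D}(\theta)-\theta\|$. The chain \eqref{upperchain} therefore holds as soon as the covering radius satisfies
\[
\|\pi_\mathcal{D}(\theta)-\theta\|\lesssim \frac{K\varepsilon^2}{L_\Theta}.
\]
The grid of Algorithm~\ref{alg:data_collector} has radius $\lesssim R_\Theta M^{-1/d_\Theta}$, so it suffices that
\[
M\gtrsim\left(\frac{R_\Theta L_\Theta}{K\varepsilon^2}\right)^{d_\Theta},
\]
with problem constants absorbed into $\mathcal{O}$, exactly as in the upper half of Theorem~\ref{thm1}. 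The only delicate point is why I use the function-gap route. The cheaper $\|\bx_0-\bx^\star\|^2/K$ rate would need a solution-map sensitivity bound that is unavailable without growth, whereas Lemma~\ref{lem3} controls only function values.

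\textbf{Lower bound.} This half is essentially a containment argument. Any $\beta$-growth problem satisfying the hypotheses of Theorem~\ref{thm2} also satisfies Assumptions~\ref{a1}--\ref{a4}. Hence a $(K,\varepsilon)$-$\Theta$-net valid for the whole convex smooth class is in particular valid for the hard subclass. So $M(K,\varepsilon)$ for the general class is at least the lower bound of Theorem~\ref{thm2}, for every $\beta>2$.

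The one thing to check is the prefactor. I will invoke the full, unsimplified lower-bound statement of Theorem~\ref{thm2} from Appendix~\ref{pfthm2}, which does not assume $L_\Theta\le L_{f,2}$. In that hard construction the relevant scale is governed by $\min\{L_{f,2},L_\Theta\}$, which is where the factor $(\min\{L_{f,2},L_\Theta\}R_\Theta^2)^{d_\Theta/\beta}$ comes from; $C_2$ is the constant hidden in the $\mathcal{O}(K)$ term. Taking the supremum over $\beta$ is unnecessary, since the statement holds for each $\beta>2$ separately.

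The main obstacle I expect is Step~1 of the upper bound: obtaining a $\sqrt{\delta_0/K}$-type rate for \emph{projected} GD expressed in the initial gap. If relating $\delta_{k+1}$ to the gradient mapping proves awkward, I would instead bound the gap by the descent lemma directly, at the cost of constants only.
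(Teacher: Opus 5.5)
Your upper bound is essentially the paper's argument. You telescope the PGD descent lemma to control the smallest step (your gradient mapping $G(\bx_k)=L_{f,2}(\bx_k-\bx_{k+1})$), use a convexity/normal-cone inequality to bound the gap by step length times the diameter of $\mathcal{X}$, then apply Lemma~\ref{lem3} and the grid covering. The one difference is that your form $\delta_{k+1}\lesssim R_\mathcal{X}\|G(\bx_k)\|_2$ has no extra $\delta_0/K$ term, so you avoid the quadratic solve and the first-order Taylor step (iii) that the paper uses.

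The lower-bound half has a genuine gap. The containment logic is valid in itself, and the statement's wording invites it, but in this paper the dependency runs the other way. The proof of Theorem~\ref{thm4} says its lower bound follows the proof of Theorem~\ref{thm3} and obtains it by directly substituting \eqref{lbthm3}, a display derived inside the very proof you are asked to give. The bound in Theorem~\ref{thm3} is also verbatim the one in Theorem~\ref{thm4} (same prefactor, $C_2$, $c_{\mathrm{bound}}$). So citing Theorem~\ref{thm2} is circular, and your proposal contains no argument for the lower bound.

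What is missing is the hard-instance analysis:
\begin{itemize}
\item Take $\underline f(\bx;\theta)=\frac{\min\{L_{f,2},L_\Theta\}}{p(p-1)(R_\mathcal{X}+R_\Theta)^{p-2}}\|\bz\|_2^p$ with $\bz=\bx-(R_\mathcal{X}+R_\Theta)^{-1}[\theta;\mathbf{0}]$ on Euclidean balls.
\item Check $L_{f,2}$-smoothness (the top Hessian eigenvalue is $\min\{L_{f,2},L_\Theta\}\,(\|\bz\|_2/(R_\mathcal{X}+R_\Theta))^{p-2}$), $L_\Theta$-Lipschitzness in $\theta$, and $p$-growth.
\item Note that the initial gap is proportional to $\|\pi(\theta)-\theta\|_2^p$, so nearest neighbour is optimal among all predictors.
\item Show the iterates stay interior, so PGD is plain gradient descent with radial dynamics $\|\bz_{k+1}\|_2=(1-s_k)\|\bz_k\|_2$, where $s_k=\frac{L_{f,\Theta}}{p-1}\bigl(\|\bz_k\|_2/(R_\mathcal{X}+R_\Theta)\bigr)^{p-2}$ and $L_{f,\Theta}=\min\{L_{f,2},L_\Theta\}/L_{f,2}$.
\end{itemize}

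The key step is an \emph{upper} bound on the per-step increase of $\|\bz_k\|_2^{-(p-2)}$ by $c_{\mathrm{bound}}L_{f,\Theta}\frac{p-2}{p-1}(R_\mathcal{X}+R_\Theta)^{2-p}$. For $p\ge 3$ this comes from Bernoulli's inequality. For $p\in(2,3)$ it comes from $(1-s)^{-(p-2)}-1\le(p-2)s(1-s)^{1-p}$, which is where $c_{\mathrm{bound}}$ originates.

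Telescoping then gives $\underline f(\bx_K;\theta)-\underline f^\star(\theta)\gtrsim\bigl(\|\bz_0\|_2^{-(p-2)}+\mathcal{O}(K)\bigr)^{-p/(p-2)}$; the minimum over $k$ sits at $k=K$ by monotonicity. You invert this into a threshold on $\|\pi_\mathcal{D}(\theta)-\theta\|_2$. You then combine it with the volume argument of Appendix~\ref{appendixA}, which produces some $\theta$ with $\|\pi_\mathcal{D}(\theta)-\theta\|_2\gtrsim R_\Theta M^{-1/d_\Theta}$. This is what generates the $[\,\varepsilon^{-(\beta-2)/\beta}-C_2K\,]_+^{d_\Theta/(\beta-2)}$ form.
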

\noindent
where
$C_2 = \frac{c_{\mathrm{bound}} \, \beta (\beta - 2)\min\{L_{f,2}, L_\Theta\}}{(\min\{L_{\Theta},L_{f,2}\}R_{\Theta}^2)^{\frac{\beta-2}{\beta}}L_{f,2}}$,
with
$c_{\mathrm{bound}} =
\left(1-\frac{L_{f,\Theta}}{\beta-1}\left(0.865R_\Theta\right)^{\beta-2}\right)^{-2}$
when $\beta \in (2,3)$, and
$
c_{\mathrm{bound}} = 2$
otherwise.
\begin{proof}
\myparagraph{Complexity upper bound} We could prove this theorem from a well-known inequality (Theorem 2.3 in \cite{beck2010gradient}),
\begin{align}
\min_{k\in[K]}\Vert \bx_{k}-\bx_{k-1}\Vert_2&\leq \sqrt{\frac{2(f(\bx_0;\theta)-f^\star(\theta))}{L_{f,2}K},
}\label{gua1}
\end{align}
 where \eqref{gua1} is proved by summing the descent lemma of PGD (Lemma 2.6 in \cite{beck2010gradient}), 
 \[f(\bx_{k};\theta)\leq f(\bx_{k-1};\theta)-\frac{L_{f,2}}{2}\|\bx_{k}-\bx_{k-1}\|_2^2,\,K\geq k\geq1.\]
 from $1$ to $K$. The formula above holds for smooth $f(\bx;\theta)$ respect to $\bx$ (Assumption \ref{a3}). First, for any given initialization $\bx_0\in\mathcal{X}$, the PGD descent sequence $\bx_{k-1}$ should satisfy,
\begin{equation}
\begin{aligned}
f^\star(\theta)&\geq f(\bx_{k-1};\theta)+\nabla_{\bx}^{\top} f(\bx_{k-1},\theta) (\bx^{\star}-\bx_{k-1})\\
\nabla_{\bx}^{\top} f(\bx_{k-1},\theta) (\bx_{k-1}-\bx^{\star})&\geq f(\bx_{k-1};\theta)-f^\star(\theta),\label{convex}
\end{aligned}
\end{equation}
which is obtained from the convexity of $\bx$ in $f(\bx;\theta)$. Then from the $L_{f,2}$-smooth of $f(\bx;\theta)$ (Lemma 2.3 in \cite{beck2010gradient}), we know,
\begin{align}\label{smooth}
f(\bx_{k};\theta)\leq f(\bx_{k-1};\theta)+\nabla_{\bx}^{\top} f(\bx_{k-1},\theta) (\bx_{k}-\bx_{k-1})+\frac{L_{f,2}}{2}\|\bx_{k}-\bx_{k-1}\|_2^2.
\end{align}
By summing \eqref{convex} and \eqref{smooth}, we obtain,
\[f(\bx_{k};\theta)-f^\star(\theta)\leq\nabla_{\bx}^{\top} f(\bx_{k-1},\theta) (\bx_{k}-\bx^\star)+\frac{L_{f,2}}{2}\|\bx_{k}-\bx_{k-1}\|_2^2.\]
We next use the PGD to give an upper bound on $\nabla_{\bx}^{\top} f(\bx_{k-1},\theta) (\bx_{k}-\bx^\star)$. From the Proposition 6.47 in \cite{bauschke2020correction}, the iteration process of PGD satisfies,
\begin{align*}
\bx_{k-1}-L_{f,2}^{-1}\,\nabla_{\bx} f(
\bx_{k-1};\theta)-\bx_{k}\in\mathcal{N}_{\mathcal{X}}(\bx_{k}).
\end{align*}
As a consequence, from the definition of normal cone $\mathcal{N}_{\mathcal{X}}(\bx_{k})$, we have,
\begin{align*}
\langle \bx_{k-1}-L_{f,2}^{-1}\,\nabla_{\bx} f(
\bx_{k-1};\theta)-\bx_{k},\,\bx_{k}-\bx^\star\rangle&\geq 0\\
L_{f,2}\langle \bx_{k-1}-\bx_{k},\,\bx_{k}-\bx^\star\rangle&\geq \langle\nabla_{\bx} f(
\bx_{k-1};\theta),\,\bx_{k}-\bx^\star\rangle
\end{align*}
So,
\[\frac{L_{f,2}}{2}\|\bx_{k}-\bx_{k-1}\|_2^2+L_{f,2}(\bx_{k-1}-\bx_{k})^{\top}(\bx_{k}-\bx^\star)\geq  f(\bx_{k};\theta)-f^\star(\theta).\]
Finally, denoting $k^\star=\arg\min_{k\in[K]}\Vert\bx_{k}-\bx_{k-1}\Vert_2$, by the Cauchy-Schwartz inequality, we obtain the upper bound of $f(\bx_{k^\star};\theta)-f^\star(\theta)$ as follows.
\begin{equation}
\begin{aligned}
 \frac{L_{f,2}}{2}\|\bx_{k^\star}-\bx_{k^\star-1}\|_2^2+L_{f,2}\|\bx_{k^\star}-\bx_{k^\star-1}\|_2\Vert \bx_{k^\star}-\bx^{\star}\Vert_2 &\geq f(\bx_{k^\star};\theta)-f^\star(\theta)\\
\frac{f(\bx_0;\theta)-f^\star(\theta)}{K}+\sqrt{\frac{2L_{f,2}(f(\bx_0;\theta)-f^\star(\theta))}{ K}} \Vert \bx_{k^\star}-\bx^{\star}\Vert_2 &\overset{(i)}{\geq} f(\bx_{k^\star};\theta)-f^\star(\theta)\\
\frac{f(\bx_0;\theta)-f^\star(\theta)}{K}+2\sqrt{\frac{2L_{f,2}(f(\bx_0;\theta)-f^\star(\theta))}{ K}} c_{\mathcal{X}}R_{\mathcal{X}} &\overset{(ii)}{\geq} f(\bx_{k^\star};\theta)-f^\star(\theta)\label{stage1result}
\end{aligned}
\end{equation}
In $(i)$, we apply \eqref{gua1} on $\|\bx_{k^\star}-\bx_{k^\star-1}\|_2$. Next, by norm equivalence, there exists a constant $c_{\mathcal{X}} > 0$ such that
\begin{align}\label{norm_change}
\|\bx\|_2 \le c_{\mathcal{X}} \|\bx\|, \quad \forall \bx \in \mathcal{X}.
\end{align}
By apply \eqref{norm_change} to the radius of $\mathcal{X}$ under the $\|\cdot\|$ norm (Assumption \ref{a2}), we obtain $(ii)$.

In order to meet the requirement of $(K,\varepsilon)$-$\Theta$-net, from \eqref{stage1result}, we are sufficiently to let,
\begin{align}\label{sufficient_condition_convex}
\frac{f(\bx_0;\theta)-f^\star(\theta)}{K}+2\sqrt{\frac{2L_{f,2}(f(\bx_0;\theta)-f^\star(\theta))}{K}} c_{\mathcal{X}}R_{\mathcal{X}}\leq \varepsilon.\end{align}
Considering the change of variable, we denote $s=\sqrt{\frac{f(\bx_0;\theta)-f^\star(\theta)}{K}}$. Then, \eqref{sufficient_condition_convex} becomes,
\begin{align}
s^2+2c_\mathcal{X}R_\mathcal{X}\sqrt{2L_{f,2}}s\leq \varepsilon\label{stage2quadratic}
\end{align}
When $s=0$, the above quadratic inequality \eqref{stage2quadratic} is satisfied naturally. Thus, by solving \eqref{stage2quadratic} above, \eqref{sufficient_condition_convex} holds is equivalent to, 
\begin{align*}
\sqrt{\frac{f(\bx_0;\theta)-f^\star(\theta)}{K}}&\leq -c_\mathcal{X}R_\mathcal{X}\sqrt{2L_{f,2}}+\sqrt{2L_{f,2}c_\mathcal{X}^2R_\mathcal{X}^2+\varepsilon}\\
f(\bx_0;\theta)-f^\star(\theta)&\leq K\left(-c_\mathcal{X}R_\mathcal{X}\sqrt{2L_{f,2}}+\sqrt{2L_{f,2}c_\mathcal{X}^2R_\mathcal{X}^2+\varepsilon}\right)^2
\end{align*}
Then, by applying Lemma \ref{lem3} again, we derive the sufficient order of $M(K,\varepsilon)$ to guarantee the $(K,\varepsilon)$-$\Theta$-net for a general convex function is shown as follows.
\begin{align*}
f(\bx_0;\theta)-f^\star(\theta)\leq2L_\Theta\Vert \theta-\pi_{\mathcal{D}}(\theta)\Vert &\leq K\left(-c_\mathcal{X}R_\mathcal{X}\sqrt{2L_{f,2}}+\sqrt{2L_{f,2}c_\mathcal{X}^2R_\mathcal{X}^2+\varepsilon}\right)^2\\
2L_\Theta R_{\Theta}M^{-1/d_\Theta}&\lesssim K\left(-c_\mathcal{X}R_\mathcal{X}\sqrt{2L_{f,2}}+\sqrt{2L_{f,2}c_\mathcal{X}^2R_\mathcal{X}^2+\varepsilon}\right)^2\\
M&\gtrsim  \left(\frac{2R_{\Theta}L_\Theta}{K\bigl(\underbrace{-c_\mathcal{X}R_\mathcal{X}\sqrt{2L_{f,2}}+\sqrt{2L_{f,2}c_\mathcal{X}^2R_\mathcal{X}^2+\varepsilon}}_{(\mathrm{I})}\bigr)^2}\right)^{d_\Theta}\\&\overset{(iii)}{\sim} \left(\frac{16L_{f,2}c_\mathcal{X}^2R_\mathcal{X}^2R_{\Theta}L_\Theta}{K\varepsilon^2}\right)^{d_\Theta},
\end{align*}
where $(iii)$ is obtained by considering the first-order Taylor approximation on (I). Hence, we finished the proof of the upper bound $\mathcal{O}(\cdot)$. 

\myparagraph{Complexity lower bound} We consider a family of $p$-growth ($p=\beta$) parameterized functions class, 
\begin{align}\label{polynomialclass}
\underline{f}(\bx;\theta)
= \frac{\min\{L_{f,2},L_\Theta\}}{p(p-1)(R_{\mathcal{X}}+R_\Theta)^{p-2}}\left\Vert \bx - \frac{1}{R_\mathcal{X}+R_\Theta}\begin{bmatrix}\theta \\ \mathbf{0}\end{bmatrix} \right\Vert_2^p,p\in(2,+\infty),
\end{align}
where $\bx \in \mathbb{B}_2(0,R_{\mathcal{X}})\subset\mathbb{R}^{d_{\mathcal{X}}}$,
$\theta \in \mathbb{B}_2(0,R_{\Theta})\subset\mathbb{R}^{d_\Theta}$, 
with $R_{\mathcal{X}} > R_{\Theta}$ (strict) and $R_\mathcal{X}+R_{\Theta}\geq 2.314$. $\begin{bmatrix}\theta \\ \mathbf{0}\end{bmatrix} \in \mathbb{R}^{d_{\mathcal{X}}}$ denotes the vector obtained by zero-padding $\theta$ to dimension $d_{\mathcal{X}}$.
In \eqref{polynomialclass}, if $p \in [1,2]$, PGD enjoys either linear convergence or even finite-step convergence (for $p=1$) \citep{li2018calculus}. 
In contrast, we focus on the function class $p > 2$ with sublinear convergence, where the minimizer
$\bx^\star(\theta) = \frac{1}{R_\mathcal{X}+R_\Theta}\begin{bmatrix}\theta \\ \mathbf{0}\end{bmatrix}$
is well-defined and smooth, while the convergence rate becomes sublinear, which is essential for our analysis. Similar to the proof of Theorem~\ref{thm1}, for any predictor $\pi:\Theta\to\Theta_M$, by assigning $\bx_0=\bx^\star(\pi(\theta))$, we have,
\begin{align*}\underline{f}(\bx_0;\theta)-f^\star(\theta)&=\frac{\min\{L_{f,2},L_\Theta\}}{p(p-1)(R_{\mathcal{X}}+R_\Theta)^{p-2}}\left\Vert  \frac{1}{R_\mathcal{X}+R_\Theta}\begin{bmatrix}\pi(\theta) \\ \mathbf{0}\end{bmatrix}- \frac{1}{R_\mathcal{X}+R_\Theta}\begin{bmatrix}\theta \\ \mathbf{0}\end{bmatrix} \right\Vert_2^p\\&=\frac{\min\{L_{f,2},L_\Theta\}}{p(p-1)(R_{\mathcal{X}}+R_\Theta)^{2p-2}}\left\Vert  \pi(\theta) - \theta  \right\Vert_2^p,\,\forall p\in(2,+\infty).\end{align*}
Thus, minimizing the initial gap $\underline{f}(\bx_0;\theta)-f^\star(\theta)$
is equivalent to solving $\pi_\mathcal{D}(\theta)\in\min_{\theta_m\in\Theta_M} \|\theta_m - \theta\|_2$.
Consequently, the nearest-neighbor predictor $\pi_{\mathcal{D}}$
is the optimal predictor $\pi^\star$ in this setting.
The regularized parameters $p(p-1)(R_{\mathcal{X}}+R_\Theta)^{p-2}$ in the denominator of each function class in \eqref{polynomialclass} are designed to ensure,
Let $\bz := \bx - \frac{1}{R_\mathcal{X}+R_\Theta}\begin{bmatrix}\theta \\ \mathbf{0}\end{bmatrix}$. Then
\[
\nabla_{\bx}^2 \underline{f}(\bx;\theta)
=
\frac{\min\{L_{f,2},L_\Theta\}}{(p-1)(R_{\mathcal{X}}+R_\Theta)^{p-2}}
\left( \|\bz\|_2^{p-2} I + (p-2)\|\bz\|_2^{p-4} \bz\bz^\top \right).
\]
Since the spectral norm of the bracketed term equals $(p-1)\|\bz\|_2^{p-2}$, we obtain
\[
\|\nabla_{\bx}^2 \underline{f}(\bx;\theta)\|_2
\overset{(iv)}{=}
\frac{\min\{L_{f,2},L_\Theta\}}{(R_{\mathcal{X}}+R_\Theta)^{p-2}}
\|\bz\|_2^{p-2}
\le L_{f,2}.
\]
where $(iv)$ from the fact that eigenvalues of a rank-one perturbation of a scalar multiple of the identity can be computed explicitly.
\[\lambda_{\max}
= \left\| \bx - \frac{1}{R_\mathcal{X}+R_{\Theta}}\begin{bmatrix}\theta \\ 0\end{bmatrix} \right\|^{p-2}
+ (p-2)\left\| \bx - \frac{1}{R_\mathcal{X}+R_{\Theta}}\begin{bmatrix}\theta \\ 0\end{bmatrix} \right\|^{p-2}
= (p-1)\left\| \bx - \frac{1}{R_\mathcal{X}+R_{\Theta}}\begin{bmatrix}\theta \\ 0\end{bmatrix} \right\|^{p-2}.\]
And,
\begin{align*}
\|\nabla_\theta \underline{f}(\bx;\theta)\|_2&\leq\frac{\min\{L_{f,2},L_\Theta\}}{p(p-1)(R_{\mathcal{X}}+R_\Theta)^{p-2}}\frac{p}{R_{\mathcal{X}}+R_\Theta}\left(R_\mathcal{X}+\frac{R_\Theta}{R_{\mathcal{X}}+R_\Theta}\right)^{p-1}\\&\leq\frac{\min\{L_{f,2},L_\Theta\}}{p(p-1)(R_{\mathcal{X}}+R_\Theta)^{p-2}}\frac{p}{R_{\mathcal{X}}+R_\Theta}(R_\mathcal{X}+R_\Theta)^{p-1}\\
&\leq\frac{\min\{L_{f,2},L_\Theta\}}{p-1}\leq L_\Theta,
\end{align*}
the last step holds because $p>2$. Then, since
\begin{align}\label{radiusbound}
\sup_{\bx\in\mathbb{B}_2(0,R_{\mathcal{X}}),\,\theta\in\mathbb{B}_2(0,R_{\Theta})}
\left\Vert \bx - \frac{1}{R_\mathcal{X}+R_{\Theta}}\begin{bmatrix}\theta \\ \mathbf{0}\end{bmatrix} \right\Vert_2
\leq R_{\mathcal{X}} + R_{\Theta},
\end{align}
we obtain $(v)$. The normalization factor $p(p-1)(R_{\mathcal{X}}+R_{\Theta})^{p-2}$ in the denominator of \eqref{polynomialclass} is introduced to ensure that, for each $p$, the function class is $L_{f,2}$-smooth (Assumption~\ref{a3}); in particular, it exactly offsets the worst-case growth of the Hessian at the boundary of the domain $\mathbb{B}_2(0,R_\mathcal{X})$.

For the parameterized function class \eqref{polynomialclass}, the resulting PGD dynamics exhibit sublinear convergence. Since this convex family encompasses both strongly convex functions and functions satisfying a second-order growth condition, the associated complexity lower bound is expected to be no smaller than that in Theorem~\ref{thm1}. In this proof, we characterize the lower bound within this boarder class.

By fixing a $p\in(2,+\infty)$, we firstly derive the gradient on $\underline{f}(\bx;\theta)$,
\[\nabla_{\bx}\underline{f}(\bx;\theta)=\frac{\min\{L_{f,2},L_\Theta\}}{(p-1)(R_\mathcal{X}+R_\Theta)^{p-2}}\left\Vert\bx-\frac{1}{R_\mathcal{X}+R_{\Theta}}\begin{bmatrix}\theta \\ \mathbf{0}\end{bmatrix}\right\Vert_2^{p-2}\left(\bx-\frac{1}{R_\mathcal{X}+R_{\Theta}}\begin{bmatrix}\theta \\ \mathbf{0}\end{bmatrix}\right).\]
So with the definition of PGD in Definition \ref{dfn2}, with the centralization of $\frac{1}{R_\mathcal{X}+R_{\Theta}}\begin{bmatrix}\theta \\ \mathbf{0}\end{bmatrix}$, denoting $L_{f,\Theta}=\frac{\min\{L_{f,2},L_\Theta\}}{L_{f,2}}\leq1$, the gradient descent process can be described as follows,
\begin{align*}
\bx_{k+1}&=\bx_{k}-L_{f,2}^{-1}\nabla_{\bx}\underline{f}(\bx_k;\theta)\\
\bx_{k+1}-\frac{1}{R_\mathcal{X}+R_{\Theta}}\begin{bmatrix}\theta \\ \mathbf{0}\end{bmatrix}&=\bx_{k}-\frac{1}{R_\mathcal{X}+R_{\Theta}}\begin{bmatrix}\theta \\ \mathbf{0}\end{bmatrix}-L_{f,2}^{-1}\nabla_{\bx}\underline{f}(\bx_k;\theta)\\
\bx_{k+1}-\frac{1}{R_\mathcal{X}+R_{\Theta}}\begin{bmatrix}\theta \\ \mathbf{0}\end{bmatrix}&=\left(1-\frac{L_{f,\Theta}(R_{\mathcal{X}}+R_\Theta)^{2-p}}{p-1}\left\Vert\bx_k-\frac{1}{R_\mathcal{X}+R_{\Theta}}\begin{bmatrix}\theta \\ \mathbf{0}\end{bmatrix}\right\Vert_2^{p-2}\right)\left(\bx_k-\frac{1}{R_\mathcal{X}+R_{\Theta}}\begin{bmatrix}\theta \\ \mathbf{0}\end{bmatrix}\right)\\
\left\|\bx_{k+1}-\frac{1}{R_\mathcal{X}+R_{\Theta}}\begin{bmatrix}\theta \\ \mathbf{0}\end{bmatrix}\right\|_2&=\underbrace{\left|1-\frac{L_{f,\Theta}(R_{\mathcal{X}}+R_\Theta)^{2-p}}{p-1}\left\Vert\bx_k-\frac{1}{R_\mathcal{X}+R_{\Theta}}\begin{bmatrix}\theta \\ \mathbf{0}\end{bmatrix}\right\Vert_2^{p-2}\right|}_{\mathrm{(II)}}\left\Vert\bx_k-\frac{1}{R_\mathcal{X}+R_{\Theta}}\begin{bmatrix}\theta \\ \mathbf{0}\end{bmatrix}\right\Vert_2.
\end{align*}
In the term (II), $0<1-\frac{L_{f,\Theta}(R_{\mathcal{X}}+R_\Theta)^{2-p}}{p-1}\left\Vert\bx_k-\frac{1}{R_\mathcal{X}+R_{\Theta}}\begin{bmatrix}\theta \\ \mathbf{0}\end{bmatrix}\right\Vert_2^{p-2}<1$ because $\left\Vert\bx_k-\frac{1}{R_\mathcal{X}+R_{\Theta}}\begin{bmatrix}\theta \\ \mathbf{0}\end{bmatrix}\right\Vert_2\leq R_\Theta$, ($\left\Vert\bx_0-\frac{1}{R_\mathcal{X}+R_{\Theta}}\begin{bmatrix}\theta\\ \mathbf{0}\end{bmatrix}\right\Vert_2\leq R_\Theta$). Thus, from $L_{f,\Theta}\leq 1$, we have,
\[\frac{L_{f,\Theta}(R_{\mathcal{X}}+R_\Theta)^{2-p}}{p-1}\left\Vert\bx_k-\frac{1}{R_\mathcal{X}+R_{\Theta}}\begin{bmatrix}\theta \\ \mathbf{0}\end{bmatrix}\right\Vert_2^{p-2}\leq \frac{L_{f,\Theta}}{p-1}\left(\frac{R_\Theta}{R_\Theta+R_\mathcal{X}}\right)^{p-2}< 1.\]
Moreover, it is worth noting that, in our example \eqref{polynomialclass}, PGD reduces to standard gradient descent. This is because both the warm-start $\bx_0$ and the optimal solution $\frac{1}{R_\mathcal{X}+R_{\Theta}}\begin{bmatrix}\theta \\ \mathbf{0}\end{bmatrix}$ lie in the same set $ \mathbb{B}_2(0,R_{\Theta})$ from $R_\mathcal{X}+R_{\Theta}\geq 2.314$, and the iterates never leave the region.

The next part of proof is given in two folds.

\myparagraph{Case $p\geq3$ \& $p\to2^+$.} Next, we have to use the following Bernoulli inequality. For $p\ge3$ and $0<s<\frac{1}{p-2}$, then,
\begin{equation}
\begin{aligned}\label{Bernouli}
(1 - s)^{p-2} &\ge 1 - (p - 2)s\\
(1 - s)^{-(p-2)} &\le \frac{1}{1 - (p - 2)s}\\
&\overset{(vi)}{=}1 + \frac{(p - 2)s}{1 - (p - 2)s}.
\end{aligned}
\end{equation}
We employ a bound on $s$ as $0<s\leq\frac{1}{2(p-2)}$ to avoid the RHS of $(vi)$ blow up to $\infty$.
Let 
$s = \frac{L_{f,\Theta}(R_{\mathcal{X}}+R_\Theta)^{2-p}}{p-1}
\left\Vert \bx_k - \frac{1}{R_\mathcal{X}+R_{\Theta}}\begin{bmatrix}\theta \\ \mathbf{0}\end{bmatrix} \right\Vert_2^{p-2}$,
to apply the Bernoulli inequality \eqref{Bernouli}, we invoke \[\left\Vert\bx_k-\frac{1}{R_\mathcal{X}+R_{\Theta}}\begin{bmatrix}\theta \\ \mathbf{0}\end{bmatrix}\right\Vert_2\leq\left\Vert\bx_0-\frac{1}{R_\mathcal{X}+R_{\Theta}}\begin{bmatrix}\theta \\ \mathbf{0}\end{bmatrix}\right\Vert_2=\frac{1}{R_\mathcal{X}+R_{\Theta}}\left\Vert\begin{bmatrix}\pi_\mathcal{D}(\theta) \\ \mathbf{0}\end{bmatrix}-\begin{bmatrix}\theta \\ \mathbf{0}\end{bmatrix}\right\Vert_2\leq 0.865R_\Theta,\] which yields
\begin{align}\label{applybernoulli}&\left\Vert\bx_k-\frac{1}{R_\mathcal{X}+R_{\Theta}}\begin{bmatrix}\theta \\ \mathbf{0}\end{bmatrix}\right\Vert_2\leq 0.865R_\Theta\overset{(vii)}{\leq}(R_\mathcal{X}+R_\Theta)\left(\frac{p-1}{2L_{f,\Theta}(p-2)}\right)^{\frac{1}{p-2}}\nonumber\\\rightarrow& 0<\frac{(R_{\mathcal{X}}+R_\Theta)^{2-p}}{p-1}
\left\Vert \bx_k - \frac{1}{R_\mathcal{X}+R_{\Theta}}\begin{bmatrix}\theta \\ \mathbf{0}\end{bmatrix} \right\Vert_2^{p-2}<\frac{1}{2(p-2)},
\end{align}
where $(vii)$ comes from using $L_{f,\Theta}\leq1$, 
\begin{align*}\inf_{p\in(2,+\infty)}\left(\frac{p-1}{2p-4}\right)^{\frac{1}{p-2}}\left(\frac{1}{L_{f,\Theta}}\right)^{\frac{1}{p-2}}&\geq\inf_{p\in(2,+\infty)}\left(\frac{p-1}{2p-4}\right)^{\frac{1}{p-2}}\inf_{p\in(2,+\infty)}\left(\frac{1}{L_{f,\Theta}}\right)^{\frac{1}{p-2}}\\&=0.865\lim_{p\to\infty}\left(\frac{1}{L_{f,\Theta}}\right)^{\frac{1}{p-2}}=0.865.
\end{align*}
Besides, when $p\to 2^+$, \eqref{Bernouli} becomes,
\[(1 - s)^{p-2} \approx 1 - (p - 2)s\rightarrow (1 - s)^{-(p-2)}\approx 1 + \frac{(p - 2)s}{1 - (p - 2)s},\,\forall s\in(0,1).\]
Then for both cases $p\geq 3$ and $p\to2^+$, we have,
\begin{equation}
\begin{aligned}\label{bernoullibound}
&\left(1-\frac{L_{f,\Theta}(R_\mathcal{X}+R_\Theta)^{2-p}}{p-1}\left\Vert\bx_k-\frac{1}{R_\mathcal{X}+R_{\Theta}}\begin{bmatrix}\theta \\ \mathbf{0}\end{bmatrix}\right\Vert_2^{p-2}\right)^{-(p-2)}-1\\\overset{(viii)}{\leq}& \frac{L_{f,\Theta}(R_\mathcal{X}+R_\Theta)^{2-p}\frac{p - 2}{p-1}\left\Vert\bx_k-\frac{1}{R_\mathcal{X}+R_{\Theta}}\begin{bmatrix}\theta \\ \mathbf{0}\end{bmatrix}\right\Vert_2^{p-2}}{1 - L_{f,\Theta}(R_\mathcal{X}+R_\Theta)^{2-p}\frac{p - 2}{p-1}\left\Vert\bx_k-\frac{1}{R_\mathcal{X}+R_{\Theta}}\begin{bmatrix}\theta \\ \mathbf{0}\end{bmatrix}\right\Vert_2^{p-2}}\\
\overset{(ix)}{\leq}& {2L_{f,\Theta}(R_\mathcal{X}+R_\Theta)^{2-p}\frac{p - 2}{p-1}\left\Vert\bx_k-\frac{1}{R_\mathcal{X}+R_{\Theta}}\begin{bmatrix}\theta \\ \mathbf{0}\end{bmatrix}\right\Vert_2^{p-2}},
\end{aligned}
\end{equation}
where $(viii)$ is obtained from applying \eqref{Bernouli} to linearize the LHS term. Then, by consider the condition \eqref{applybernoulli}, we get $(ix)$. As a consequence, we consider the gap between two consecutive steps in the PGD sequence, $\left\Vert\bx_{k+1}-\frac{1}{R_\mathcal{X}+R_{\Theta}}\begin{bmatrix}\theta \\ \mathbf{0}\end{bmatrix}\right\Vert_2^{-(p-2)}-\left\Vert\bx_k-\frac{1}{R_\mathcal{X}+R_{\Theta}}\begin{bmatrix}\theta \\ \mathbf{0}\end{bmatrix}\right\Vert_2^{-(p-2)}>0$, then,
\begin{equation}
\begin{aligned}\label{sequentialbound}
&\left\Vert\bx_{k+1}-\frac{1}{R_\mathcal{X}+R_{\Theta}}\begin{bmatrix}\theta \\ \mathbf{0}\end{bmatrix}\right\Vert_2^{-(p-2)}-\left\Vert\bx_k-\frac{1}{R_\mathcal{X}+R_{\Theta}}\begin{bmatrix}\theta \\ \mathbf{0}\end{bmatrix}\right\Vert_2^{-(p-2)}\\=&\left\Vert\bx_k-\frac{1}{R_\mathcal{X}+R_{\Theta}}\begin{bmatrix}\theta \\ \mathbf{0}\end{bmatrix}\right\Vert_2^{-(p-2)}\underbrace{\left(\left(1-\frac{L_{f,\Theta}(R_{\mathcal{X}}+R_\Theta)^{2-p}}{p-1}\left\Vert\bx_k-\frac{1}{R_\mathcal{X}+R_{\Theta}}\begin{bmatrix}\theta \\ \mathbf{0}\end{bmatrix}\right\Vert_2^{p-2}\right)^{-(p-2)}-1\right)}_{\mathrm{(III)}}\\
\overset{(x)}{\leq}&\left\Vert\bx_k-\frac{1}{R_\mathcal{X}+R_{\Theta}}\begin{bmatrix}\theta \\ \mathbf{0}\end{bmatrix}\right\Vert_2^{-(p-2)}2L_{f,\Theta}(R_{\mathcal{X}}+R_\Theta)^{2-p}\frac{p - 2}{p-1}\left\Vert\bx_k-\frac{1}{R_\mathcal{X}+R_{\Theta}}\begin{bmatrix}\theta \\ \mathbf{0}\end{bmatrix}\right\Vert_2^{p-2}\\
=&2L_{f,\Theta}(R_{\mathcal{X}}+R_\Theta)^{2-p}\frac{p-2}{p-1},
\end{aligned}
\end{equation}
where $(x)$ is derived from using \eqref{bernoullibound} on (III). By summing from $0$ to $K-1$, consider the telescoping sum, we have:
\begin{align*}
&\min_{k=1,...,K}\underline{f}(\bx_k;\theta)-\underline{f}^\star(\theta)\\=&\underline{f}(\bx_K;\theta)-\underline{f}^\star(\theta)\\=&\frac{\min\{L_{f,2},L_\Theta\}}{p(p-1) (R_\mathcal{X}+R_\Theta)^{p-2}}\left\Vert\bx_K-\frac{1}{R_\mathcal{X}+R_{\Theta}}\begin{bmatrix}\theta \\ \mathbf{0}\end{bmatrix}\right\Vert_2^p\\
=&\frac{\min\{L_{f,2},L_\Theta\}}{p(p-1) (R_\mathcal{X}+R_\Theta)^{p-2}}\left(\left\Vert\bx_K-\frac{1}{R_\mathcal{X}+R_{\Theta}}\begin{bmatrix}\theta \\ \mathbf{0}\end{bmatrix}\right\Vert_2^{-(p-2)}\right)^{-\frac{p}{p-2}}\\=&\frac{\min\{L_{f,2},L_\Theta\}}{p(p-1) (R_\mathcal{X}+R_\Theta)^{p-2}}\left(\sum_{k=0}^{K-1}\underbrace{\left(\left\Vert\bx_{k+1}-\frac{1}{R_\mathcal{X}+R_{\Theta}}\begin{bmatrix}\theta \\ \mathbf{0}\end{bmatrix}\right\Vert_2^{-(p-2)}-\left\Vert\bx_{k}-\frac{1}{R_\mathcal{X}+R_{\Theta}}\begin{bmatrix}\theta \\ \mathbf{0}\end{bmatrix}\right\Vert_2^{-(p-2)}\right)}_{\mathrm{(IV)}}\right.\\&+\left\Vert\bx_{0}-\frac{1}{R_\mathcal{X}+R_{\Theta}}\begin{bmatrix}\theta \\ \mathbf{0}\end{bmatrix}\right\Vert_2^{-(p-2)})^{-\frac{p}{p-2}}
\\\overset{(xi)}{\geq}&\frac{\min\{L_{f,2},L_\Theta\}}{p(p-1) (R_\mathcal{X}+R_\Theta)^{p-2}}\left(\left\Vert\bx_0-\frac{1}{R_\mathcal{X}+R_{\Theta}}\begin{bmatrix}\theta \\ \mathbf{0}\end{bmatrix}\right\Vert_2^{-(p-2)}+2KL_{f,\Theta}(R_\mathcal{X}+R_\Theta)^{2-p}\frac{p - 2}{p-1}\right)^{-\frac{p}{p-2}},
\end{align*}
where $(xi)$ is from applying \eqref{sequentialbound} on totally $K$ consecutive difference terms. To establish the complexity lower bound, it suffices to identify at least one 
$\theta \in \mathbb{B}_2(0,R_\Theta)$ such that, even with the optimal warm-start 
$\bx_0 = \pi^\star(\theta)$, the following holds:
\begin{align*}
\frac{\min\{L_{f,2},L_\Theta\}}{p(p-1) (R_\mathcal{X}+R_\Theta)^{p-2}}\left(\left\Vert\bx_0-\frac{1}{R_\mathcal{X}+R_{\Theta}}\begin{bmatrix}\theta \\ \mathbf{0}\end{bmatrix}\right\Vert_2^{-(p-2)}+2L_{f,\Theta}K(R_\mathcal{X}+R_\Theta)^{2-p}\frac{p - 2}{p-1}\right)^{-\frac{p}{p-2}}&>\varepsilon
\end{align*}
Then, by rearranging terms above, we obtain,
{\small{\begin{align*}
\left(\frac{\varepsilon p(p-1) (R_\mathcal{X}+R_\Theta)^{p-2}}{\min\{L_{f,2},L_\Theta\}}\right)^{-\frac{p-2}{p}}&>\left\Vert\bx_0-\frac{1}{R_\mathcal{X}+R_{\Theta}}\begin{bmatrix}\theta \\ \mathbf{0}\end{bmatrix}\right\Vert_2^{-(p-2)}+2L_{f,\Theta}(R_\mathcal{X}+R_\Theta)^{2-p}\frac{p - 2}{p-1}K\\
\left\Vert\bx_0-\frac{1}{R_\mathcal{X}+R_{\Theta}}\begin{bmatrix}\theta \\ \mathbf{0}\end{bmatrix}\right\Vert_2^{-(p-2)}&<\left(\frac{\varepsilon p(p-1) (R_\mathcal{X}+R_\Theta)^{p-2}}{\min\{L_{f,2},L_\Theta\}}\right)^{-\frac{p-2}{p}}-2L_{f,\Theta}(R_\mathcal{X}+R_\Theta)^{2-p}\frac{p - 2}{p-1}K\\
\left\Vert\bx_0-\frac{1}{R_\mathcal{X}+R_{\Theta}}\begin{bmatrix}\theta \\ \mathbf{0}\end{bmatrix}\right\Vert_2&>(R_\mathcal{X}+R_\Theta)\left(\left(\frac{p(p-1)\varepsilon}{\min\{L_{f,2},L_\Theta\}}\right)^{-\frac{p-2}{p}}(R_{\mathcal{X}}+R_\Theta)^\frac{2(p-2)}{p}-2L_{f,\Theta}\frac{p - 2}{p-1}K\right)^{-\frac{1}{p-2}}\\
\left\Vert\pi_\mathcal D(\theta)-\theta \right\Vert_2&>(R_\mathcal{X}+R_\Theta)^2\left(\left(\frac{p(p-1)\varepsilon}{\min\{L_{f,2},L_\Theta\}}\right)^{-\frac{p-2}{p}}(R_{\mathcal{X}}+R_\Theta)^\frac{2(p-2)}{p}-2L_{f,\Theta}\frac{p - 2}{p-1}K\right)^{-\frac{1}{p-2}}.
\end{align*}}}
It is notable that for any fixed online-budget $K$ and error tolerance $\varepsilon$, we want to find an optimal order $p$ to let the tightest bound hold,
\begin{align}\label{optimalp}
p^\star(K,\varepsilon)=\arg\min_{p\in[3,\bar{p}]}\left(\left(p(p-1)\varepsilon\right)^{-\frac{p-2}{p}}(\min\{L_{f,2},L_\Theta\}(R_{\mathcal{X}}+R_\Theta)^2)^\frac{p-2}{p}-2L_{f,\Theta}\frac{p - 2}{p-1}K\right)^{-\frac{1}{p-2}}.
\end{align}
The upper bound order $\bar p$ in \eqref{optimalp} is required to satisfy the following
condition next:
\begin{itemize}
\item For a given online oracle cost $K$,
\begin{align*}
\left(\bar{p}(\bar{p}-1)\varepsilon\right)^{-\frac{\bar{p}-2}{\bar{p}}}(\min\{L_{f,2},L_\Theta\}(R_{\mathcal{X}}+R_\Theta)^2)^\frac{\bar{p}-2}{\bar{p}}-2\frac{\bar{p} - 2}{\bar{p}-1}K
&> 0\\
\left(\frac{\min\{L_{f,2},L_\Theta\}(R_{\mathcal{X}}+R_\Theta)^2}{\bar{p}(\bar{p}-1)\varepsilon}\right)^\frac{\bar{p}-2}{\bar{p}}
&> 2\frac{\bar{p} - 2}{\bar{p}-1}K\\
\sqrt{\frac{\min\{L_{f,2},L_\Theta\}(R_{\mathcal{X}}+R_\Theta)^2}{K\varepsilon}}&\overset{}{\gtrsim} \bar{p}
\end{align*}
\end{itemize}
So,
\[\bar{p}\leq\mathcal{O}\left(\sqrt{\frac{\min\{L_{f,2},L_\Theta\}(R_{\mathcal{X}}+R_\Theta)^2}{K\varepsilon}}\right).\]
The analytical solution of \eqref{optimalp} is extremely difficult to find. We can only provide the lower bound of the memory complexity $M(K,\varepsilon)$ for each $p\geq3$ and $p\to2^+$ as follows,
\begin{equation}
\begin{aligned}\label{lbthm3}
&\Omega\left(\left[\left(\frac{p(p-1)\varepsilon}{\min\{L_{f,2},L_\Theta\}}\right)^{-\frac{p-2}{p}}(R_{\mathcal{X}}+R_\Theta)^\frac{2(p-2)}{p}-\frac{2\min\{L_{f,2},L_\Theta\}}{L_{f,2}}\frac{p - 2}{p-1}K\right]_+^{\frac{d_\Theta}{p-2}}\right)\\\overset{(xii)}{\geq}&\Omega\left(\left[\left(\frac{p(p-1)\varepsilon}{\min\{L_{f,2},L_\Theta\}}\right)^{-\frac{p-2}{p}}R_\Theta^\frac{2(p-2)}{p}-\frac{2\min\{L_{f,2},L_\Theta\}}{L_{f,2}}\frac{p - 2}{p-1}K\right]_+^{\frac{d_\Theta}{p-2}}\right)\\\overset{(xiii)}{=}&\Omega\left(R_\Theta^\frac{d_\Theta}p\left[\left(\frac{\min\{L_{f,2},L_\Theta\}R_\Theta}{\varepsilon}\right)^{\frac{p-2}{p}}-\frac{2\min\{L_{f,2},L_\Theta\}p(p-2)}{L_{f,2}R_\Theta^\frac{p-2}{p}}\underbrace{\left(\frac{1}{p(p-1)}\right)^{\frac{2}{p}}}_{\text{(V)}}K\right]_+^{\frac{d_\Theta}{p-2}}\right)\\\overset{(xiv)}{\geq}&\Omega\left(R_\Theta^\frac{d_\Theta}p\left[\left(\frac{\min\{L_{f,2},L_\Theta\}R_\Theta}{\varepsilon}\right)^{\frac{p-2}{p}}-\frac{2\min\{L_{f,2},L_\Theta\}p(p-2)}{L_{f,2}R_\Theta^\frac{p-2}{p}}K\right]_+^{\frac{d_\Theta}{p-2}}\right)\\
=&\Omega\left((\min\{L_{f,2},L_\Theta\}R_\Theta^2)^{\frac{d_\Theta}{p}}\left[\left(\frac{1}{\varepsilon}\right)^{\frac{p-2}{p}}-\frac{2\min\{L_{f,2},L_\Theta\}\beta(\beta-2)}{(\min\{L_{f,2},L_\Theta\}R_\Theta^2)^{\frac{p-2}{p}}L_{f,2}}K\right]_+^{\frac{d_\Theta}{p-2}}\right)
\end{aligned}
\end{equation}
where $(xii)$ follows from the fact that $R_{\mathcal{X}} + R_\Theta > R_\Theta$.
Step $(xiii)$ is obtained by factoring out $\left(\frac{1}{p(p-1)}\right)^{\frac{d_\Theta}{p}} \sim \Theta(1),\,\forall p\in(2,+\infty)$.
Finally, noting that $\left(\frac{1}{p(p-1)}\right)^{\frac{2}{p}} < 1$, we arrive at the expression in $(xiv)$. Hence, by denoting $\beta$ as $p$, we finished the proof of the first part.

\myparagraph{Case $p\in(2,3)$.} In this case, \eqref{Bernouli} no longer holds anymore. We shall find another way to induce a similar intermediate result like \eqref{bernoullibound}. So, we derive, for all $s\in (0,1)$,
\[(1-s)^{-(p-2)}-1=\int_0^s(p-2)(1-t)^{1-p}dt=\int_0^s(p-2)(1-t)^{1-p}dt\leq(p-2)s(1-s)^{1-p}.\]
By denoting $s=\frac{L_{f,\Theta}(R_{\mathcal{X}}+R_\Theta)^{2-p}}{p-1}
\left\Vert \bx_k - \frac{1}{R_\mathcal{X}+R_{\Theta}}\begin{bmatrix}\theta \\ \mathbf{0}\end{bmatrix} \right\Vert_2^{p-2}<1$, similar to the argument in \eqref{applybernoulli}, we have,
\[(1-s)^{1-p}\leq \left(1-\frac{L_{f,\Theta}}{p-1}\left(\frac{2R_\Theta}{R_\Theta+R_\mathcal{X}}\right)^{p-2}\right)^{-2}.\]
Thus, for given $p\in(2,3)$ and not close to $2^+$, $\left(1-\frac{L_{f,\Theta}}{p-1}\left(\frac{2R_\Theta}{R_\Theta+R_\mathcal{X}}\right)^{p-2}\right)^{-2}\sim\Theta(1)$. Hence, similar to \eqref{bernoullibound}, we have,
\begin{align*}
&\left(1-\frac{L_{f,\Theta}(R_\mathcal{X}+R_\Theta)^{2-p}}{p-1}\left\Vert\bx_k-\frac{1}{R_\mathcal{X}+R_{\Theta}}\begin{bmatrix}\theta \\ \mathbf{0}\end{bmatrix}\right\Vert_2^{p-2}\right)^{-(p-2)}-1\\{\leq}& {c_{\mathrm{bound}}L_{f,\Theta}(R_\mathcal{X}+R_\Theta)^{2-p}\frac{p - 2}{p-1}\left\Vert\bx_k-\frac{1}{R_\mathcal{X}+R_{\Theta}}\begin{bmatrix}\theta \\ \mathbf{0}\end{bmatrix}\right\Vert_2^{p-2}},
\end{align*}
where $c_{\mathrm{bound}}=\left(1-\frac{L_{f,\Theta}}{p-1}\left(\frac{2R_\Theta}{R_\Theta+R_\mathcal{X}}\right)^{p-2}\right)^{-2}\le\left(1-\frac{L_{f,\Theta}}{p-1}\left(0.865R_\Theta\right)^{p-2}\right)^{-2}$. So, by exact the same proving steps, for each $p$, we obtain,
\[\left\Vert\pi_\mathcal D(\theta)-\theta \right\Vert_2>(R_\mathcal{X}+R_\Theta)^2\left(\left(\frac{p(p-1)\varepsilon}{\min\{L_{f,2},L_\Theta\}}\right)^{-\frac{p-2}{p}}(R_{\mathcal{X}}+R_\Theta)^\frac{2(p-2)}{p}-c_{\mathrm{bound}}L_{f,\Theta}\frac{p - 2}{p-1}K\right)^{-\frac{1}{p-2}}.\]
Therefore, the lower bound of the memory complexity $M(K,\varepsilon)$ is,
\begin{align*}M(K,\varepsilon)&\ge\Omega\left(\left[\left(\frac{\min\{L_{f,2},L_\Theta\}R_\Theta^2}{\varepsilon}\right)^{\frac{p-2}{p}}-\frac{c_{\mathrm{bound}}\min\{L_{f,2},L_\Theta\}p(p-2)}{L_{f,2}}K\right]_+^{\frac{d_\Theta}{p-2}}\right)\\&=\Omega\left((\min\{L_{f,2},L_\Theta\}R_\Theta^2)^{\frac{d_\Theta}{p}}\left[\left(\frac{1}{\varepsilon}\right)^{\frac{p-2}{p}}-\frac{c_{\mathrm{bound}}\min\{L_{f,2},L_\Theta\}\beta(\beta-2)}{(\min\{L_{f,2},L_\Theta\}R_\Theta^2)^{\frac{p-2}{p}}L_{f,2}}K\right]_+^{\frac{d_\Theta}{p-2}}\right).\end{align*}
The last step is obtained by taking $\beta=p$. Hence, we finished the proof.
\end{proof}
\section{Proof of Theorem \ref{thm2}}\label{pfthm2}
The complete version of Theorem \ref{thm2} is shown as follows.
\begin{thm}[Memory complexity for $\beta$-growth problems]\label{thm4}
Under Assumptions~\ref{a1}--\ref{a4} and~\ref{a7}, suppose that $\beta>2$ and fix the number of online oracle cost $K$. Then the memory complexity required to achieve accuracy $\varepsilon$ satisfies the following upper and lower bounds.

\textbf{(Upper bound).}
\begin{align}\label{betaub1}
M(K,\varepsilon)
\leq
\mathcal{O}\!\left(
(L_{\Theta}R_{\Theta})^{d_\Theta}\left[
\left(\frac{1}{\varepsilon}\right)^{\frac{\beta-2}{\beta}}
- C_1 K
\right]_+^{\frac{\beta d_\Theta}{\beta-2}}
\right).
\end{align}

\textbf{(Lower bound).}
If $\Phi \le \frac{\min\{L_{f,2},L_\Theta\}}{\beta(\beta-1) (R_{\mathcal{X}}+R_\Theta)^{\beta-2}}$, then
\begin{align}\label{betalb1}
M(K,\varepsilon)
\geq
\Omega\!\left(
(\min\{L_{f,2},L_{\Theta}\}R_{\Theta}^2)^{\frac{d_\Theta}\beta}\left[
\left(\frac{1}{\varepsilon}\right)^{\frac{\beta-2}{\beta}}
- C_2 K
\right]_+^{\frac{d_\Theta}{\beta-2}}
\right),
\end{align}
where $[\cdot]_+ := \max\{\cdot,0\}$. And, \[
C_1
=
\frac{(\beta-2)\Phi^{\frac{2}{\beta}}}
{16\beta L_{f,2}},\,
C_2
=
\frac{c_{\mathrm{bound}}\beta(\beta-2)\min\{L_{f,2},L_\Theta\}}
{(\min\{L_{\Theta},L_{f,2}\}R_{\Theta}^2)^{\frac{\beta-2}{\beta}}L_{f,2}},
\]
where
$c_{\mathrm{bound}} =
\left(1-\frac{L_{f,\Theta}}{\beta-1}\left(0.865R_\Theta\right)^{\beta-2}\right)^{-2}$
when $\beta \in (2,3)$, and
$
c_{\mathrm{bound}} = 2$
otherwise.
\end{thm}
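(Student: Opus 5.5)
The plan is to instantiate the meta proof framework of Appendix~\ref{meta proof} twice: once with an upper convergence bound under $\beta$-growth (upper bound), and once with the hard class \eqref{polynomialclass} used in the proof of Theorem~\ref{thm3} (lower bound).

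\textbf{Upper bound.} First I would derive the sublinear rate of PGD under Assumption~\ref{a7}. Let $\Delta_k := f(\bx_k;\theta)-f^\star(\theta)$. From the descent lemma, $\Delta_{k+1}\le \Delta_k - \frac{L_{f,2}}{2}\|\bx_{k+1}-\bx_k\|_2^2$. From the normal-cone inequality already used in the proof of Theorem~\ref{thm3}, $\Delta_{k+1}\le L_{f,2}\|\bx_{k+1}-\bx_k\|_2\,\mathrm{dist}_2(\bx_{k+1},\arg\min)+\frac{L_{f,2}}{2}\|\bx_{k+1}-\bx_k\|_2^2$. Assumption~\ref{a7} gives $\mathrm{dist}_2\le(\Delta_{k+1}/\Phi)^{1/\beta}$.

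Combining these yields a recursion of the form $\Delta_k-\Delta_{k+1}\ge c\,\Phi^{2/\beta}L_{f,2}^{-1}\Delta_{k+1}^{2-2/\beta}$, after handling the quadratic term by a case split (whether the step is large or small relative to the distance). Applying the mean-value bound to $t\mapsto t^{-(\beta-2)/\beta}$ then gives $\Delta_{k+1}^{-(\beta-2)/\beta}-\Delta_k^{-(\beta-2)/\beta}\ge C_1$. Telescoping produces
\[
\overline{\mathcal{E}}(K,\Delta_0)=\bigl(\Delta_0^{-(\beta-2)/\beta}+C_1K\bigr)^{-\beta/(\beta-2)}.
\]
I expect the constant $16$ in $C_1$ to come from this case analysis. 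Requiring $\overline{\mathcal{E}}\le\varepsilon$ is equivalent to $\Delta_0\le[\varepsilon^{-(\beta-2)/\beta}-C_1K]_+^{-\beta/(\beta-2)}$. By Lemma~\ref{lem3}, it suffices that $2L_\Theta\|\pi_\mathcal{D}(\theta)-\theta\|$ is at most this quantity. The uniform grid of Algorithm~\ref{alg:data_collector} gives covering radius $\lesssim R_\Theta M^{-1/d_\Theta}$, and solving for $M$ yields \eqref{betaub1}. When the bracket is nonpositive, PGD alone suffices and $M=\mathcal{O}(1)$.

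\textbf{Lower bound.} I would reuse the class \eqref{polynomialclass} with $p=\beta$ and check that it lies in the admissible family. Smoothness and $L_\Theta$-Lipschitzness were verified there. It satisfies $\beta$-growth with constant exactly $\min\{L_{f,2},L_\Theta\}/(\beta(\beta-1)(R_\mathcal{X}+R_\Theta)^{\beta-2})$, which is why the hypothesis $\Phi\le$ this value is needed: the class then satisfies Assumption~\ref{a7} for the given $\Phi$.

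The two lower-bound inputs of the meta framework are then available from the proof of Theorem~\ref{thm3}:
\begin{itemize}
\item The exact gap identity $\underline f(\bx_0;\theta)-\underline f^\star(\theta)\propto\|\pi(\theta)-\theta\|_2^\beta$ supplies $\mathcal{R}_{\mathrm{lb}}$, and it shows the nearest-neighbour rule is optimal.
\item The telescoped reciprocal-power bound supplies $\underline{\mathcal{E}}$. It uses Bernoulli's inequality for $\beta\ge3$ and the integral bound with $c_{\mathrm{bound}}$ for $\beta\in(2,3)$.
\end{itemize}
Lemma~\ref{lem1-} (with the ball $\Theta$) guarantees a $\theta$ with $\|\pi_\mathcal{D}(\theta)-\theta\|_2\gtrsim R_\Theta M^{-1/d_\Theta}$. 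Inverting the resulting inequality exactly as in \eqref{lbthm3} gives \eqref{betalb1}.

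The main obstacle is the upper-bound recursion. The normal-cone inequality bounds $\Delta_{k+1}$ by a mix of $\|\bx_{k+1}-\bx_k\|$ and its square, so extracting a clean power-law decrease with an explicit $C_1$ requires the case split. I would first try $\|\bx_{k+1}-\bx_k\|_2\le\mathrm{dist}_2$, where the linear term dominates and gives $\|\bx_{k+1}-\bx_k\|_2\ge\Delta_{k+1}/(2L_{f,2}\mathrm{dist}_2)$. In the complementary case, the decrease is already at least $\frac{L_{f,2}}{2}\mathrm{dist}_2^2\gtrsim\Delta_{k+1}^{2/\beta}$-type, which is even larger for small gaps. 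Both cases would then be merged into the single constant $C_1$.
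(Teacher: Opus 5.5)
Your lower bound is the same as the paper's argument: the class \eqref{polynomialclass} with $p=\beta$ (which satisfies Assumption~\ref{a7} for every $\Phi\le\Phi_{\mathrm{lb}}$), the exact gap identity, the Bernoulli/integral telescoping with $c_{\mathrm{bound}}$, the packing step, and the inversion \eqref{lbthm3}. The last stage of your upper bound (Lemma~\ref{lem3} plus the grid covering radius) also matches. The difference is the convergence rate. The paper does not derive it by hand; it imports Theorem~4.iii of \cite{frankel2015splitting} as Lemma~\ref{lem2}. That lemma holds only under the prerequisite $\Delta_0\le\bigl(C_1/(2^{\frac{\beta-2}{2\beta-2}}-1)\bigr)^{-\frac{\beta}{\beta-2}}$, which your argument also needs but never produces. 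Your recursion $\Delta_k-\Delta_{k+1}\ge c\,\Phi^{2/\beta}L_{f,2}^{-1}\Delta_{k+1}^{2-2/\beta}$ is implicit, so the mean-value bound only gives $\Delta_{k+1}^{-\frac{\beta-2}{\beta}}-\Delta_k^{-\frac{\beta-2}{\beta}}\ge\frac{\beta-2}{\beta}\,c\,\Phi^{2/\beta}L_{f,2}^{-1}(\Delta_{k+1}/\Delta_k)^{2-2/\beta}$, which has no uniform positive lower bound. If you split on the ratio, steps with $\Delta_{k+1}\ge 2^{-\frac{\beta}{2\beta-2}}\Delta_k$ give a constant increment. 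A large drop, however, only gives $\bigl(2^{\frac{\beta-2}{2\beta-2}}-1\bigr)\Delta_k^{-\frac{\beta-2}{\beta}}\ge\bigl(2^{\frac{\beta-2}{2\beta-2}}-1\bigr)\Delta_0^{-\frac{\beta-2}{\beta}}$, and this is at least $C_1$ exactly under that prerequisite.

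Your other case split also contains an error. When $\|\bx_{k+1}-\bx_k\|_2>\mathrm{dist}_2(\bx_{k+1},\arg\min)$, you lower-bound the decrease by $\frac{L_{f,2}}{2}\mathrm{dist}_2^2\gtrsim\Delta_{k+1}^{2/\beta}$. But Assumption~\ref{a7} gives $\mathrm{dist}_2\le(\Delta_{k+1}/\Phi)^{1/\beta}$, an \emph{upper} bound on the distance. What that case actually yields is $\Delta_{k+1}\le\frac{3}{2}L_{f,2}\|\bx_{k+1}-\bx_k\|_2^2$, hence $\Delta_{k+1}\le\frac{3}{4}\Delta_k$. This is another large-drop step, whose increment again scales like $\Delta_0^{-\frac{\beta-2}{\beta}}$.

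The missing step is the one the paper takes right after Lemma~\ref{lem2}: enforce the threshold on the initial gap through Lemma~\ref{lem3}. Concretely, require the covering radius to be at most $\frac{1}{2L_\Theta}$ times the threshold. This costs a memory $\overline{M}$ that depends on $\beta,\Phi,L_{f,2},L_\Theta,R_\Theta$ but not on $K$ or $\varepsilon$, and it is absorbed into $\mathcal{O}(\cdot)$. Your claim that $M=\mathcal{O}(1)$ when the bracket is nonpositive also rests on this step; without it you have not shown that PGD alone reaches accuracy $\varepsilon$ from an arbitrary warm start. With this addition, your self-contained derivation is a legitimate alternative to importing the KL machinery of \cite{frankel2015splitting} and \cite{bolte2017error}. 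On small-drop steps of your first case, your estimate $\|\bx_{k+1}-\bx_k\|_2\ge\Delta_{k+1}/(2L_{f,2}\mathrm{dist}_2)$ gives a decrease of at least $\frac{\Phi^{2/\beta}}{8L_{f,2}}\Delta_{k+1}^{2-2/\beta}$. That yields an increment of at least $\frac{\beta-2}{\beta}\cdot\frac{1}{2}\cdot\frac{\Phi^{2/\beta}}{8L_{f,2}}=C_1$, so your guess about the $16$ is right. The large-drop steps, including your second case, only change the threshold that defines $\overline{M}$.
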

\begin{proof}
In the Theorem 4.iii of \cite{frankel2015splitting}, Frankel et al. give the converge guarantee of any descent algorithm under prescribed KL-conditions. We first apply their result to our PGD in the following lemma.
\begin{lem}
\citep{frankel2015splitting} In case of $\beta$-growth condition when $\beta\in (2,+\infty)$, if $f(\bx_0;\theta)-f^\star(\theta)\leq \left(\frac{\Phi^{2/\beta}(\beta-2)}{16\beta L_{f,2}\left(2^{\frac{2-\beta}{2-2\beta}}-1\right)}\right)^{-\frac{\beta}{\beta-2}}$, then,
\begin{align}
\min_{k\in[K]}f(\bx_k;\theta)-f^\star(\theta) \;\le\;
\left(\frac{(\beta-2)\Phi^{2/\beta}}{16\beta L_{f,2}}K+(f(\bx_0;\theta)-f^\star(\theta))^{-\frac{\beta-2}{\beta}}\right)^{-\frac{\beta}{\beta-2}}.\label{klgua}
\end{align}
\label{lem2}
\end{lem}
\begin{proof}
The proof of this lemma heavily relies on the analyzing framework in \cite{frankel2015splitting}. In this proof, we make use of the notation in their paper. First, respect to the hypotheses $\mathbf{H_1}$ and $\mathbf{H_2}$ in \cite{frankel2015splitting}, we define the sufficient decreasing factor $a_k$ and relative error factor $b_k$ as follows. For series $a_k$, considering the descent lemma of PGD (Lemma 2.6 in \cite{beck2010gradient}), we have,
\[f(\bx_{k};\theta)\leq f(\bx_{k-1};\theta)-\frac{L_{f,2}}{2}\|\bx_{k}-\bx_{k-1}\|_2^2,\,k\geq1.\]
 So, $a_k=\frac{L_{f,2}}{2}>0$ for all $k$. Besides, for the calculation of series $b_k$, we obtain,
 \begin{align}\label{subdiffeq}
\partial_\bx(f(\bx_{k+1};\theta)+\mathbf{1}_{\mathcal{X}}(\bx_{k+1}))=\nabla_\bx f(\bx_{k+1};\theta)+\mathcal{N}_{\mathcal{X}}(\bx_{k+1}),
 \end{align}
where $\mathcal{N}_{\mathcal{X}}(\bx_{k+1})$ is the normal cone of $\mathcal{X}$ at $\bx_{k+1}$. And by Proposition 6.47 in \cite{bauschke2020correction} again, we have,
\begin{equation}
\begin{aligned}\label{subdiffeq1}
\bx_{k}-L_{f,2}^{-1}\,\nabla_{\bx} f(
\bx_{k};\theta)-\bx_{k+1}\in\mathcal{N}_{\mathcal{X}}(\bx_{k+1})\\
L_{f,2}(\bx_{k}-\bx_{k+1})-\nabla_{\bx} f(
\bx_{k};\theta)\in\mathcal{N}_{\mathcal{X}}(\bx_{k+1})
\end{aligned}
\end{equation}
So, by adding $\nabla_\bx(f(\bx_{k+1};\theta)$ on both sides of \eqref{subdiffeq1} and using \eqref{subdiffeq}, we get,
\[L_{f,2}(\bx_{k}-\bx_{k+1})+(\nabla_\bx(f(\bx_{k+1};\theta)-\nabla_\bx(f(\bx_{k};\theta))\in\partial_\bx(f(\bx_{k+1};\theta)+\mathbf{1}_{\mathcal{X}}(\bx_{k+1})).\]
Then taking the Euclidean norm on both sides, with the triangular inequality, we have
\begin{align*}
L_{f,2}\|\bx_{k+1}-\bx_k\|_2+\underbrace{\|\nabla_\bx(f(\bx_{k+1};\theta)-\nabla_\bx(f(\bx_{k};\theta))\|_2}_{\mathrm{(VI)}}&\geq\Vert \partial_\bx(f(\bx_{k+1};\theta)+\mathbf{1}_{\mathcal{X}}(\bx_{k+1}))\|_{2-}\\
2L_{f,2}\|\bx_{k+1}-\bx_k\|_2&\overset{(xv)}{\geq}\Vert \partial_\bx(f(\bx_{k+1};\theta)+\mathbf{1}_{\mathcal{X}}(\bx_{k+1}))\|_{2-},
\end{align*}
where $\Vert \partial_\bx(f(\bx;\theta)+\mathbf{1}_{\mathcal{X}}(\bx))\|_{2-}=\inf_{\bz\in\partial_\bx(f(\bx;\theta)+\mathbf{1}_{\mathcal{X}}(\bx))}\|\bz\|_2$. $(xv)$ is obainted from the $L_{f,2}$-smoothness (gradient Lipschitz) on term (VI). Hence, we have $b_k=\frac{1}{2L_{f,2}}$ for all $k$. In \cite{frankel2015splitting}, an auxiliary function $\varphi(t)=\frac{C}{1-\theta}t^{1-\theta}$ is used to describe the KL condition. We shall induce the $\theta$-KL condition from the our $\beta$-growth condition to show the relationship between constants $C$ and $\Phi$. With $\beta=\frac{1}{1-\theta}$, by rearranging the formula of Assumption \ref{a7}, we have,
\begin{equation}
\begin{aligned}\label{subdiffineq1}
f(\bx;\theta)-f^\star(\theta)&\geq\Phi\mathrm{dist}^\beta(\bx,\,\arg\min_{\bx\in\mathcal{X}}f(\bx,\theta))\\
\Phi^{-1/\beta}(f(\bx;\theta)-f^\star(\theta))^{1/\beta}&\geq\mathrm{dist}(\bx,\,\arg\min_{\bx\in\mathcal{X}}f(\bx,\theta)).
\end{aligned}
\end{equation}
On the other hand, by denoting the error bound function $\phi(t)=\Phi^{-1/\beta} t^{\frac{1}\beta}$, 
\begin{align}\label{subdiffineq2}
\phi(f(\bx;\theta)-f^\star(\theta))\geq \mathrm{dist}(\bx,\,\arg\min_{\bx\in\mathcal{X}}f(\bx,\theta)).
\end{align}
By the Corollary~6.ii of \cite{bolte2017error}, $t\phi'(t)= \frac{1}\beta\phi(t)$, thus,
\[\phi'(f(\bx;\theta)-f^\star(\theta))\Vert \partial_\bx(f(\bx;\theta)+\mathbf{1}_{\mathcal{X}}(\bx))\|_{2-}\geq \frac{1}{\beta}.\]
The $\varphi(t)$ needs to satisfy,
\[\varphi'(f(\bx;\theta)-f^\star(\theta))\Vert \partial_\bx(f(\bx;\theta)+\mathbf{1}_{\mathcal{X}}(\bx))\|_{2-}\geq 1.\]
So we obtain
\[C=\Phi^{-1/\beta}.\]
 Hence, in line with the proof of Theorem 4.iii in \cite{frankel2015splitting}, by knowing $m=\frac{1}{4}$, $C'=\frac{\beta C}{\beta-2}\left(2^{\frac{2-\beta}{2-2\beta}}-1\right)(f(\bx_0;\theta)-f^\star(\theta))^{-\frac{\beta-2}{\beta}}$, and thus $\beta_k=\frac{1}{8 L_{f,2} C^2}$, $c=\min\left\{\frac{C}2,\frac{C'}{\beta_k}\right\}$, we have,
{\small\begin{equation}\begin{aligned}\label{result1}
\frac{(f(\bx_K;\theta)-f^\star(\theta))^{-\frac{\beta-2}{\beta}}}{\Phi^{1/\beta}(\beta-2)}&\geq\min\left\{\frac{\Phi^{1/\beta}}{16\beta L_{f,2}},\,\frac{\left(2^{\frac{2-\beta}{2-2\beta}}-1\right)(f(\bx_0;\theta)-f^\star(\theta))^{-\frac{\beta-2}{\beta}}}{\Phi^{1/\beta}(\beta-2)}\right\}K+\frac{(f(\bx_0;\theta)-f^\star(\theta))^{-\frac{\beta-2}{\beta}}}{\Phi^{1/\beta}(\beta-2)}\\
f(\bx_K;\theta)-f^\star(\theta)&\leq\left(\frac{(\beta-2)\Phi^{2/\beta}}{16\beta L_{f,2}}K+(f(\bx_0;\theta)-f^\star(\theta))^{-\frac{\beta-2}{\beta}}\right)^{-\frac{\beta}{\beta-2}},
\end{aligned}
\end{equation}}
The inequality \eqref{result1} above holds if,
\begin{equation}
\begin{aligned}\label{result2}
\frac{\Phi^{1/\beta}}{16\beta L_{f,2}}&\le\frac{\left(2^{\frac{2-\beta}{2-2\beta}}-1\right)(f(\bx_0;\theta)-f^\star(\theta))^{-\frac{\beta-2}{\beta}}}{\Phi^{1/\beta}(\beta-2)}\\
\frac{\Phi^{2/\beta}(\beta-2)}{16 \beta L_{f,2}\left(2^{\frac{2-\beta}{2-2\beta}}-1\right)}&\le(f(\bx_0;\theta)-f^\star(\theta))^{-\frac{\beta-2}{\beta}}\\
f(\bx_0;\theta)-f^\star(\theta)&\leq \left(\frac{\Phi^{2/\beta}(\beta-2)}{16\beta L_{f,2}\left(2^{\frac{2-\beta}{2-2\beta}}-1\right)}\right)^{-\frac{\beta}{\beta-2}}.
\end{aligned}
\end{equation}
Here we complete the proof.
\end{proof}
Next, we translate the prerequisite of Lemma \ref{lem2} \eqref{result2} into the cover number, then,
\[\overline{M}=\mathbb{M}_{\mathrm{cover}}\left(\Theta,\frac{1}{2L_\Theta}\left(\frac{\Phi^{2/\beta}(\beta-2)}{16\beta L_{f,2}\left(2^{\frac{2-\beta}{2-2\beta}}-1\right)}\right)^{-\frac{\beta}{\beta-2}}\right).\]
If $\beta\in (2,+\infty)$, the convergence rate with respect to the online
oracle cost $K$ is given by $\mathcal{O}\left(K^{-\frac{\beta}{\beta-2}}\right)$.
As $\beta$ approaches $2$ from above, this rate becomes exponentially fast (Proposition \ref{prop:beta_to_2}), indicating that PGD exhibits increasingly rapid
convergence. On the other hand, as $\beta\to\infty$, the convergence rate
asymptotically approaches $\mathcal{O}(1/K)$. 

From Lemma \ref{lem2}, we construct a sufficient condition of $(K,\varepsilon)$-$\Theta$-covering,
\[\left(\frac{(\beta-2)\Phi^{2/\beta}}{16 \beta L_{f,2}}K+(f(\bx_0;\theta)-f^\star(\theta))^{-\frac{\beta-2}{\beta}}\right)^{-\frac{\beta}{\beta-2}}<\varepsilon\]
Then by rearranging terms, we obtain the upper bound on the memory complexity as follows,
\begin{align*}
\Bigg(\frac{(\beta-2)\Phi^{2/\beta}}{16 \beta L_{f,2}}K+\big(f(\bx_0;\theta)-f^\star(\theta)\big)^{-\frac{\beta-2}{\beta}}\Bigg)^{-\frac{\beta}{\beta-2}}&\leq\varepsilon\\
\frac{(\beta-2)\Phi^{2/\beta}}{16\beta L_{f,2}}K+(f(\bx_0;\theta)-f^\star(\theta))^{-\frac{\beta-2}{\beta}}
&\;\ge\; \varepsilon^{-\frac{\beta-2}{\beta}}, \\
(f(\bx_0;\theta)-f^\star(\theta))^{-\frac{\beta-2}{\beta}}
&\;\ge\; \varepsilon^{-\frac{\beta-2}{\beta}}-\frac{(\beta-2)\Phi^{2/\beta}}{16\beta L_{f,2}}K\\
f(\bx_0;\theta)-f^\star(\theta)
&\;\le\; \left(\varepsilon^{-\frac{\beta-2}{\beta}}-\frac{(\beta-2)\Phi^{2/\beta}}{16 \beta L_{f,2}}K\right)^{-\frac{\beta}{\beta-2}}\\
2L_\Theta R_{\Theta}M^{-1/d_\Theta}&\;\lesssim\; \left(\varepsilon^{-\frac{\beta-2}{\beta}}-\frac{(\beta-2)\Phi^{2/\beta}}{16\beta L_{f,2}}K\right)^{-\frac{\beta}{\beta-2}}\\
\left(2L_\Theta R_{\Theta}\left(\varepsilon^{-\frac{\beta-2}{\beta}}-\frac{(\beta-2)\Phi^{2/\beta}}{16\beta L_{f,2}}K\right)^{\frac{\beta}{\beta-2}}\right)^{d_\Theta}&\;\lesssim\; M.
\end{align*} 
As a consequence, by requiring the positiveness, we have,
\begin{align*}M(K,\varepsilon) \;&\leq\;
\mathcal{O}\left(\left[
L_\Theta R_{\Theta}
\left(
\varepsilon^{-\frac{\beta-2}{\beta}}
-
\frac{(\beta-2)\Phi^{2/\beta}}{16\beta L_{f,2}} K
\right)^{\frac{\beta}{\beta-2}}
\right]_+^{d_\Theta}\right).\end{align*}
Hence, we finished the proof of upper bound $\mathcal{O}(\cdot)$. 
\begin{rem}[Prerequisite on $\overline{M}$]
The proof of the upper bound on memory complexity relies on Lemma~\ref{lem2}. Consequently, any bound below $\overline{M}$ is not applicable. Since $\overline{M}$ is a fixed constant determined by $\beta$ and is independent of the online oracle cost $K$ and the accuracy level $\varepsilon$, it can be absorbed into the $\mathcal{O}(\cdot)$ notation.
\end{rem}
\myparagraph{Complexity lower bound} We consider a family of parameterized functions class,
\begin{align}\label{polynomialclassbeta}
\underline{f}(\bx;\theta)
= {\Phi}\left\Vert \bx - \frac{1}{R_\mathcal{X}+R_\Theta}\begin{bmatrix}\theta \\ \mathbf{0}\end{bmatrix} \right\Vert_2^{\beta},
\end{align}
where $\Phi\leq \frac{\min\{L_{f,2},L_\Theta\}}{\beta(\beta-1)(R_\mathcal{X}+R_\Theta)^{\beta-2}}$. This requirement
is not merely a proof artifact, but a hard constraint imposed by the smoothness assumption. In \eqref{polynomialclassbeta}, if $\Phi>\frac{\min\{L_{f,2},L_\Theta\}}{\beta(\beta-1)(R_\mathcal{X}+R_\Theta)^{\beta-2}}$, then $\underline{f}(\bx;\theta)$ will no longer being $L_{f,2}$-smooth. The proof of lower bound of $\beta$-growth is in line with the proof of Theorem \ref{thm3} in Appendix \ref{appendix2}. By directly assigning \eqref{lbthm3}, we have,
\begin{align*}M(K,\varepsilon)&\ge\Omega\left(\left[\left(\frac{\min\{L_{f,2},L_\Theta\}R_\Theta^2}{\varepsilon}\right)^{\frac{\beta-2}{\beta}}-\frac{c_{\mathrm{bound}}\min\{L_{f,2},L_\Theta\}\beta(\beta-2)}{L_{f,2}}K\right]_+^{\frac{d_\Theta}{\beta-2}}\right)\\&=\Omega\left((\min\{L_{f,2},L_\Theta\}R_\Theta^2)^{\frac{d_\Theta}{\beta}}\left[\left(\frac{1}{\varepsilon}\right)^{\frac{\beta-2}{\beta}}-\frac{c_{\mathrm{bound}}\min\{L_{f,2},L_\Theta\}\beta(\beta-2)}{(\min\{L_{f,2},L_\Theta\}R_\Theta^2)^{\frac{\beta-2}{\beta}}L_{f,2}}K\right]_+^{\frac{d_\Theta}{\beta-2}}\right)\end{align*}
Hence, we finished the proof. The lower bound $\Omega(\cdot)$ doesn't have $\Phi$ because, 
\[
f(\bx;\theta) - f^\star(\theta) \ge\frac{\min\{L_{f,2},L_\Theta\}}{\beta(\beta-1)(R_\mathcal{X}+R_\Theta)^{\beta-2}}\mathrm{dist}_2^\beta\!\bigl(\bx,\,\arg\min_{\bx \in \mathcal{X}} f(\bx;\theta)\bigr)\ge \Phi\,\mathrm{dist}_2^\beta\!\bigl(\bx,\,\arg\min_{\bx \in \mathcal{X}} f(\bx;\theta)\bigr).
\]
Hence, the function class with $\Phi_{\mathrm{lb}}=\frac{\min\{L_{f,2},L_\Theta\}}{\beta(\beta-1)(R_\mathcal{X}+R_\Theta)^{\beta-2}}$ also holds for all $\Phi\leq\Phi_{\mathrm{lb}}$.
\end{proof}
\section{Limit behavior as $\beta \to 2^+$}\label{pfprop1}
The third insight in Section \ref{gconvex} can be expressed as the proposition below.
\begin{prop}[Limit behavior as $\beta \to 2^+$]\label{prop:beta_to_2}
Under the same assumptions as in Theorem~\ref{thm2}, as $\beta \to 2^+$, the memory complexity satisfies,

\textbf{(Upper bound).}  If $\Phi \in \left( 0,16L_{f,2}\log 2\right)$, then
\begin{align*}
M(K,\varepsilon)&\leq
\mathcal{O}\!\left(
\left(
\frac{L_{\Theta}R_{\Theta}}{\varepsilon}
\exp\!\left(-\frac{\Phi K}{16L_{f,2}}\right)
\right)^{d_\Theta}
\right).
\end{align*}\textbf{(Lower bound).} If $\Phi \in \left( 0,\frac{\min\{L_{f,2},L_\Theta\}}{\beta(\beta-1) (R_{\mathcal{X}}+R_\Theta)^{\beta-2}}\right)$, then
\begin{align*}
M(K,\varepsilon)&\geq\Omega\!\left(
\left(
\sqrt{\frac{\min\{L_{f,2},L_{\Theta}\}R_\Theta^2}{\varepsilon}}
\exp\!\left(
-\frac{4\min\{L_{f,2},L_{\Theta}\}K}{ L_{f,2}}
\right)
\right)^{d_\Theta}
\right).
\end{align*}
\end{prop}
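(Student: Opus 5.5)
The plan is to take the explicit bounds of Theorem~\ref{thm4} (the full version of Theorem~\ref{thm2}) and pass to the limit $\beta\to2^+$ inside the brackets. Set $\delta:=\beta-2\to0^+$. Both bounds have the form $\bigl[\varepsilon^{-\delta/\beta}-cK\bigr]_+^{a/\delta}$ with $c=O(\delta)$, and this is a "$1^\infty$"-type limit. The key identity is
\[
\bigl(\varepsilon^{-\delta/\beta}-c\,\delta K\bigr)^{a/\delta}
=\varepsilon^{-a/\beta}\bigl(1-c\,\delta K\,\varepsilon^{\delta/\beta}\bigr)^{a/\delta}
\;\longrightarrow\;\varepsilon^{-a/2}\exp(-a\,c\,K),
\]
since $\varepsilon^{\delta/\beta}\to1$ and $(1-c\delta K)^{a/\delta}\to e^{-acK}$ for fixed $K$. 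For all small $\delta$ the bracket is positive, so $[\cdot]_+$ is inactive.

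\textbf{Upper bound.} In \eqref{betaub1} we have $C_1=\delta\,\Phi^{2/\beta}/(16\beta L_{f,2})$, so $c\to\Phi/(32L_{f,2})$. The exponent is $a=\beta d_\Theta\to 2d_\Theta$. Hence $\bigl[\varepsilon^{-\delta/\beta}-C_1K\bigr]^{\beta d_\Theta/\delta}\to\varepsilon^{-d_\Theta}\exp(-\Phi K d_\Theta/(16L_{f,2}))$, and multiplying by $(L_\Theta R_\Theta)^{d_\Theta}$ gives the stated upper bound.

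It remains to handle the prerequisite of Lemma~\ref{lem2}, i.e.\ the constant $\overline{M}$. The threshold in \eqref{result2} is $\bigl(\Phi^{2/\beta}\delta/(16\beta L_{f,2}(2^{\delta/(2-2\beta)}-1))\bigr)^{-\beta/\delta}$. As $\delta\to0$, $2^{-\delta/(2(\beta-1))}-1\approx-\tfrac{\delta}{2}\log 2$, so the base tends in magnitude to $\Phi/(16L_{f,2}\log2)$. For the threshold to stay nondegenerate (base below $1$, so the power $-\beta/\delta$ blows up rather than collapses), we need exactly $\Phi<16L_{f,2}\log2$. This explains the hypothesis on $\Phi$, and $\overline{M}$ is then absorbed into $\mathcal{O}(\cdot)$ as in the remark after Lemma~\ref{lem2}. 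The sign of $2^{\cdot}-1$ needs care, and I would track absolute values there.

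\textbf{Lower bound.} In \eqref{betalb1}, $C_2=c_{\mathrm{bound}}\beta\delta\,\ell/\bigl((\ell R_\Theta^2)^{\delta/\beta}L_{f,2}\bigr)$ with $\ell=\min\{L_{f,2},L_\Theta\}$. For $\beta\in(2,3)$ near $2$, $c_{\mathrm{bound}}=(1-\tfrac{L_{f,\Theta}}{\beta-1}(0.865R_\Theta)^{\delta})^{-2}$ tends to $(1-L_{f,\Theta})^{-2}$. That limit is problematic when $L_{f,\Theta}=1$, so I would instead use the $p\to2^+$ branch of the proof of Theorem~\ref{thm3}, where the Bernoulli linearization gives $c_{\mathrm{bound}}=2$. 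Then $c\to 4\ell/L_{f,2}$ with $a=d_\Theta$. The prefactor $(\ell R_\Theta^2)^{d_\Theta/\beta}\to(\ell R_\Theta^2)^{d_\Theta/2}$ and $\varepsilon^{-d_\Theta/\beta}\to\varepsilon^{-d_\Theta/2}$, which yields $\bigl(\sqrt{\ell R_\Theta^2/\varepsilon}\,e^{-4\ell K/L_{f,2}}\bigr)^{d_\Theta}$.

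\textbf{Main obstacle.} The main difficulty is uniformity: the $\Omega/\mathcal{O}$ constants of Theorem~\ref{thm4} must not degenerate as $\delta\to0$. In particular, the factor $(1/(p(p-1)))^{d_\Theta/p}$ absorbed in $(xiii)$ and the approximation "$\approx$" in the $p\to2^+$ Bernoulli step must be made into genuine inequalities with $O(\delta)$ errors that vanish in the exponent. I would verify this by expanding $(1-s)^{-\delta}-1=\delta\log\frac1{1-s}+O(\delta^2)$ and bounding $\log\frac1{1-s}\le 2s$ for $s\le1/2$.
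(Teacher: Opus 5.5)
Your route matches the paper's: take the $1^\infty$ limit inside the brackets of Theorem~\ref{thm4}, note that the prerequisite $\overline{M}$ of Lemma~\ref{lem2} becomes vacuous when $\Phi<16L_{f,2}\log 2$, and use $c_{\mathrm{bound}}=2$ in the lower bound. The upper-bound half is fine. Your sign worry is unfounded: $\frac{2-\beta}{2-2\beta}=\frac{\beta-2}{2(\beta-1)}>0$, so $2^{(2-\beta)/(2-2\beta)}-1\approx+\frac{\beta-2}{2}\log 2$. The gap is exactly at the lower-bound step you flagged, and your proposed repair does not close it. Bounding $\log\frac{1}{1-s}\le 2s$ needs $s$ bounded away from $1$. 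Write $\mathbf{z}_k$ for the $k$-th iterate minus the minimizer. For each fixed iterate,
\[
s=\frac{L_{f,\Theta}}{\beta-1}\Bigl(\frac{\|\mathbf{z}_k\|_2}{R_{\mathcal{X}}+R_\Theta}\Bigr)^{\beta-2}\;\longrightarrow\;L_{f,\Theta}=\frac{\min\{L_{f,2},L_\Theta\}}{L_{f,2}}\qquad(\beta\to2^+),
\]
and Theorem~\ref{thm2} allows $L_\Theta=L_{f,2}$, i.e.\ $L_{f,\Theta}=1$. So the blow-up $c_{\mathrm{bound}}\to(1-L_{f,\Theta})^{-2}$ you noticed is genuine, not an artifact of the $(2,3)$ branch. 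Switching to the ``$p\to2^+$'' Bernoulli branch, as the paper also does via ``$\approx$'', only hides it, because the first-order term of $(1-s)^{-(\beta-2)}-1$ is $(\beta-2)\log\frac{1}{1-s}$, not $(\beta-2)s$. Concretely, at $L_{f,\Theta}=1$ one PGD step on the hard instance gives $\mathbf{z}_1=\bigl(1-\tfrac{1}{\beta-1}(\|\mathbf{z}_0\|_2/(R_{\mathcal{X}}+R_\Theta))^{\beta-2}\bigr)\mathbf{z}_0\to 0$. Hence for any fixed $K\ge 1$ the post-$K$-step error tends to $0$, while the telescoped lower bound tends to $\frac{L_{f,2}}{2}\|\mathbf{z}_0\|_2^2e^{-2c_{\mathrm{bound}}K}>0$. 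No finite $c_{\mathrm{bound}}$ can work.

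What the limit actually gives for this instance is the linear iteration $\mathbf{z}_{k+1}=(1-L_{f,\Theta})\mathbf{z}_k$. The error is then $\frac{\ell}{2}(1-L_{f,\Theta})^{2K}\|\mathbf{z}_0\|_2^2$, with $\ell=\min\{L_{f,2},L_\Theta\}$. This reproduces the stated $e^{-4\ell K/L_{f,2}}$ behaviour only if $-\log(1-L_{f,\Theta})\le 4L_{f,\Theta}$, roughly $L_{f,\Theta}\le 0.98$. Your $\log\frac{1}{1-s}\le 2s$ argument works cleanly when $L_{f,\Theta}\le\tfrac12$: then $s\le L_{f,\Theta}\le\tfrac12$ along the whole trajectory, and $(1-s)^{-(\beta-2)}-1\le 2^{\beta-1}(\beta-2)s$, so $c_{\mathrm{bound}}\to2$ rigorously. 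To finish the lower bound you therefore need one of two things. Either add a hypothesis keeping $L_{f,\Theta}$ away from $1$, or use the hard instance with its coefficient halved, so that $L_{f,\Theta}\le\tfrac12$. The halved instance loses at most a factor $2^{-d_\Theta/2}$, but it requires $\Phi\le\Phi_{\mathrm{lb}}/2$ rather than $\Phi<\Phi_{\mathrm{lb}}$.
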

\begin{proof}
First, we rewrite the Theorem \ref{thm2} as follows.\\
\textbf{(Upper bound).}
\begin{align}\label{betaub}
M(K,\varepsilon)
\leq
\mathcal{O}\!\left(
\left[
\left(\frac{L_{\Theta}R_{\Theta}}{\varepsilon}\right)^{\frac{\beta-2}{\beta}}
- C_1' K
\right]_+^{\frac{\beta d_\Theta}{\beta-2}}
\right).
\end{align}

\textbf{(Lower bound).}
Conversely, if $\Phi \le \frac{\min\{L_{f,2},L_\Theta\}}{\beta(\beta-1) (R_{\mathcal{X}}+R_\Theta)^{\beta-2}}$, then
\begin{align}\label{betalb}
M(K,\varepsilon)
\geq
\Omega\!\left(
\left[
\left(\frac{\min\{L_{f,2},L_{\Theta}\}R_{\Theta}^2}{\varepsilon}\right)^{\frac{\beta-2}{\beta}}
- C_2' K
\right]_+^{\frac{d_\Theta}{\beta-2}}
\right),
\end{align}
where $[\cdot]_+ := \max\{\cdot,0\}$ and if $\beta\to2^+$, recalling the proof of Theorem \ref{thm3}. (Appendix \ref{appendix2}), $c_{\mathrm{bound}}=2$, then,
\[
C_1'
=
\frac{(\beta-2)(L_{\Theta}R_{\Theta})^{\frac{\beta-2}{\beta}}\Phi^{\frac{2}{\beta}}}
{16\beta L_{f,2}},
\qquad
C_2'
=
\frac{2\beta(\beta-2)\min\{L_{f,2},L_\Theta\}}
{L_{f,2}}.
\]
Then, consider the Taylor expansion, we obtain,
\[\left(\frac{L_\Theta R_\Theta}{\varepsilon}\right)^\frac{\beta-2}{\beta}\approx 1+\frac{\beta-2}{2}\log\frac{L_\Theta R_\Theta}{\varepsilon},\,C_1'\approx \frac{(\beta-2)(L_{\Theta}R_{\Theta})^{\frac{\beta-2}{\beta}}\Phi^{\frac{2}\beta}}{32L_{f,2}}. \]
Thus, by applying \eqref{betaub}, 
\begin{align*}
M(K,\varepsilon)
&\leq
\mathcal{O}\!\left(
\lim_{\beta\to 2^+}\left[
\left(\frac{L_{\Theta}R_{\Theta}}{\varepsilon}\right)^{\frac{\beta-2}{\beta}}
- C_1’ K
\right]_+^{\frac{\beta d_\Theta}{\beta-2}}
\right)\\&=\mathcal{O}\!\left(
\lim_{\beta\to 2^+}\left[
1+(\beta-2)\left(\frac{1}{2}\log\frac{L_\Theta R_\Theta}{\varepsilon}-\frac{(L_{\Theta}R_{\Theta})^{\frac{\beta-2}{\beta}}\Phi^{\frac{2}\beta}K}
{32L_{f,2}}\right)
\right]_+^{\frac{\beta d_\Theta}{\beta-2}}
\right)\\&=\mathcal{O}\!\left(\left(
\exp\left(\frac{1}{2}\log\frac{L_\Theta R_\Theta}{\varepsilon}-\frac{\Phi K}
{32L_{f,2}}\right)\right)^{2 d_\Theta}
\right)\\&=\mathcal{O}\!\left(\left(
\left(\frac{L_\Theta R_\Theta}{\varepsilon}\right)\exp\left(-\frac{\Phi K}
{16L_{f,2}}\right)\right)^{ d_\Theta}
\right).
\end{align*}
Conversely, we have,
\[\left(\frac{\min\{L_{f,2},L_{\Theta}\}R_{\Theta}^2}{\varepsilon}\right)^{\frac{\beta-2}{\beta}}\approx 1+\frac{\beta-2}{2}\log\frac{\min\{L_{f,2},L_{\Theta}\}R_\Theta^2}{\varepsilon},\,C_2’
\approx
\frac{4(\beta-2)\min\{L_{f,2},L_\Theta\}}
{L_{f,2}}.\]
by applying \eqref{betalb}, the memory complexity lower bound is,
\begin{align*}
M(K,\varepsilon)
&\geq
\Omega\!\left(
\lim_{\beta\to 2^+}\left[
\left(\frac{\min\{L_{f,2},L_{\Theta}\}R_{\Theta}^2}{\varepsilon}\right)^{\frac{\beta-2}{\beta}}
- C_2’ K
\right]_+^{\frac{d_\Theta}{\beta-2}}
\right)\\&=\Omega\!\left(
\lim_{\beta\to 2^+}\left[
1+(\beta-2)\left(\frac{1}{2}\log\frac{\min\{L_{f,2},L_{\Theta}\} R_\Theta^2}{\varepsilon}-\frac{4\min\{L_{f,2},L_{\Theta}\}K}
{L_{f,2}}\right)
\right]_+^{\frac{ d_\Theta}{\beta-2}}
\right)\\&=\Omega\!\left(\left(
\exp\left(\frac{1}{2}\log\frac{\min\{L_{f,2},L_{\Theta}\}R_\Theta^2}{\varepsilon}-\frac{4\min\{L_{f,2},L_{\Theta}\}K}
{L_{f,2}}\right)\right)^{d_\Theta}
\right)\\&=\Omega\!\left(\left(
\sqrt{\frac{\min\{L_{f,2},L_{\Theta}\} R_\Theta^2}{\varepsilon}}\exp\left(-\frac{4\min\{L_{f,2},L_{\Theta}\}K}
{L_{f,2}}\right)\right)^{ d_\Theta}
\right).
\end{align*}
Next, recall the proof of Theorem \ref{thm2} in Appendix \ref{pfthm2}, the required memory of Lemma \ref{lem2} is,
\[\overline{M}=\mathbb{M}_{\mathrm{cover}}\left(\Theta,\frac{1}{2L_\Theta}\left(\frac{\Phi^{2/\beta}(\beta-2)}{16\beta L_{f,2}\left(2^{\frac{2-\beta}{2-2\beta}}-1\right)}\right)^{\frac{\beta}{\beta-2}}\right).\]
We analyze its behavior when $\beta\to 2^+$. First, we analyze the behavior of $2^{\frac{2-\beta}{2-2\beta}}-1$,
\[2^{\frac{2-\beta}{2-2\beta}}-1=2^{\frac{\beta-2}{2(1+\beta-2)}}-1\approx \frac{\beta-2}{2}\log 2.\]
So, the covering radius is,
\[\lim_{\beta\to2^+}\frac{1}{2L_\Theta}\left(\frac{\Phi^{2/\beta}(\beta-2)}{16\beta L_{f,2}\left(2^{\frac{2-\beta}{2-2\beta}}-1\right)}\right)^{-\frac{\beta}{\beta-2}}\approx\lim_{\beta\to2^+}\frac{1}{2L_\Theta}\left(\frac{\Phi^{2/\beta}}{16L_{f,2}\log 2}\right)^{-\frac{\beta}{\beta-2}}.\]
If $\frac{\Phi}{16L_{f,2}\log 2} < 1$, i.e., $\Phi < 16L_{f,2}\log 2$, then the covering radius $r$ diverges to infinity. Consequently, $\overline{M}\to0$, and Lemma~\ref{lem2} holds unconditionally. Otherwise, if $\Phi > 16L_{f,2}\log 2$, then the covering radius converge to $0$, $\overline{M}\to +\infty$, Lemma \ref{lem2} isn't applicable anymore. Thus the \eqref{betaub} bound is invalid.
\end{proof}
\section{Proof of Corollary \ref{coro1}}\label{pfcoro1}
The proof is straight forward. Recall the linear convergence result again,
\[\min_{k\in[K]}f(\bx_k;\theta)-f^\star(\theta)\leq\rho^{K}(f(\bx_0;\theta)-f^\star(\theta)),\,\forall \bx_0\in\mathcal{X},\,\theta\in\Theta,\]
with the baseline convergence rate $\mathcal{O}\left(\rho^K\right)$ if the initial optimality gap $f(\bx_0;\theta)-f^\star(\theta)$ is considered as a constant. In order to accelerate the convergence speed to $\mathcal{O}((\alpha\rho)^K)$. We need to let $f(\bx_0;\theta)-f^\star(\theta)\sim\alpha^K$. It implies we need to let,
\[\|\pi_\mathcal{D}(\theta)-\theta\|\sim\frac{\alpha^K}{2L_\Theta}.\]
Hence, we need,
\[M_{\alpha}(K)\leq\mathcal{O}\left(\left(R_\Theta L_\Theta\left(\frac{1}{\alpha}\right)^K\right)^{d_\Theta}\right).\]
\section{Proof of Corollary \ref{coro3}}\label{pfcoro3}
Similar to above, according to Lemma \ref{lem2}, the baseline convergence rate is,
\begin{align*}
\min_{k\in[K]}f(\bx_k;\theta)-f^\star(\theta) \;\le\;
\left(\frac{(\beta-2)\Phi^{2/\beta}}{16\beta L_{f,2}}K+(f(\bx_0;\theta)-f^\star(\theta))^{-\frac{\beta-2}{\beta}}\right)^{-\frac{\beta}{\beta-2}},\,\forall \bx_0\in\mathcal{X},\,\theta\in\Theta,
\end{align*}
with rate $\mathcal{O}\left(K^{-\frac{\beta}{\beta-2}}\right)$ if the initial optimality gap $f(\bx_0;\theta)-f^\star(\theta)$ is considered as a constant. If we want it to have a $\alpha$-acceleration, we need to let,
\[\left(f(\bx_0;\theta)-f^\star(\theta)\right)^{-\frac{\beta-2}{\beta}}\sim K^{\frac{1}{\alpha}}.\]
The target is to let the initial optimality gap outperforms the first term $\frac{(\beta-2)\Phi^{2/\beta}}{16\beta L_{f,2}}K\sim K$ if $\beta>2$ is fixed. So, we have to let $f(\bx_0;\theta)-f^\star(\theta)\sim K^{-\frac{\beta}{\alpha(\beta-2)}}$, thus,
\[\|\pi_\mathcal{D}(\theta)-\theta\|\sim\frac{K^{-\frac{\beta}{\alpha(\beta-2)}}}{2L_\Theta}.\]
Hence, we need,
\[
M_\alpha(K)\leq\mathcal{O}\left(\left(R_{\Theta}L_{\Theta}K^\frac{\beta}{\alpha(\beta-2)}\right)^{d_\Theta}\right).
\]
\section{Acceleration when $\beta\to2^+$}\label{pfspeed}
\paragraph{Discussion.} As $\beta \to 2^+$, the polynomial sample complexity in Corollary~\ref{coro3} no longer provides any effective acceleration. This is because the leading term $\frac{(\beta-2)\Phi^{2/\beta}}{16 \beta L_{f,2}}K$ in \eqref{klgua} from Lemma~\ref{lem2} is no longer of order $\mathcal{O}(K)$, but instead vanishes as $\beta \to 2^+$. From Lemma~\ref{lem2}, if $\Phi < 16L_{f,2}\log 2$, the baseline convergence rate of PGD reduces to
\begin{align}\label{pgd2}
\min_{k\in[K]} f(\bx_k;\theta) - f^\star(\theta)
\le
\left(f(\mathbf{x}_0;\theta) - f^\star(\theta)\right)
\exp\!\left(-\frac{\Phi}{16L_{f,2}} K\right),
\end{align}
which exhibits linear convergence. To further improve the rate to $\mathcal{O}\left( \left(\alpha e^{-\frac{\Phi}{16L_{f,2}}}\right)^K\right)$ as in Corollary~\ref{coro1}, the required memory scales as $M_{\alpha}(K)
\leq
\mathcal{O}\!\left(
\left(
R_{\Theta} L_\Theta
\left(
\frac{1}{\alpha}
\right)^K
\right)^{d_\Theta}
\right)$.
\paragraph{Proof of \eqref{pgd2}.} Recall \eqref{klgua} in Lemma \ref{lem2} again, 
\begin{align*}
\min_{k\in[K]}f(\bx_k;\theta)-f^\star(\theta) \;\le\;
\left(\frac{(\beta-2)\Phi^{2/\beta}}{16\beta L_{f,2}}K+(f(\bx_0;\theta)-f^\star(\theta))^{-\frac{\beta-2}{\beta}}\right)^{-\frac{\beta}{\beta-2}},\,\forall \bx_0\in\mathcal{X},\,\theta\in\Theta.
\end{align*}
So,
\begin{align*}&\lim_{\beta\to 2^+}\left(\frac{(\beta-2)\Phi^{2/\beta}}{16\beta L_{f,2}}K+(f(\bx_0;\theta)-f^\star(\theta))^{-\frac{\beta-2}{\beta}}\right)^{-\frac{\beta}{\beta-2}}\\=&\lim_{\beta\to 2^+}\left(\frac{(\beta-2)\Phi^{2/\beta}}{16\beta L_{f,2}}K-\frac{\beta-2}2\log(f(\bx_0;\theta)-f^\star(\theta))+1\right)^{-\frac{\beta}{\beta-2}}\\=&\exp\left(-\frac{\Phi K}{16L_{f,2}}+{\log(f(\bx_0;\theta)-f^\star(\theta))}\right)\\=&(f(\bx_0;\theta)-f^\star(\theta))\exp\left(-\frac{\Phi K}{16L_{f,2}}\right).
\end{align*}
\section{How to derive the convergence guarantee of $\beta$-growth from \cite{li2018calculus}}
The original result in \cite{li2018calculus} is stated for the KL condition with exponent $\theta \in [0,1)$. By applying the first conclusion in Corollary~6 of \cite{bolte2017error} to the augmented convex, lower-semi continuous function $f(\bx) + \mathbf{1}_{\mathcal{X}}(\bx)$, one obtains the corresponding $\beta$-growth property with exponent $\beta = \frac{1}{1-\theta}$. This establishes the claimed convergence order under the $\beta$-growth condition.
\section{Numerical experiment (full version)}\label{fullexperiment}
The matrix $A$ used for  Tikhonov regression (2-decimal places),
\setcounter{MaxMatrixCols}{12}
{\small{\[
A =
\begin{bmatrix}
-0.10 & -0.02 &  0.14 &  0.04 & -0.22 &  0.54 &  0.29 &  0.38 &  0.42 & -0.27 &  0.33 & -0.20 \\
 0.33 & -0.37 &  0.13 &  0.27 & -0.04 &  0.16 & -0.13 & -0.12 &  0.36 &  0.53 & -0.03 &  0.05 \\
-0.25 & -0.22 & -0.17 &  0.35 & -0.14 &  0.01 & -0.04 &  0.20 & -0.06 & -0.13 & -0.05 &  0.54 \\
-0.16 &  0.23 & -0.01 & -0.22 & -0.19 & -0.05 & -0.16 & -0.01 &  0.42 &  0.02 & -0.33 &  0.13 \\
-0.28 &  0.10 &  0.19 &  0.11 & -0.26 &  0.05 & -0.23 & -0.13 & -0.15 &  0.14 &  0.16 & -0.11 \\
-0.14 & -0.22 &  0.18 & -0.02 &  0.17 & -0.05 & -0.06 & -0.22 &  0.13 & -0.22 &  0.03 &  0.00
\end{bmatrix}.
\]}}
\end{document}